\newtheorem{lemma}{Lemma}
\newtheorem{proposition}{Proposition}
\newtheorem{corollary}{Corollary}
\newtheorem{remark}{Remark}
\newtheorem{definition}{Definition}
\newenvironment{proof}
{\par\noindent{\bf Proof.}} 
{\hfill$\scriptstyle\blacksquare$}
\newcommand{\norm}[1]{\left \Vert #1 \right \Vert}
\newcommand{\scalarprod}[2]{\left \langle #1, #2 \right \rangle}
\title{On the Periodic Behavior of Neural Network Training with Batch Normalization and Weight Decay}
\author{Ekaterina Lobacheva$\bf{}^{1}$\thanks{First two authors contributed equally.}\:, Maxim Kodryan$\bf{}^{1}$\footnotemark[1]\:, Nadezhda Chirkova$\bf{}^{1}$\\ 
\bf{Andrey Malinin$\bf{}^{1,2}$, Dmitry Vetrov$\bf{}^{1,3}$}\\
	%${}^1$Samsung-HSE Laboratory, National Research University Higher School of Economics\\
	${}^1$HSE University \quad ${}^2$Yandex \quad ${}^3$AIRI\\
	Moscow, Russia\\
	%{\tt \{elobacheva,nchirkova,mkodryan\}@hse.ru} \\
	{\tt elobacheva@hse.ru, mkodryan@hse.ru, nchirkova@hse.ru} \\
	{\tt am969@yandex-team.ru, dvetrov@hse.ru} 
}
\begin{document}

\maketitle

\begin{abstract}
    Training neural networks with batch normalization and weight decay has become a common practice in recent years. In this work, we show that their combined use may result in a surprising periodic behavior of optimization dynamics: the training process regularly exhibits destabilizations that, however, do not lead to complete divergence but cause a new period of training. We rigorously investigate the mechanism underlying the discovered periodic behavior from both empirical and theoretical points of view and analyze the conditions in which it occurs in practice. We also demonstrate that periodic behavior can be regarded as a generalization of two previously opposing perspectives on training with batch normalization and weight decay, namely the equilibrium presumption and the instability presumption.
\end{abstract}

\section{Introduction}

Normalization approaches, such as batch or layer normalization, have become vital for the successful training of modern deep neural networks~\cite{ioffe2015batch,ba2016layer,ulyanov2016instance,salimans2016weight,wu2018group}. Despite much recent work~\cite{bjorck2018understanding,santurkar2018how,ghorbani2019investigation,yang2018mean}, it is still not completely understood how normalization influences the training process. 
In this work, we investigate the surprising periodic behavior that may occur 
when a neural network is trained 
with a commonly used combination of some kind of normalization, in our case batch normalization (BN)~\cite{ioffe2015batch}, and weight decay regularization (WD). Examples of this behavior are provided in Figure~\ref{fig:cycles_demo_intro}.  

The dynamics of neural network training with BN and WD have been examined extensively in literature due to the non-trivial competing influence of BN and WD on the norm of neural network's weights. More precisely, using BN makes (a part of) neural network's weights \emph{scale-invariant}, i.e., multiplying them by a positive constant does not change the network's output. Although scale invariance allows optimizing on a sphere with a fixed weight norm~\cite{cho2017riemannian}, classic SGD-based approaches are usually preferred over constraint optimization methods in practice due to more straightforward implementation. Making an SGD step in the direction of the loss gradient always increases the norm of scale-invariant parameters, while WD aims at decreasing the weight norm (see illustration in Figure~\ref{fig:si_ill}). In sum, training the neural network with BN and WD results in an interplay between two forces: a ``centripetal force'' of the WD and the ``centrifugal force'' of the loss gradients. Many works notice the positive effect of WD on optimization and generalization caused by the control of the scale-invariant weights norm and the subsequent influence on the \emph{effective learning rate}~\cite{van2017l2,hoffer2018norm,zhang2018three,li2020exponential,li2020reconciling,wan2020spherical,roburin2020spherical}, i.e., the learning rate on a unit sphere in the scale-invariant weights space. However, the general dynamics of the norm of the scale-invariant weights are viewed in the literature from two contradicting points, and this work is devoted to resolving this contradiction. 

\begin{figure}
  \centering
  \centerline{
  \begin{tabular}{cc}
  {\small ConvNet on CIFAR-10} & {\small ResNet-18 on CIFAR-100} \\
  \includegraphics[width=0.5\textwidth]{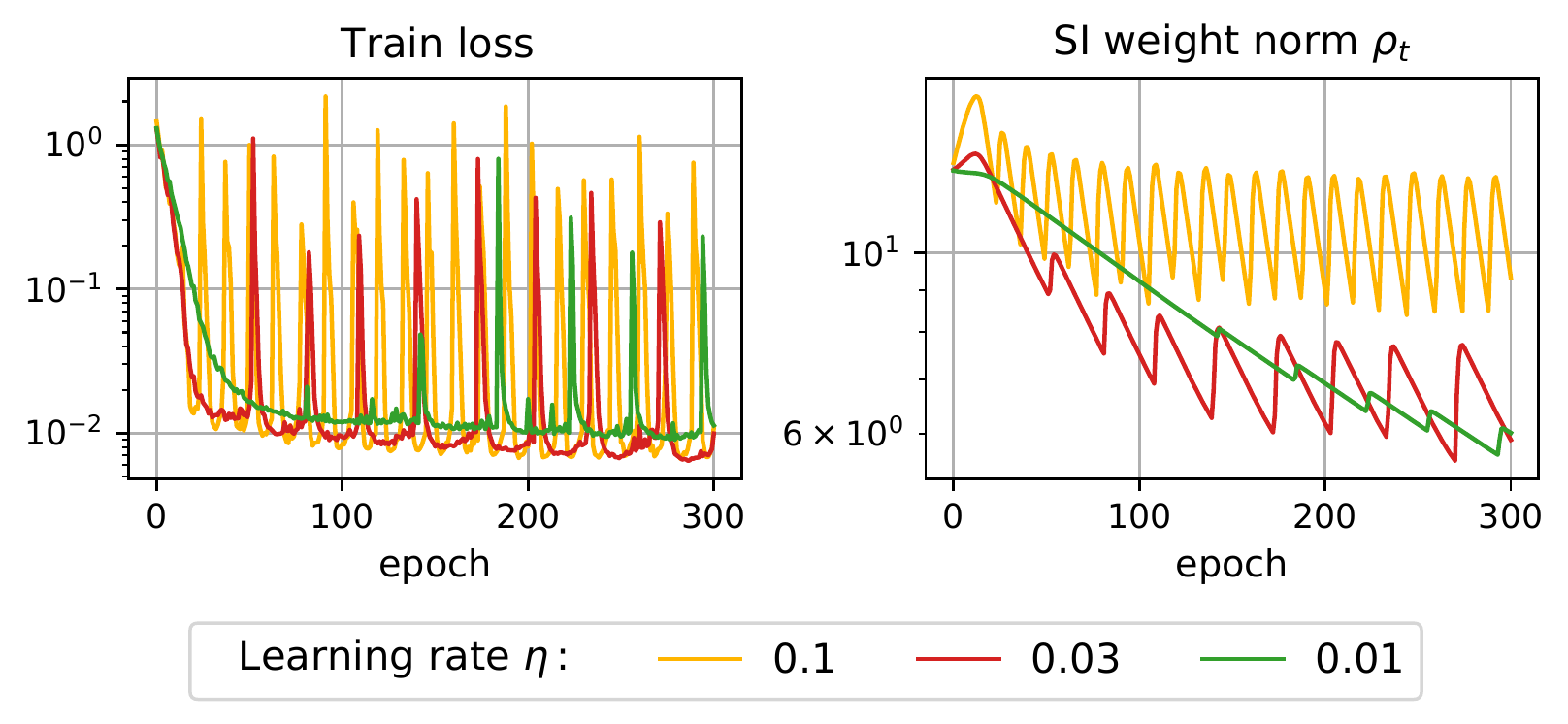} & \includegraphics[width=0.5\textwidth]{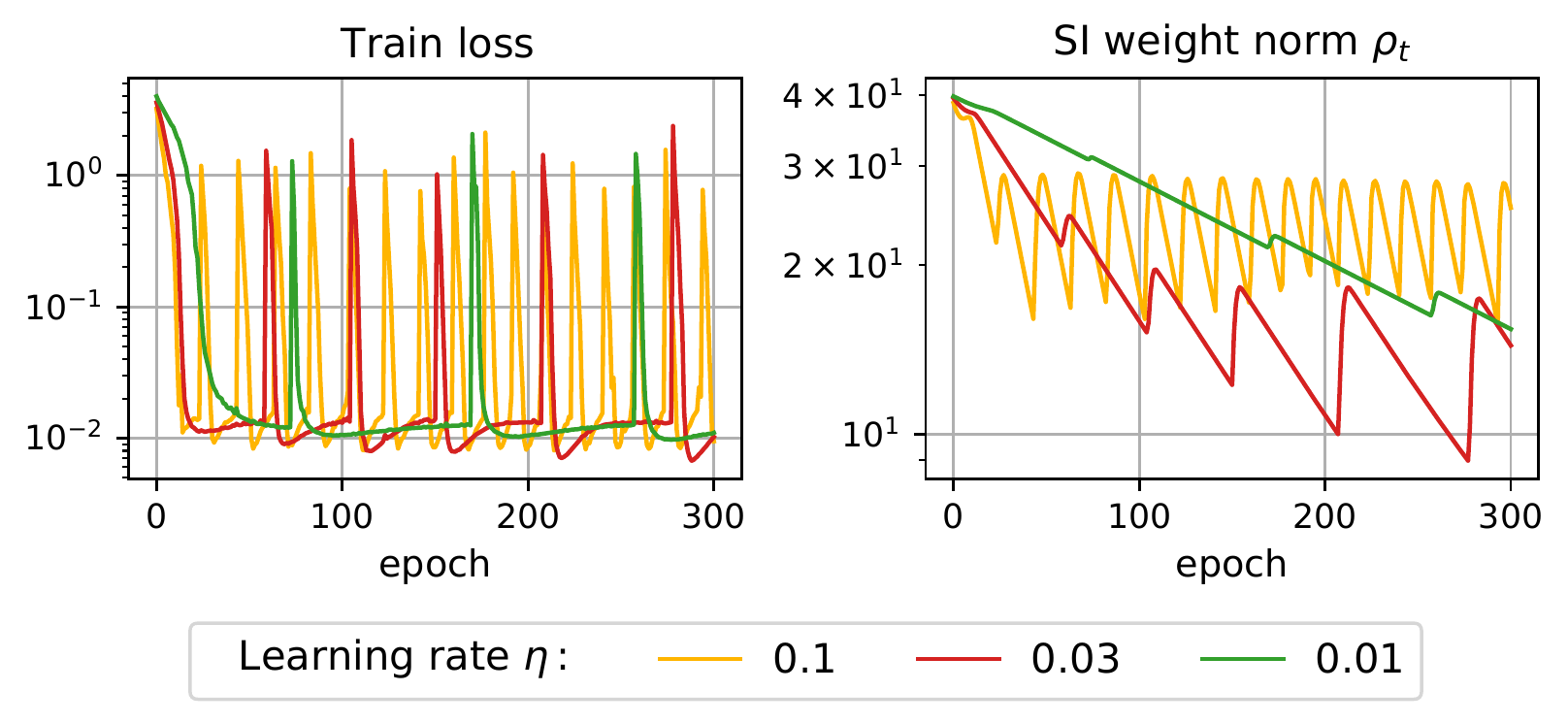}
  \end{tabular}}
  \caption{Periodic behavior of ConvNet on CIFAR-10 and ResNet-18 on CIFAR-100 trained using SGD with weight decay of 0.001 and different learning rates. All weights are trainable, including non-scale-invariant ones.
  }
  \label{fig:cycles_demo_intro}
\end{figure}

On the one hand, \citet{li2020reconciling} claim that learning with SGD, BN, and WD leads to an \emph{equilibrium} state, where the ``centripetal force'' is compensated by the ``centrifugal force'' and eventually the norm of scale-invariant weights (along with other statistics related to the training procedure) will converge to a constant value. Several other works hold a similar equilibrium view~\cite{van2017l2,chiley2019online,wan2020spherical}. On the other hand, a number of works~\cite{li2020understanding,li2020exponential,li2020reconciling} underline that using WD may cause approaching the origin (zero scale-invariant weights), which results in training \emph{instability} due to increasing effective learning rate. Particularly,~\citet{li2020understanding} reveal that approaching the origin in weight-normalized neural networks leads to numerical overflow in gradient updates and subsequent training failure. \citet{li2020exponential} also underline that scale-invariant functions are ill-conditioned near the origin and prove in a simplified setting that loss convergence is impossible if both BN and WD are used (but guaranteed if either of them is disabled). Moreover, despite their equilibrium view, \citet{li2020reconciling} empirically observe that the train loss permanently exhibits oscillations between low and high values when full-batch gradient descent is used.  

In this work, we study the specified contradiction between the \emph{equilibrium} presumption and the \emph{instability} presumption and show that both are true only to some extent. Specifically, we show that the training process converges to a consistent \emph{periodic} behavior, i.e., it regularly exhibits instabilities which, however, do not lead to a complete training failure but cause a new period of training (see Figure~\ref{fig:cycles_demo_intro}). 
Thus, our contributions are as follows.
\begin{itemize}
    \item We discover the periodic behavior of neural network training with BN and WD and reveal its reasons by analyzing the underlying mechanism for fully scale-invariant neural networks trained with standard constant learning rate SGD (Section~\ref{sec:1}) or GD (Appendix~\ref{app:gd}).
    \item We provide a theoretical grounding for our findings by generalizing previous results on the equilibrium condition, analyzing the necessary conditions for destabilization of training, and relating the frequency of destabilization to the choice of hyperparameters (Section~\ref{sec:theory}). 
    \item We conduct a rigorous empirical study of this periodic behavior (Section~\ref{sec:2}) and show its presence in more practical scenarios with momentum, augmentation, and neural networks incorporating trainable non-scale-invariant weights  (Section~\ref{sec:3}), and also with Adam optimizer and other normalization techniques (Appendix~\ref{app:other_setups}).
\end{itemize}
Our source code is available at \url{https://github.com/tipt0p/periodic_behavior_bn_wd}.

\section{Background}
\label{sec:back}
As discussed in the introduction, batch normalization makes (a part of) neural network's weights scale-invariant. In this section, we describe the properties of scale-invariant functions, upon which we build our further reasoning. 
Consider an arbitrary scale-invariant function $f(x)$, i.e., $f(\alpha x) = f(x), \forall x$ and $\forall \alpha > 0$. Then two fundamental properties may be inferred, see Lemma~1.3 in \citet{li2020exponential}: %$\forall x$ and $\forall \alpha > 0$
\begin{subnumcases}{}
    \scalarprod{\nabla f(x)}{x} = 0, ~ \forall x \label{eq:si_prop_perp} \\
    \nabla f(\alpha x) = \frac{1}{\alpha} \nabla f(x), ~ \forall x,\, \alpha > 0.
    \label{eq:si_prop_hom}
\end{subnumcases}

\begin{wrapfigure}{r}{0.25\textwidth}
  \centering
    \includegraphics[width=\linewidth]{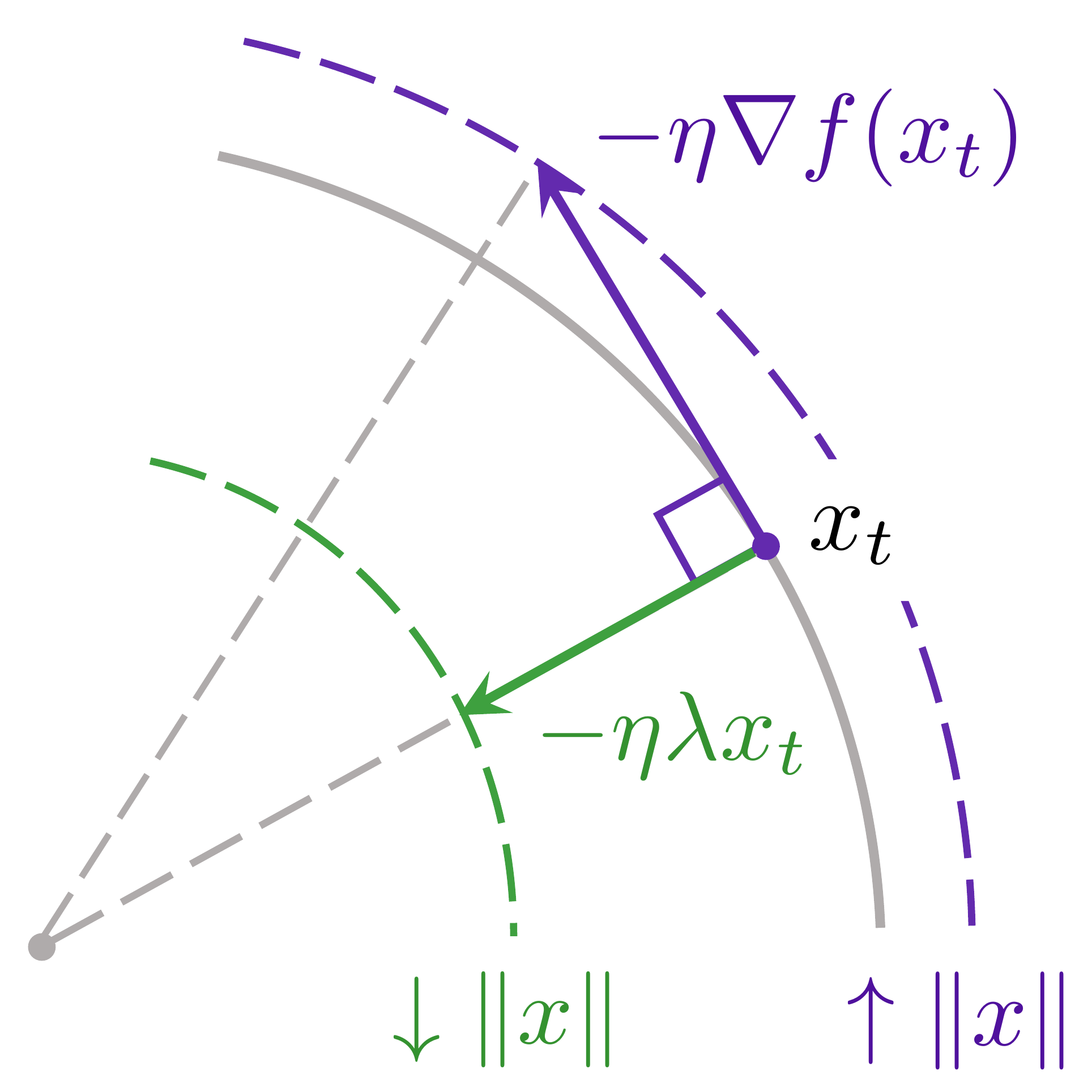}
  \caption{
  An illustration of the ``centripetal force'' of the weight decay and the ``centrifugal force'' of the function gradient in the optimization of scale-invariant functions.
  }
  \label{fig:si_ill}
\end{wrapfigure}

Consider optimizing $f(x)$ w.r.t. $x$ using (S)GD\footnote{Since both stochastic and full-batch gradients of a scale-invariant objective possess properties~\eqref{eq:si_prop_perp} and~\eqref{eq:si_prop_hom}, we do not distinguish between them in our reasoning.} with learning rate $\eta$ and weight decay $\lambda$:
\begin{equation}
    \label{eq:si_dyn}
    x_{t+1} = (1 - \eta \lambda)x_t - \eta \nabla f(x_t).
\end{equation}

The properties above lead to two important corollaries about the dynamics of the optimization process. First, according to property~\eqref{eq:si_prop_perp}, shifting $x$ in the direction of $-\nabla f(x)$, i.e., making a gradient descent step, always increases  $\norm{x}$, while weight decay, on the other hand, decreases $\norm{x}$. See Figure~\ref{fig:si_ill} for the illustration. The interaction  of these ``centripetal'' and ``centrifugal'' forces may cause $\norm{x}$ to change nontrivially during optimization. 
Second, according to property~\eqref{eq:si_prop_hom}, even though function value $f(x)$ is invariant to multiplying $x$ by $\alpha$, the optimization dynamics changes substantially when optimization is performed at different scales of $\norm{x}$. For smaller norms, optimization makes larger steps, which may result in instabilities, while for larger norms, steps are smaller, and optimization process may converge slowly. 

Since scale-invariant $f(x)$ may be seen as a function on a sphere, its optimization dynamics are often analysed on a unit sphere $\norm{x}=1$.
One can obtain equivalent optimization dynamics on the unit sphere as in the initial space by using the notion of \emph{effective gradient} and \emph{effective learning rate}.
The effective gradient is defined as a gradient for a point on a unit sphere and may be obtained by substituting $\alpha=\norm{x}^{-1}$ in \eqref{eq:si_prop_hom}:
$\nabla f(x / \norm{x}) = \nabla f(x) \norm{x}$. The effective learning rate can be defined as $\tilde{\eta} = \eta / \norm{x}^2$~\cite{hoffer2018norm,roburin2020spherical}. 
Change in $\norm{x}$ does not affect the effective gradient by definition 
%: $\nabla f(x / \norm{x}) = \nabla f(\alpha x / \norm{\alpha x})$ 
and is reflected only in the effective learning rate: the lower the norm, the higher the effective learning rate, and the larger the optimization steps.

\section{Methodology and experimental setup}
\label{sec:method}

In order to isolate the effect of the joint use of batch normalization and weight decay and avoid the influence of other factors, we conduct a series of experiments in a simplified setting, when all learnable weights of a neural network are scale-invariant and optimization is performed using SGD with constant learning rate, without momentum or data augmentation. This allows us to better understand the nature of the periodic behavior (Section~\ref{sec:1}) and analyse its empirical properties (Section~\ref{sec:2}).
After that, we return to the setting with a more conventional training of standard neural networks and show that the periodic behavior occurs in this scenario as well (Section~\ref{sec:3}).

We conduct experiments with ResNet-18 and a simple 3-layer batch-normalized convolutional neural network (ConvNet)\footnote{Both architectures in the implementation of  \url{https://github.com/g-benton/hessian-eff-dim}.} on CIFAR-10~\cite{cifar10} and CIFAR-100~\cite{cifar100} datasets.
To make standard networks fully scale-invariant, we rely on the approach of~\citet{li2020exponential}, i.e., we insert additional BN layers and fix the non-scale-invariant weights to be constant. Specifically, we use zero mean and unit variance in batch normalization layers instead of learnable location and scale parameters and freeze the weights of the last layer at random initialization. 
The latter action does not hurt the performance in practice~\cite{hoffer2018fix}. However, we find that even with low train error, the training dynamics with the fixed last layer may still substantially differ from conventional training, as the neural network exhibits low confidence in predictions. To achieve high confidence for all objects and, consequently, low train loss, we increase the norm of the last layer's weights to 10. The influence of this rescaling is shown in Appendix~\ref{app:last_layer}. 

We optimize cross-entropy loss, use the batch size of 128 and train neural networks for 1000 epochs to show the consistency of the discovered periodic behavior. We consider a range of learning rates, $\{10^{-k}, 3 \cdot 10^{-k}\}_{k=0, 1, 2, 3}$ and choose the most representative ones for each visualization, since it is difficult to distinguish many periodic functions on one plot. For fully scale-invariant neural networks, training with a fixed weight decay -- learning rate product converges to similar behavior, regardless of their ratio: we show it empirically and discuss it from the theoretical point of view in Appendix~\ref{app:fixed_lr_wd}; the same was noticed in~\cite{li2020exponential,li2020reconciling}. Thus, in the main text, we provide only the results for the  varied learning rate and the fixed  weight decay of $0.001$. Results for the varied weight decay are presented in Appendix~\ref{app:var_wd}.

At each training epoch, we log standard train~/~test metrics, the norm of scale-invariant weights (SI weight norm), which is in the focus of this research, and metrics characterizing training dynamics on a unit sphere: effective learning rate and the norm of effective gradients (mean over mini-batches).
We plot the two latter metrics over two axes of the \emph{phase diagram} to visualize their simultaneous dynamics that
will help us to understand the mechanism underlying the periodic behavior.

\begin{figure}
  \centering
  \includegraphics[width=\textwidth]{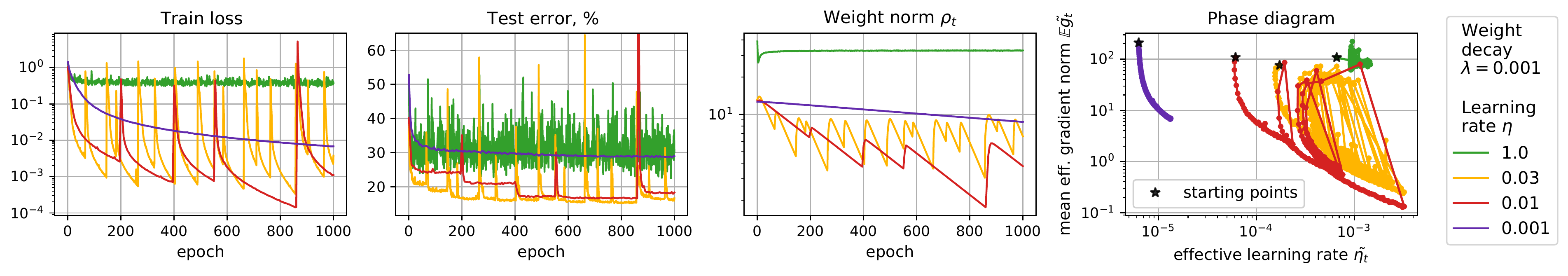}
  \caption{
  Periodic behavior of scale-invariant ConvNet on CIFAR-10.
  }
  \label{fig:cycles_demo}
\end{figure}

\section{Periodic behavior and its underlying mechanism}
\label{sec:1}
\begin{figure}
  \centering
  \includegraphics[width=\textwidth]{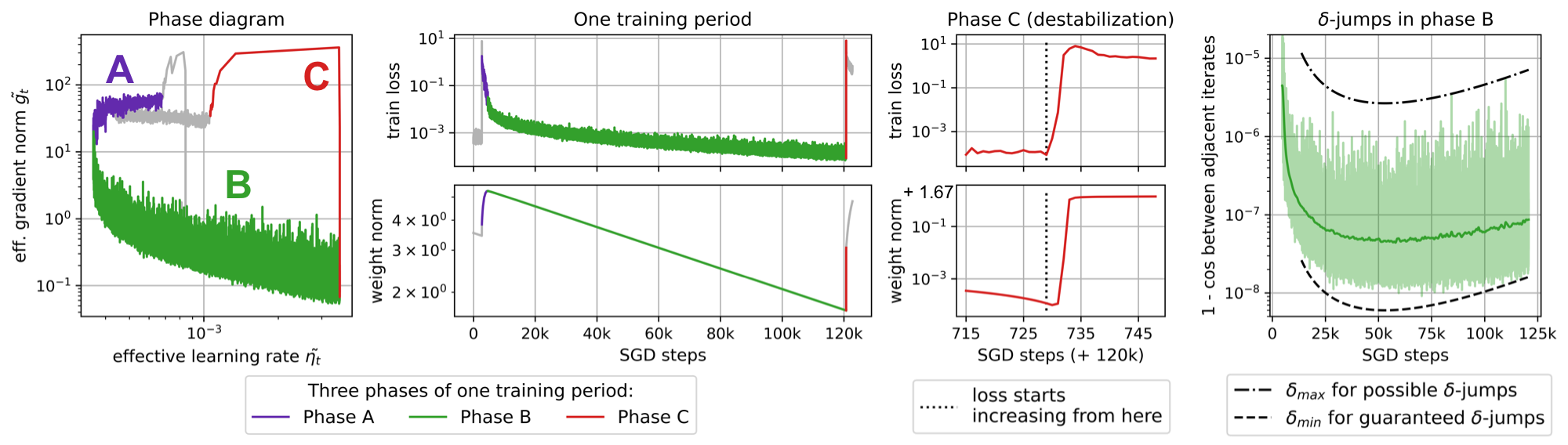} 
  \caption{
  A closer look at one training period for scale-invariant ConvNet on CIFAR-10 trained using SGD with weight decay of 0.001 and the learning rate of 0.01. Three phases of the training period are highlighted. The train loss and the effective gradient norm computed over a mini-batch are logged after each SGD step (one epoch consists of 391 SGD steps). The rightmost plot compares empirically observed cosine distance between weights at adjacent SGD steps with theoretically derived bounds in Section~\ref{sec:deltajumps}. Cosine distance is presented along with the smoothed trend.  
  }
  \label{fig:one_cycle}
\end{figure}

As discussed in the previous section, we begin our study by considering a simplified setting with a fully scale-invariant neural network trained with standard SGD.
Figure~\ref{fig:cycles_demo} shows the presence of the periodic behavior for a scale-invariant ConvNet on the CIFAR-10 dataset for a range of learning rates. In Appendix~\ref{app:fully_invariant}, we show the presence of the periodic behavior for other dataset-architecture pairs. %Moreover, the same periodic behavior is also present for full-batch gradient descent, see Appendix~\ref{app:gd}. 
The same periodic behavior is also present for neural network training with full-batch gradient descent, see Appendix~\ref{app:gd}. Moreover, this behavior can be observed even when optimizing common scale-invariant functions using the gradient descent method with weight decay (see Appendix~\ref{app:simplefunc}).

The observed periodic behavior occurs because of the interaction between batch normalization and weight decay, particularly because of their competing influence on the weight norm. As discussed in Section~\ref{sec:back}, weight decay aims at decreasing the weight norm, while loss gradients aim at increasing the weight norm due to scale invariance caused by batch normalization (see Figure~\ref{fig:si_ill}). These two forces alternately outweigh each other for quite long periods of training, resulting in periodic behavior. 

Let us examine a single period in greater detail by analyzing Figure~\ref{fig:one_cycle} that shows the dynamics of relevant training metrics logged after each SGD step of ConvNet training. At the beginning of the period, the train loss is high, and the large gradients of the loss outweigh weight decay. This results in increasing weight norm and decreasing effective learning rate, i.e., we move along phase $A$ of the phase diagram. SGD continues optimizing train loss, and at some point, train loss and its gradients become small and outweighed by weight decay. As a result, the weight norm starts decreasing, and the effective learning rate increases, i.e., we move along phase $B$ of the phase diagram. We note that the transition between phases $A$ and $B$ correlates with achieving near-zero train error.
When the weight norm becomes too small, and the effective learning rate becomes too high, SGD makes several large steps and leaves the low loss region. Gradients grow along with train loss and, multiplied by a high effective learning rate, lead to the fast growth of the weight norm, i.e., we move along phase $C$ of the phase diagram. The detailed plot of phase $C$ in Figure~\ref{fig:one_cycle} confirms that train loss starts increasing earlier than the weight norm. When the weight norm becomes large, the effective learning rate becomes low and stops the process of divergence. After that, a new period of training begins.

\begin{figure}
  \centering
  \centerline{
  \begin{tabular}{cc}
  {\small Fix weight norm at initialization} & {\small Fix weight norm before destabilization}  \\
  \includegraphics[width=0.5\textwidth]{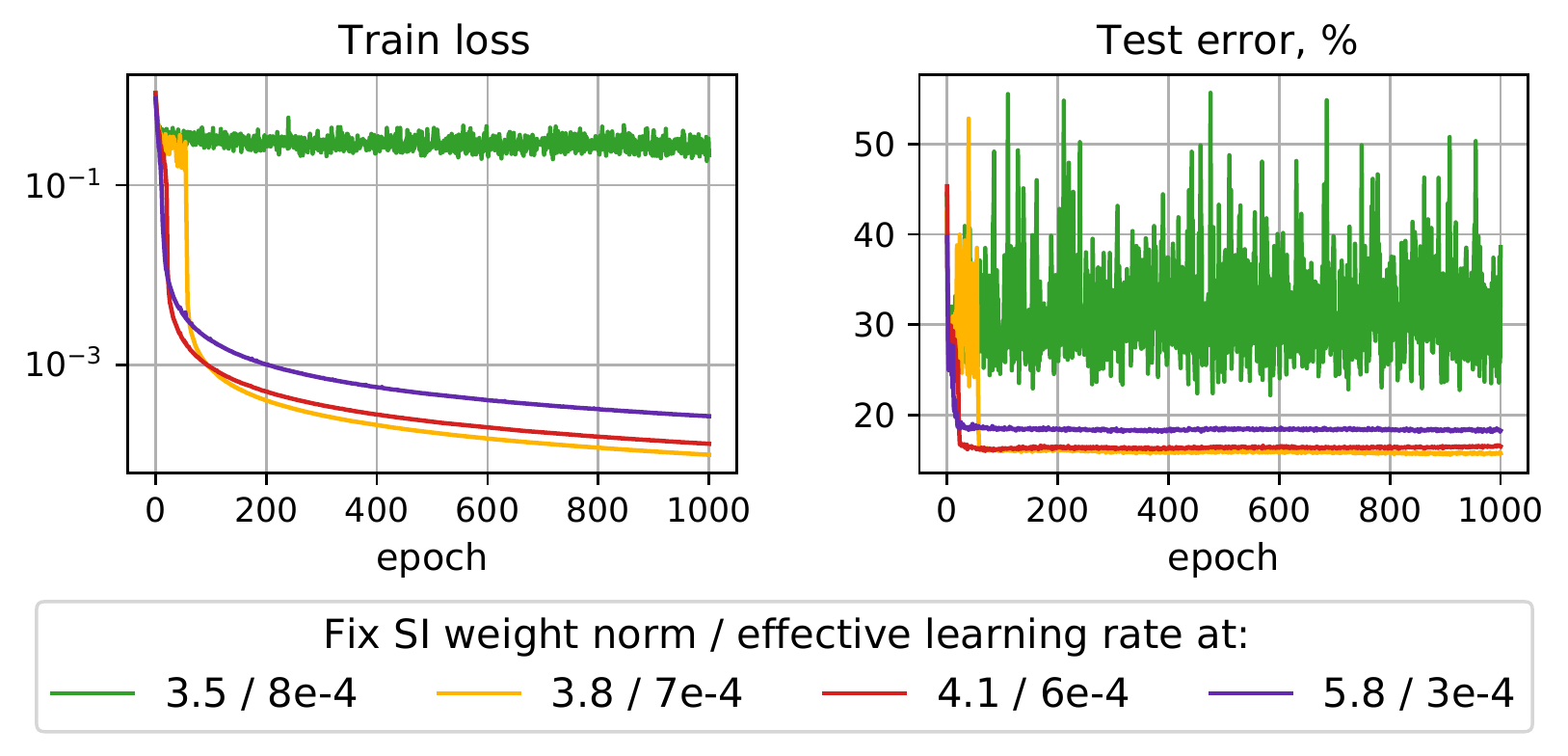} & \includegraphics[width=0.5\textwidth]{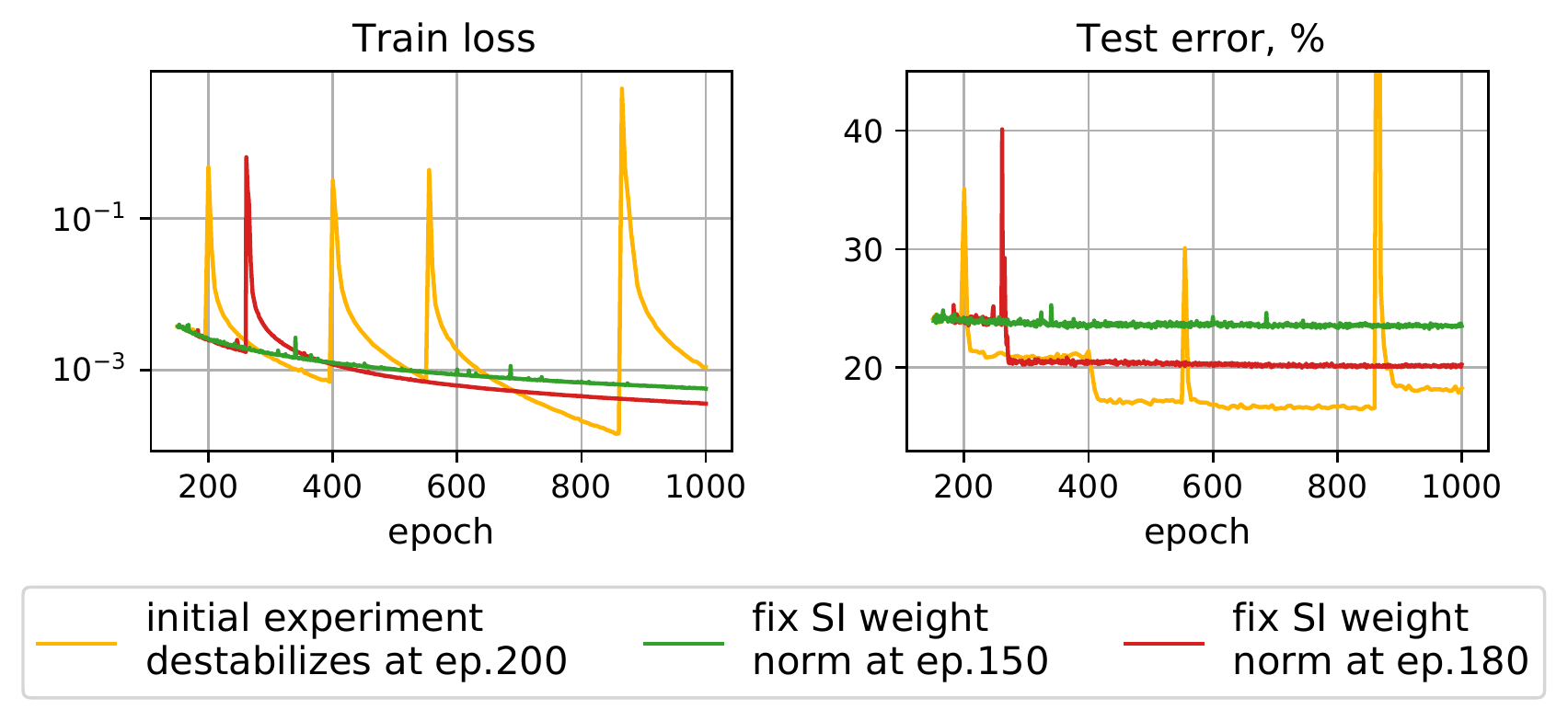}
  \end{tabular}}
  \caption{
  The absence of the periodic behavior for training with the fixed weight norm. Scale-invariant ConvNet on CIFAR-10 trained using SGD with weight decay of 0.001 and learning rate of 0.01. Left pair: the weight norm is fixed at random initialization of different scales. Right pair: the weight norm is fixed at some epoch of regular training before destabilization.
  }
  \label{fig:fix_elr_sgd}
\end{figure}

We also conducted an ablation experiment to show that the discovered periodic behavior is indeed a result of the competing influence of BN and WD on the weight norm.
To do so, we prohibit this influence and train the network on a sphere by fixing the weight norm and rescaling the weights after each SGD step. We firstly fix the weight norm at random initialization, considering different values of the initialization weight norm and hence different (fixed) effective learning rates. Figure~\ref{fig:fix_elr_sgd} (left pair) shows that in this case, there is no periodic behavior, and the train loss either converges (for relatively low effective learning rates) or gets stuck at high values (for high effective learning rates). We repeat this ablation fixing the weight norm at some epoch preceding destabilization in the experiment where we observe the periodic behavior. Specifically, as an initial experiment, we use the one with the learning rate of 0.01 from Figure~\ref{fig:cycles_demo} and fix the weight norm at the 150-th and 180-th epochs, preceding the destabilization at epoch 200. Figure~\ref{fig:fix_elr_sgd} (right pair) shows the absence of the periodic behavior in both cases. When we fix the weight norm closer to the destabilization at the 180-th epoch, we observe a single increase in train loss, as the training process has already become unstable. However, after converging from this increase, train loss never destabilizes again. 

\section{Theoretical grounding for periodic behavior}
\label{sec:theory}
In this section, we theoretically investigate the reasons for the training destabilization between phases $B$ and $C$, and after that, we generalize the overall training process equilibrium condition of~\citet{li2020reconciling} taking into account the discovered periodic behavior. 
To do so, we study the optimization dynamics of an arbitrary scale-invariant function $f(x)$ trained using (S)GD with learning rate $\eta$ and weight decay of strength $\lambda$~\eqref{eq:si_dyn}.
Hereinafter, we will assume that the $\eta \lambda$ product is small, i.e., we  can suppress $\mathcal{O}\left((\eta \lambda)^2\right)$ terms. We also refer to Appendix~\ref{app:theory} for the proofs, derivations, and further discussion on our theoretical results.

We recall that (stochastic) gradients of an arbitrary scale-invariant function $f(x)$ possess two fundamental properties~\eqref{eq:si_prop_perp} and~\eqref{eq:si_prop_hom}.
Based on these properties, we obtain the dynamics of the parameters norm induced by Eq.~\eqref{eq:si_dyn} which we also leverage in our analysis (derivation of this and other equations is deferred to Appendix~\ref{app:derivations}):
\begin{equation}
    \label{eq:si_pnorm_dyn}
    \rho_{t+1}^2 = (1 - \eta \lambda)^2 \rho_t^2 + \eta^2 \tilde{g}_t^2 / \rho_t^2,
\end{equation}
where  $\rho_t = \norm{x_t}$ denotes the parameters norm, $g_t = \norm{\nabla f(x_t)}$~--- the gradient norm, $\tilde{g}_t = \norm{\nabla f(x_t / \norm{x_t})} = \rho_t g_t$~--- the effective gradient norm.
In this work, we also use the notion of effective learning rate which is formally defined as $\tilde{\eta}_t = \eta / \rho_t^2$.

\subsection{The notion of \boldmath$\delta$-jumps}
\label{sec:deltajumps}

As scale-invariant functions are essentially defined on a sphere, cosine distance is a natural choice for a metric in parameter space. The following notion defines a situation when adjacent iterates become distant from each other, indicating training destabilization.

\begin{definition}
    We say that dynamics~\eqref{eq:si_dyn} performed a $\boldsymbol{\delta}$\textbf{-jump} once the cosine distance between adjacent iterates exceeds some value $\delta > 0$: 
    \[1 - \cos(x_t, x_{t+1}) > \delta.\]
\end{definition}

We conjecture that \emph{the necessary condition for the training dynamics' destabilization is performing $\delta$-jumps with sensible values of $\delta$}. 
Otherwise, as long as adjacent iterates remain too close, the model (and hence its training dynamics) cannot change significantly. 
This holds strictly, for instance, if the Lipschitz constant of $f$ is bounded (at least locally), which can be relevant for neural networks with BN~\cite{santurkar2018how}. 
But even if $f$ has unstable regions on a unit sphere with very high or even unbounded Lipschitz constant, our analysis is still relevant since making larger steps in such regions would lead to a higher chance of divergence. 
Further, we show that the closer we approach the origin, the larger effective steps (steps on a unit sphere) we start making, thereby paving the way for destabilization.

Now, our question is, \emph{given the value $\delta$, what are the conditions for a $\delta$-jump to occur?} By assuming that effective gradients are bounded, i.e., we can set two values $0 \le \ell \le L < +\infty$ such that $\tilde{g}_t \in \left[\ell, L\right]$, we answer this question in the following proposition. Proof can be seen in Appendix~\ref{app:jump_cond}.

\begin{proposition}
\label{prop:jump_cond}
If $f(x)$ is a scale-invariant function optimized according to dynamics~\eqref{eq:si_dyn} with bounded effective gradients $0 \le \ell \le \tilde{g}_t \le L < +\infty$, then for sufficiently small $\delta$ and assuming  $(1 - \eta \lambda) \lessapprox 1$, the following approximate conditions on $\delta$-jump hold:
\begin{subnumcases}{}
    \rho_t^2 \lessapprox \frac{\eta L}{\sqrt{2 \delta}} \implies \text{$\delta$-jump is possible}, \label{eq:jump_nec} \\
    \rho_t^2 \lessapprox \frac{\eta \ell}{\sqrt{2 \delta}}  \implies \text{$\delta$-jump is guaranteed}. \label{eq:jump_suf}
\end{subnumcases}
\end{proposition}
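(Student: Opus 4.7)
The plan is to compute $\cos(x_t, x_{t+1})$ explicitly using the scale-invariance property~\eqref{eq:si_prop_perp} and the norm dynamics~\eqref{eq:si_pnorm_dyn}, then invert the resulting expression to express the $\delta$-jump condition as a bound on $\rho_t^2$ in terms of $\tilde{g}_t$. First I would observe that by property~\eqref{eq:si_prop_perp}, $\scalarprod{x_t}{\nabla f(x_t)} = 0$, so multiplying~\eqref{eq:si_dyn} by $x_t$ gives $\scalarprod{x_t}{x_{t+1}} = (1 - \eta\lambda)\rho_t^2$. Combining with $\rho_{t+1}^2 = (1-\eta\lambda)^2\rho_t^2 + \eta^2 \tilde{g}_t^2/\rho_t^2$ from~\eqref{eq:si_pnorm_dyn}, I get the clean formula
\[
\cos(x_t, x_{t+1}) = \frac{(1-\eta\lambda)\rho_t}{\rho_{t+1}}, \qquad \cos^2(x_t, x_{t+1}) = \frac{(1-\eta\lambda)^2 \rho_t^4}{(1-\eta\lambda)^2 \rho_t^4 + \eta^2 \tilde{g}_t^2}.
\]

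Next I would translate the $\delta$-jump inequality $1 - \cos(x_t, x_{t+1}) > \delta$ into a condition on $\rho_t^2$. For small $\delta$, writing $\cos \approx 1 - \delta$ gives $1 - \cos^2 \approx 2\delta$, so the $\delta$-jump condition becomes (after rearranging the displayed identity)
\[
\frac{\eta^2 \tilde{g}_t^2}{(1-\eta\lambda)^2 \rho_t^4 + \eta^2 \tilde{g}_t^2} \gtrapprox 2\delta.
\]
Dropping the $\eta^2 \tilde{g}_t^2$ term in the denominator (it is negligible exactly in the regime where the cosine is close to $1$) and using $(1-\eta\lambda) \lessapprox 1$, this simplifies to $\rho_t^4 \lessapprox \eta^2 \tilde{g}_t^2/(2\delta)$, i.e. $\rho_t^2 \lessapprox \eta \tilde{g}_t/\sqrt{2\delta}$.

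To finish, I would use monotonicity in $\tilde{g}_t$: the right-hand side $\eta \tilde{g}_t/\sqrt{2\delta}$ is increasing in $\tilde{g}_t$, so the condition can hold for \emph{some} admissible gradient magnitude iff it holds for the largest one, and must hold for \emph{every} admissible gradient magnitude iff it holds for the smallest. Substituting $\tilde{g}_t = L$ yields the ``possible'' bound~\eqref{eq:jump_nec}, and substituting $\tilde{g}_t = \ell$ yields the ``guaranteed'' bound~\eqref{eq:jump_suf}.

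The main obstacle is really just bookkeeping of the approximations: one must verify that dropping $\eta^2 \tilde{g}_t^2$ from the denominator and replacing $(1-\eta\lambda)^2$ by $1$ are consistent with the stated assumptions (small $\delta$, $\eta\lambda \ll 1$, and suppression of $\mathcal{O}((\eta\lambda)^2)$ terms), which is why the proposition's conclusions use $\lessapprox$ rather than strict inequalities. No deeper idea is needed beyond the identity for $\cos(x_t, x_{t+1})$ that falls out of scale-invariance.
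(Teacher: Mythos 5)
Your proposal is correct and follows essentially the same route as the paper's proof: derive $\cos(x_t,x_{t+1}) = (1-\eta\lambda)\rho_t/\rho_{t+1}$ from the orthogonality property~\eqref{eq:si_prop_perp} and the norm recursion~\eqref{eq:si_pnorm_dyn}, invert the small-$\delta$ jump condition (dropping $\mathcal{O}(\delta^2)$ terms) to obtain $\rho_t^2 \lessapprox \eta\tilde{g}_t/\sqrt{2\delta}$, and then substitute the bounds $L$ and $\ell$ for the ``possible'' and ``guaranteed'' cases. Your bookkeeping via $1-\cos^2\approx 2\delta$ with the denominator term dropped is algebraically equivalent to the paper's expansion of $(1-\delta)^{-2}$, so there is no substantive difference.
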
  

\begin{remark}
    \label{rem:grad_bounds}
    Our results hold for any values $\ell$, $L$ bounding the effective gradient norm, but the tighter these bounds are, the more precisely our theory describes the properties of the actual dynamics, thus we generally assume that $\ell$ and $L$ are taken as local bounds on $\tilde{g}_t$ valid for several current iterations.
\end{remark} 

To connect our theoretical results with practice, we examine the behavior of effective steps length of a scale-invariant neural network, compare it with theoretical bounds and observe gradually increasing destabilization of training dynamics. The rightmost plot of Figure~\ref{fig:one_cycle} visualizes the cosine distance $1 - \cos(x_t, x_{t+1})$ between neural network's weights $x_t$ and $x_{t+1}$ at adjacent SGD steps for phase $B$ of the training period. The dashed lines denote the theoretical upper and lower bounds on the cosine distance corresponding to the maximal and minimal $\delta$-jumps derived from Eq.~\eqref{eq:jump_nec} and~\eqref{eq:jump_suf}, respectively: $\delta_{\max} = \frac{\eta^2 L^2}{2 \rho_t^4}$, $\delta_{\min} = \frac{\eta^2 \ell^2}{2 \rho_t^4}$. To obtain those, we calculated the network's parameters norm $\rho_t$ at each iteration and chose $\ell$ and $L$ as smooth functions locally bounding the effective gradient norm in phase $B$ (see Appendix~\ref{app:grad_bounds} for details). We can see that both bounds, along with the measured cosine distance, start growing in the second half of the phase. This indicates that the performed $\delta$-jumps are gradually increasing, hence instability accumulates until the training diverges. We note that such a long-lasting increase in cosine distance is common but, in general, not obligatory in the case of training with SGD because SGD may exhibit destabilization even with small $\delta$-jumps due to stochasticity. For full-batch GD, this effect is even more prominent, see Appendix~\ref{app:gd}.

Next, we formulate a proposition about how the initial parameters norm value $\rho_0$ and hyperparameters $\eta$ and $\lambda$ affect the time of $\delta$-jumps occurrence and hence the frequency of the periods since training dynamics destabilization is closely connected with $\delta$-jumps. 
Proof can be found in Appendix~\ref{app:jump_time}.
Note that $\rho_0$ should be interpreted as the norm at some initial moment $t = 0$ of a given period, when the conditions of the proposition are met, (typically, at the beginning of phase B) rather than the norm after initialization, i.e., at the very first iteration of training. 

\vspace{1cm}
\begin{proposition}
\label{prop:jump_time}
Denote $\kappa = \sqrt{\frac{\eta}{2 \lambda}}$.
Under the assumptions of Proposition~\ref{prop:jump_cond}:
\begin{enumerate}
    \item if $\rho_0^2 > \kappa \ell$ and $\delta < \eta \lambda \frac{L^2}{\ell^2}$, then the \textbf{minimal} time required for the $\delta$-jump to occur:
    \begin{equation}
        \label{eq:t_min}
        t_{\min} = \max \left\{0, \frac{\log\left(\rho_0^2 - \kappa \ell\right) - \log\left(\frac{\eta L}{\sqrt{2 \delta}} - \kappa \ell \right)}{-\log(1 - 4\eta \lambda)} \right\};
    \end{equation}
    \item if $\rho_0^2 > \kappa L$ and $\delta < \eta \lambda \frac{\ell^2}{L^2}$, then the \textbf{maximal} time required for the $\delta$-jump to occur:
    \begin{equation}
        \label{eq:t_max}
        t_{\max} = \max \left\{0, \frac{\log\left(\rho_0^2 - \kappa L\right) - \log\left(\frac{\eta \ell}{\sqrt{2 \delta}} - \kappa L \right)}{-\log(1 - 2 \eta \lambda)} \right\}.
    \end{equation}
\end{enumerate}
\end{proposition}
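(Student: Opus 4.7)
The plan is to sandwich the parameters norm sequence $u_t := \rho_t^2$ between two explicit reference sequences corresponding to the extreme values of $\tilde g_t$, then analyze each reference sequence by shifting to its own fixed point.

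From Eq.~\eqref{eq:si_pnorm_dyn} the dynamics read $u_{t+1} = A u_t + \eta^2 \tilde g_t^2 / u_t$ with $A = (1-\eta\lambda)^2$. Since $\tilde g_t \in [\ell, L]$, replacing $\tilde g_t$ by $\ell$ yields a lower bound on $u_{t+1}$ and replacing it by $L$ yields an upper bound. I would thus introduce $y_{t+1} = A y_t + \eta^2 \ell^2 / y_t$ and $Y_{t+1} = A Y_t + \eta^2 L^2 / Y_t$ with $y_0 = Y_0 = \rho_0^2$, whose fixed points are exactly $u^{\star} = \kappa\ell$ and $U^{\star} = \kappa L$. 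A short induction---using that the map $u \mapsto A u + c/u$ is increasing on $[\sqrt{c/A}, \infty)$, that both fixed points lie in this increasing regime for small $\eta\lambda$, and that each reference sequence monotonically decreases from its initial value toward its fixed point (by the hypotheses $\rho_0^2 > \kappa\ell$ and $\rho_0^2 > \kappa L$)---yields $y_t \le u_t \le Y_t$ with $y_t \ge u^{\star}$ and $Y_t \ge U^{\star}$ for all $t$.

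Next, I would shift to the fixed points via $z_t = y_t - u^{\star} \ge 0$ and $Z_t = Y_t - U^{\star} \ge 0$. The fixed-point identity $(u^{\star})^2 (1-A) = \eta^2 \ell^2$ (and its analogue for $U^{\star}$) turns the reference recursions into
\[
z_{t+1} = z_t \Bigl[ A - \frac{(1-A) u^{\star}}{u^{\star} + z_t} \Bigr], \qquad Z_{t+1} = Z_t \Bigl[ A - \frac{(1-A) U^{\star}}{U^{\star} + Z_t} \Bigr].
\]
Because $z_t, Z_t \ge 0$, the ratios $u^{\star}/(u^{\star}+z_t)$ and $U^{\star}/(U^{\star}+Z_t)$ lie in $(0,1]$, so the bracketed factors are bounded below by $2A - 1$ and above by $A$. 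Dropping $\mathcal{O}((\eta\lambda)^2)$ terms, these read $1 - 4\eta\lambda$ and $1 - 2\eta\lambda$ respectively, giving the geometric estimates $z_t \ge (1-4\eta\lambda)^t (\rho_0^2 - \kappa\ell)$ and $Z_t \le (1-2\eta\lambda)^t (\rho_0^2 - \kappa L)$.

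Finally, I would plug in the $\delta$-jump thresholds of Proposition~\ref{prop:jump_cond}. Since $u_t \ge u^{\star} + z_t$, the necessary condition $u_t \le \eta L / \sqrt{2\delta}$ forces $(1-4\eta\lambda)^t (\rho_0^2 - \kappa\ell) \le \eta L / \sqrt{2\delta} - \kappa\ell$; solving for $t$, and taking $\max\{0, \cdot\}$ for the case when $\rho_0^2$ is already past the threshold, yields $t_{\min}$. Symmetrically, $u_t \le U^{\star} + Z_t$ implies that once $(1-2\eta\lambda)^t (\rho_0^2 - \kappa L) \le \eta \ell / \sqrt{2\delta} - \kappa L$, the sufficient condition for a guaranteed jump is met, giving $t_{\max}$. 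The stipulations $\delta < \eta\lambda L^2/\ell^2$ and $\delta < \eta\lambda \ell^2/L^2$ are exactly what keep the corresponding right-hand sides positive so the logarithms are well defined. The main obstacle I anticipate is the sandwiching step: monotonicity of $u \mapsto A u + c/u$ only holds above $\sqrt{c/A}$, so one must verify that the reference trajectories never leave this regime; once that is set up, everything afterward is routine algebra in the $\eta\lambda \to 0$ limit.
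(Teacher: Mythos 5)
Your proposal is correct and follows essentially the same route as the paper: the paper likewise sandwiches $\rho_t^2$ between the two extremal deterministic recursions with fixed points $\kappa\ell$ and $\kappa L$ (its ``$\beta$-determined/undetermined sequence'' machinery), extracts the linear rates $1-4\eta\lambda$ and $1-2\eta\lambda$, and solves for the times at which these bounding sequences cross the thresholds $\frac{\eta L}{\sqrt{2\delta}}$ and $\frac{\eta \ell}{\sqrt{2\delta}}$ from Proposition~\ref{prop:jump_cond}, with the same role for the conditions $\delta < \eta\lambda L^2/\ell^2$ and $\delta < \eta\lambda \ell^2/L^2$. The only difference is cosmetic: you obtain the geometric rates by factoring the fixed-point-shifted recursion, whereas the paper bounds the derivative of the normalized map via the mean value theorem, and both treatments handle the validity window of the upper sandwich (only needed while $\rho_t^2 \ge \kappa L$) in the same way.
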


\begin{corollary}
\label{cor:jump_time}
    Since both $t_{\max}$ and $t_{\min}$ are inversely proportional to $\eta \lambda$ as $-\log(1 - \varepsilon) \approx \varepsilon$ for small $\varepsilon$,
    $\delta$-jumps (and hence periods) must occur more often for larger values of $\eta \lambda$.
\end{corollary}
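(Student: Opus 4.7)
The plan is to derive the corollary as a direct first-order consequence of Proposition~\ref{prop:jump_time}. Under the standing assumption of small $\eta \lambda$ (already used throughout Section~\ref{sec:theory}), I would apply the Taylor expansion $-\log(1 - \varepsilon) = \varepsilon + \tfrac{1}{2}\varepsilon^2 + O(\varepsilon^3)$ to both denominators appearing in~\eqref{eq:t_min} and~\eqref{eq:t_max}, yielding $-\log(1 - 4 \eta \lambda) = 4 \eta \lambda + O((\eta \lambda)^2)$ and $-\log(1 - 2 \eta \lambda) = 2 \eta \lambda + O((\eta \lambda)^2)$. Substituting these into the two expressions for $t_{\min}$ and $t_{\max}$ gives
\begin{equation*}
    t_{\min} \approx \frac{\log(\rho_0^2 - \kappa \ell) - \log\!\left(\frac{\eta L}{\sqrt{2 \delta}} - \kappa \ell\right)}{4 \eta \lambda},
    \qquad
    t_{\max} \approx \frac{\log(\rho_0^2 - \kappa L) - \log\!\left(\frac{\eta \ell}{\sqrt{2 \delta}} - \kappa L\right)}{2 \eta \lambda},
\end{equation*}
so that, to leading order in $\eta \lambda$, both quantities scale as $(\eta \lambda)^{-1}$.

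From there the conclusion is immediate: the waiting time before a $\delta$-jump at the start of a period is sandwiched between $t_{\min}$ and $t_{\max}$, which by the previous display both behave as $1/(\eta \lambda)$ times a quantity that varies only logarithmically in the parameters. Hence the duration of one period inherits this $1/(\eta \lambda)$ scaling, and the frequency of $\delta$-jumps (and thus of periods) scales like $\eta \lambda$. In particular, enlarging either $\eta$ or $\lambda$ in a way that increases their product forces periods to occur more often, which is the statement of the corollary.

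The only subtlety I anticipate is that the numerators are not literally constant in $\eta$ and $\lambda$: they depend on these parameters through $\kappa = \sqrt{\eta/(2\lambda)}$ and through the terms $\eta L/\sqrt{2\delta}$ and $\eta \ell / \sqrt{2\delta}$. I would address this by noting that this dependence enters only inside logarithms of positive quantities, and that the positivity hypotheses $\rho_0^2 > \kappa \ell$, $\rho_0^2 > \kappa L$, $\delta < \eta \lambda L^2/\ell^2$, and $\delta < \eta \lambda \ell^2/L^2$ in Proposition~\ref{prop:jump_time} keep the arguments of those logarithms bounded away from $0$ and $\infty$. Therefore the numerators remain $O(1)$ relative to the $(\eta \lambda)^{-1}$ factor in the denominator, and the inverse-proportionality claim is not spoiled. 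This is the only step that requires care; the rest is just the linearization of $-\log(1 - \cdot)$, so I do not foresee any genuine obstacle.
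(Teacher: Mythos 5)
Your proposal is correct and follows essentially the same route as the paper, which justifies the corollary by the very same linearization $-\log(1-\varepsilon)\approx\varepsilon$ applied to the denominators of \eqref{eq:t_min} and \eqref{eq:t_max}, giving the $1/(\eta\lambda)$ scaling of $t_{\min}$ and $t_{\max}$. Your extra remark that the numerators depend on $\eta$ and $\lambda$ only through logarithms of quantities kept positive by the hypotheses of Proposition~\ref{prop:jump_time} is a careful addition the paper leaves implicit, but it does not change the argument.
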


\subsection{Generalization of the equilibrium condition}

We now generalize the equilibrium condition of~\citet{li2020reconciling} and characterize the behavior of the parameters norm globally in the following proposition.
The proof is provided in Appendix~\ref{app:norm_eq}.

\begin{proposition}
\label{prop:norm_eq}
Denote $\kappa = \sqrt{\frac{\eta}{2 \lambda}}$.
Under the assumptions of Proposition~\ref{prop:jump_cond}, if $2 \eta \lambda L \le \ell$, then 
\begin{equation}
    \label{eq:norm_eq}
    \kappa \ell \le \rho_t^2 \le \kappa L,\, t \gg 1.
\end{equation}
Furthermore, if $\rho_0^2 > \kappa L$, then $\rho_t^2$ converges linearly to $[\kappa \ell, \kappa L]$ interval in $\mathcal{O}\left(1 / \eta \lambda\right)$ time.
\end{proposition}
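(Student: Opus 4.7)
The proof is driven by the parameters-norm recursion~\eqref{eq:si_pnorm_dyn} together with the algebraic identity $2\eta\lambda\,\kappa = \eta^2/\kappa$, which is immediate from $\kappa^2 = \eta/(2\lambda)$. Substituting it shows that for any fixed $\tilde g$, the map $h(\rho^2) := (1-\eta\lambda)^2\rho^2 + \eta^2\tilde g^2/\rho^2$ has a fixed point at $\rho^2 = \kappa\tilde g$ up to the suppressed $\mathcal{O}((\eta\lambda)^2)$ terms, so as $\tilde g_t$ ranges over $[\ell, L]$ the instantaneous fixed point traces out exactly $[\kappa\ell, \kappa L]$ --- the candidate invariant interval of~\eqref{eq:norm_eq}.

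Attraction of $\rho_t^2$ toward the interval is read off from the signed increment,
\begin{equation*}
    \rho_{t+1}^2 - \rho_t^2 \approx -2\eta\lambda\,\rho_t^2 + \eta^2\tilde g_t^2/\rho_t^2 = \frac{\eta^2}{\rho_t^2}\bigl(\tilde g_t^2 - \rho_t^4/\kappa^2\bigr),
\end{equation*}
whose sign matches that of $\kappa\tilde g_t - \rho_t^2$. Plugging in $\tilde g_t \le L$ when $\rho_t^2 > \kappa L$ (respectively $\tilde g_t \ge \ell$ when $\rho_t^2 < \kappa\ell$) yields strict one-sided contraction back toward the interval. To turn this into the quantitative convergence claim, I use $\eta^2\tilde g_t^2/\rho_t^2 \le \eta^2 L^2/(\kappa L) = 2\eta\lambda\,\kappa L$ whenever $\rho_t^2 \ge \kappa L$, which combined with $(1-\eta\lambda)^2 \approx 1-2\eta\lambda$ gives
\begin{equation*}
    \rho_{t+1}^2 - \kappa L \;\le\; (1 - 2\eta\lambda)(\rho_t^2 - \kappa L),
\end{equation*}
i.e., geometric contraction with rate $1-2\eta\lambda$ and hence $\mathcal{O}(1/(\eta\lambda))$ iterations to enter the interval from above, establishing the second assertion.

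The delicate step --- and the only place where the hypothesis $2\eta\lambda L \le \ell$ is used --- is \emph{forward invariance}: from $\rho_t^2 \in [\kappa\ell, \kappa L]$ one must ensure $\rho_{t+1}^2 \in [\kappa\ell, \kappa L]$. The worst case is a step starting at the lower endpoint with the maximal effective gradient $\tilde g_t = L$, for which $h(\kappa\ell) = (1-\eta\lambda)^2\kappa\ell + 2\eta\lambda\,\kappa L^2/\ell$; requiring $h(\kappa\ell) \le \kappa L$ reduces after straightforward algebra to $2\eta\lambda L \le \ell(1-2\eta\lambda)$, which is precisely the stated hypothesis to leading order. The symmetric endpoint check $h(\kappa L) \ge \kappa\ell$ is automatic for small $\eta\lambda$ since $(1-2\eta\lambda)L \ge \ell$ follows from the same hypothesis. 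The main technical obstacle throughout is the consistent bookkeeping of the suppressed $\mathcal{O}((\eta\lambda)^2)$ remainders --- each of the endpoint inequalities holds only up to this precision, which is the sense in which~\eqref{eq:norm_eq} is asymptotic and why the conclusion is phrased for $t \gg 1$.
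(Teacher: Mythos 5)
Your argument is correct in substance, but it takes a more direct route than the paper. The paper abstracts the norm recursion~\eqref{eq:si_pnorm_dyn} into a general class of ``$\beta$-undetermined'' recurrences $x_{t+1}=(1-\alpha)x_t+\beta_t/x_t$ and develops their theory in Appendix~\ref{app:beta_undet_sequence} (bounding $\beta$-determined sequences in Lemma~\ref{lem:beta_undet_bounds}, a no-hop argument via the threshold $\frac{\alpha}{1-\alpha}x^*$ in Remark~\ref{rem:beta_undet_upper_bound}, rate bounds in Corollary~\ref{cor:beta_undet_cr_bounds}), then specializes with $x_t=\rho_t^2$, $\alpha=2\eta\lambda$, $\beta_t=\eta^2\tilde g_t^2$ so that Proposition~\ref{prop:norm_eq} is an instance of Proposition~\ref{prop:beta_undet_interval}. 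You instead argue directly on the recursion: the drift-sign computation, the forward-invariance check at the extremal pair $(x,\tilde g_t)=(\kappa\ell,L)$, and the one-step contraction $\rho_{t+1}^2-\kappa L\le(1-2\eta\lambda)(\rho_t^2-\kappa L)$ valid while $\rho_t^2\ge\kappa L$, which reproduces the paper's rate and hence the $\mathcal{O}(1/\eta\lambda)$ claim. Your exact invariance condition $2\eta\lambda L\le(1-2\eta\lambda)\ell$ is precisely the paper's unapproximated condition $\frac{\alpha}{1-\alpha}\sqrt{b}\le\sqrt{a}$, and your endpoint check $h_L(\kappa\ell)\le\kappa L$ is equivalent to the paper's no-hop threshold; checking only the endpoint suffices because $h_L$ is convex, so its maximum over the interval sits at an endpoint. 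What the paper's machinery buys is reusability (the same lemmas drive Proposition~\ref{prop:jump_time}) and a cleaner treatment of trajectories that start below the interval or transiently hop over $\kappa L$; your version is shorter and self-contained.

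Two small repairs. First, your justification of the ``symmetric'' check is wrong: $(1-2\eta\lambda)L\ge\ell$ does \emph{not} follow from $2\eta\lambda L\le\ell$ (take $\ell$ close to $L$). Fortunately it is not needed: the binding case for a downward violation is $x=\kappa\ell$ with $\tilde g_t=\ell$, where the map returns exactly $\kappa\ell$ at the working precision, and since $(1-2\eta\lambda)x+\eta^2\ell^2/x$ is increasing on $[\kappa\ell,\kappa L]$ (its minimizer $\eta\ell/\sqrt{1-2\eta\lambda}$ lies left of $\kappa\ell$), no point of the interval is mapped below $\kappa\ell$, with no hypothesis required. Second, for initializations with $\rho_0^2<\kappa\ell$ you only state the drift sign; to conclude the asymptotic containment, add that a trajectory remaining below $\kappa\ell$ is monotone increasing with increment bounded away from zero below any level strictly smaller than $\kappa\ell$, so it must enter a neighborhood of the interval (this is what the paper's lower bounding sequence $x_{a,t}\to\kappa\ell$ provides), and note that a jump from far below over $\kappa L$ is then absorbed by your contraction-from-above step.
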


Note that~\citet{li2020reconciling} and~\citet{wan2020spherical} similarly predict that the equilibrium state can be reached in a linear rate regime. The condition $2 \eta \lambda L \le \ell$ is generally fulfilled in practice for small $\eta \lambda$ product even for globally chosen bounds $\ell$, $L$. A similar assumption is made, e.g., in Theorem~1 in~\citet{wan2020spherical}. We discuss it in more detail (including the non-fulfillment case) in Appendix~\ref{app:ell_cond}.

Proposition~\ref{prop:norm_eq} generalizes the results of~\citet{li2020reconciling} who claimed that the effective learning rate $\tilde{\eta}_t = \eta / \rho_t^2$ converges to a constant. Their derivation relies on the assumption of stabilization of the effective gradient variance, which contradicts the observed periodic behavior. We relax this assumption by putting bounds on the effective gradient norm, thus bounding the parameters norm limits. These bounds can be either local, which defines the local trend of parameters norm dynamics, or global, which describes its general behavior. Also, note that we, in some sense, extend the results of~\citet{wan2020spherical} as we provide the exact limiting interval for $\rho_t^2$, not just bound its variance.

\section{Empirical study of the periodic behavior}
\label{sec:2}

After discussing the reasons for the occurrence of the periodic behavior, we now further analyze its properties. In particular, we investigate: how hyperparameters affect the periodic behavior, how the periodic behavior evolves over epochs, and how minima achieved in different training periods differ both in parameter and functional space. In this section, we again consider the simplified setting with a fully scale-invariant neural network trained with standard SGD.

\paragraph{Influence of hyperparameters.} 
We investigate the influence of two key training hyperparameters: learning rate and weight decay, but since the training dynamics mainly depend on their product (see discussion in Section~\ref{sec:method} and Appendix~\ref{app:fixed_lr_wd}), we only vary the learning rate. The results for ConvNet on CIFAR-10 are given in Figure~\ref{fig:cycles_demo}, the results for other dataset-architecture pairs are presented in Appendix~\ref{app:fully_invariant}, and the results on the variable weight decay are given in Appendix~\ref{app:var_wd}. Our first observation is that with higher learning rates, consistent periodic behavior occurs at larger weight norms. This is because SGD with a high learning rate can only converge with relatively small gradient norms, which are achieved at large weight norms according to Eq.~\eqref{eq:si_prop_hom}. This observation also agrees with Proposition~\ref{prop:norm_eq} in Section~\ref{sec:theory}. The second observation is that the periodic behavior is present for a wide range of learning rates, e.g., $0.003-0.3$ for ConvNet on CIFAR-10, and the higher the learning rate, the shorter the periods, which agrees with Corollary~\ref{cor:jump_time} in Section~\ref{sec:theory}. When using a learning rate that is too high, e.g., 1 in Figure~\ref{fig:cycles_demo}, we expect training to yield very large weight norms, however weight decay prohibits us from reaching them, thus the gradients are not able to shrink sufficiently, and training gets stuck at high train loss. On the other hand, using a learning rate that is too low, e.g., 0.001 in Figure~\ref{fig:cycles_demo}, leads to prolonged training which does not reach a small enough weight norm to yield a high effective learning rate, resulting in the absence of the periodic behavior in the given number of epochs. We note that the periodic behavior is present for the learning rates giving the lowest test error. In Appendix~\ref{app:var_wd}, we show that varying weight decay leads to similar effects: the periodic behavior is present for a wide range of reasonable weight decays but is absent for too low or too high weight decays, and the higher the weight decay, the faster the periods.  

\paragraph{Dynamics of periodic behavior.} 
We now analyze how the discovered periodic behavior evolves over epochs. As discussed in the previous paragraph, consistent periodic behavior occurs at larger weight norms with higher learning rates. However, the initialization may have a substantially different norm. Thus we observe a \emph{warm-up stage} in some plots of Figures~\ref{fig:cycles_demo_intro} and~\ref{fig:cycles_demo}, when the beginning of training is spent on moving towards the appropriate norm of scale-invariant weights. Expectedly, this warm-up stage is more prolonged for lower learning rates. Reaching the proper weight norm initiates a consistent periodic behavior. During the warm-up stage, SGD can still exhibit regular destabilization happening at higher weight norms than in the stage of consistent periodic behavior. We hypothesize that at the early stage of training, SGD converges to less stable basins with larger effective gradients, in which destabilization happens at larger norms of the scale-invariant parameters. We notice that test error decreases after each warm-up destabilization episode and reaches a lower level than training with a fixed effective learning rate, as shown in Figure~\ref{fig:fix_elr_sgd} (right pair). In other words, the performance may benefit from the repeating destabilization. 

\begin{figure}
  \centering
  \centerline{
  \begin{tabular}{cc}
  {\small ConvNet on CIFAR-10} & {\small ResNet-18 on CIFAR-100} \\
  \includegraphics[width=0.5\textwidth]{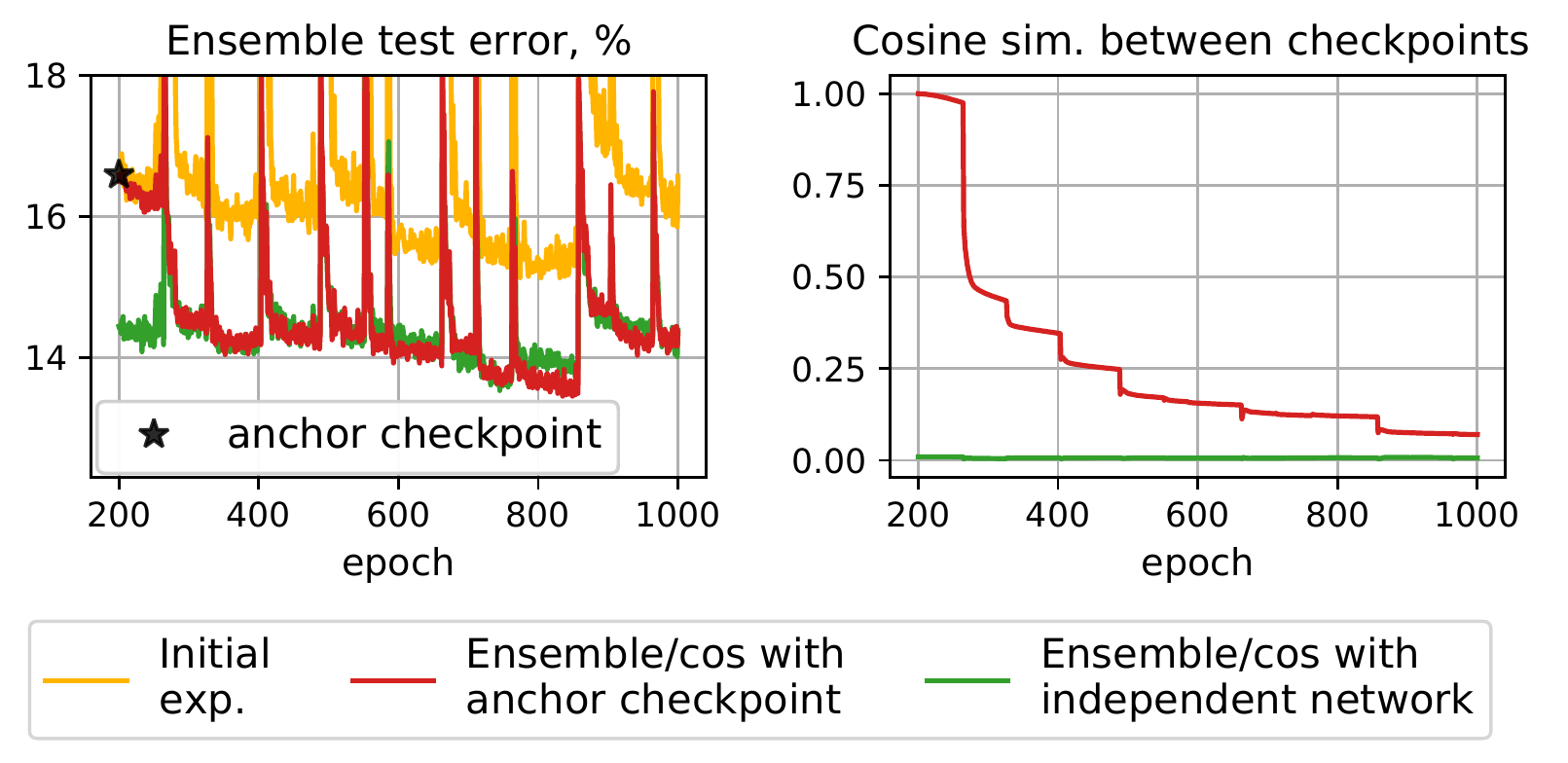} & \includegraphics[width=0.5\textwidth]{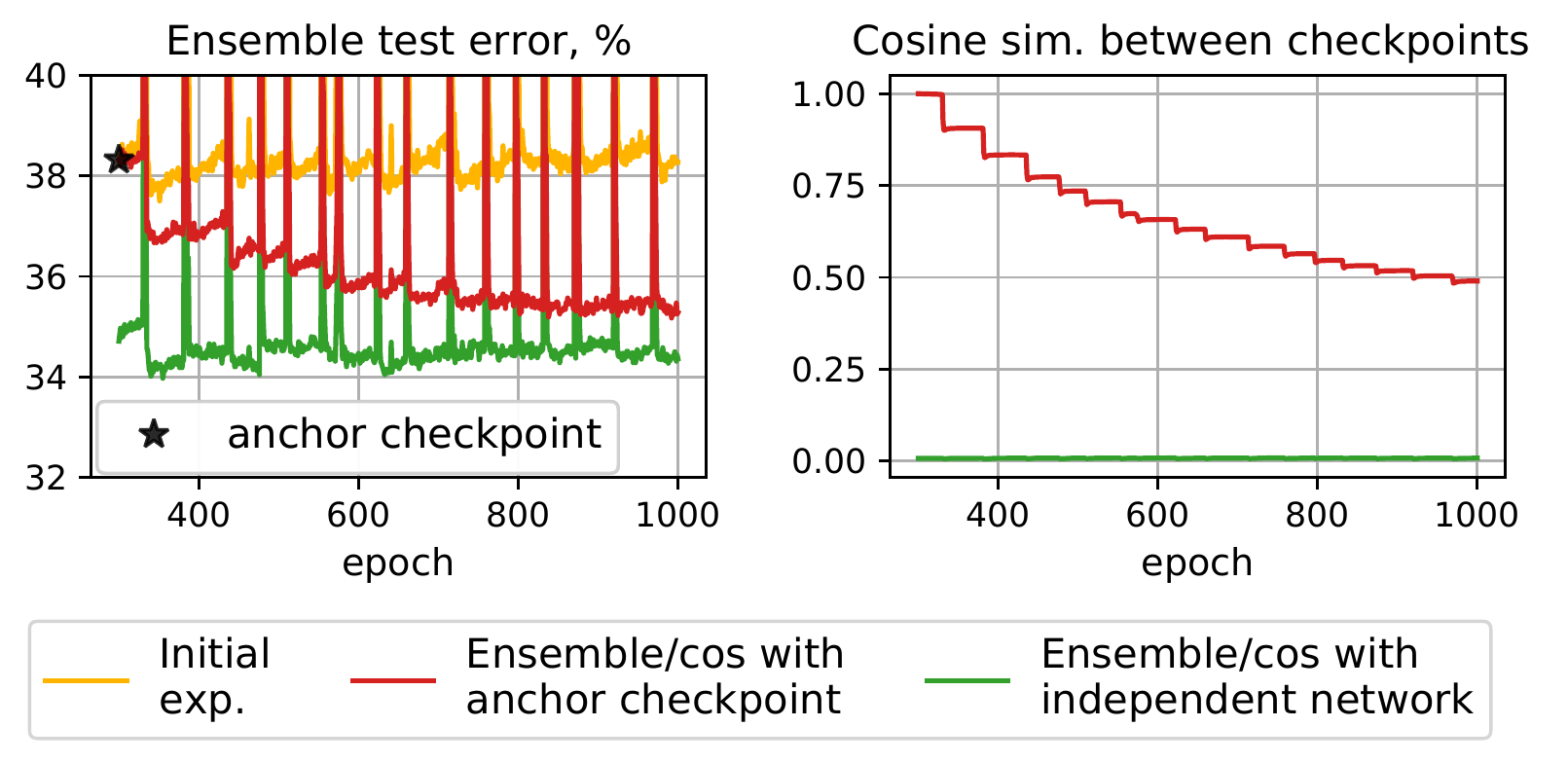}
  \end{tabular}}
  \caption{
  Similarity in the weight space (cosine sim.) and the functional space (ensemble test error) for different checkpoints of training scale-invariant ConvNet on CIFAR-10 (left pair) and ResNet on CIFAR-100 (right pair) using SGD with weight decay of 0.001 and learning rate of 0.03. 
  }
  \label{fig:ensembles}
\end{figure}

\paragraph{Minima achieved at different training periods.} 
Next, we aim at understanding whether minima achieved in different training periods are close in weight and functional spaces. We use the cosine similarity function for the weight space and estimate similarity in the functional space by comparing with ensembles of independent models, following~\citet{fort2019deep}. 
If training process converges to the same minimum in each period, then cosine similarity between two minima achieved in different periods should be close to one and their ensemble error should be close to the error of a single network. On the contrary, if destabilization moves training so far that it is equivalent to retraining a model from a new random initialization, then the cosine similarity between the two minima should be close to zero and their ensemble error should be close to the error of an ensemble of two independently trained networks.

The setup of the experiment is as follows. We select some initial experiment and its checkpoint (called anchor checkpoint) corresponding to the minimum achieved when the training process has already converged to the consistent periodic behavior. After that, we measure weight/function similarities between the anchor checkpoint and all the subsequent checkpoints of the initial experiment~--- this is our primary measurement. 
For comparison, we independently train one more neural network with the same hyperparameters as in the initial experiment but from a different random initialization, select its checkpoint with the same test error as that of the anchor checkpoint, and measure the similarity between this new checkpoint and the checkpoints of the original experiment.

The results for ConvNet on CIFAR-10 and ResNet-18 on CIFAR-100 are presented in Figure~\ref{fig:ensembles}, the results for other dataset-architecture pairs are given in Appendix~\ref{app:ensembles}. Inside one training period, checkpoints do not step far from the anchor checkpoint, i.e., the cosine similarity is close to one, and the ensemble test error is close to the error of a single network. However, when the next training period begins after destabilization, SGD moves to another  region in the weight space, and both similarities start decreasing: the cosine similarity drops, and ensemble test error becomes smaller than that of a single network. Each following training period moves SGD farther away from the anchor checkpoint. For networks on CIFAR-100, late training periods continue to be correlated with the anchor checkpoint, i.e., the cosine similarity only reaches $\sim0.5$ value, and ensemble test error does not reach the level of the independent networks ensemble.  Still, both similarities continue decreasing. For networks on CIFAR-10, the cosine similarity decreases faster, and the ensemble test error quickly reaches the test error of an ensemble of two independently trained networks.  To sum up, minima achieved at two neighboring training periods are substantially different, but their similarity is usually higher than that of two independently trained networks. 

\section{Periodic behavior in a practical setting}
\label{sec:3}

\begin{figure}
  \centering
  \centerline{
 \begin{tabular}{cc}
 {\small Add modifications one at a time} & {\small Add all modifications together} \\
  \includegraphics[width=0.5\textwidth]{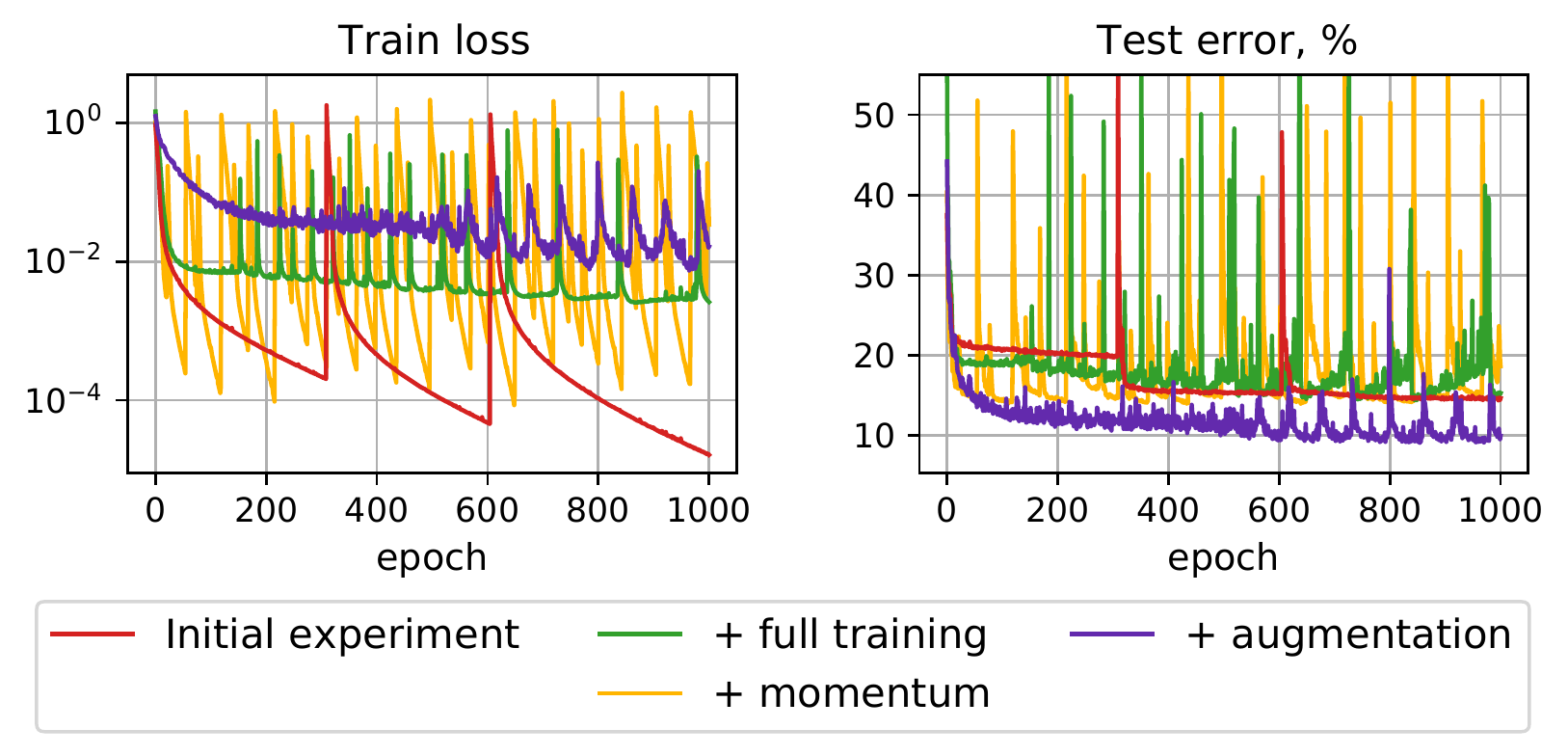} & \includegraphics[width=0.5\textwidth]{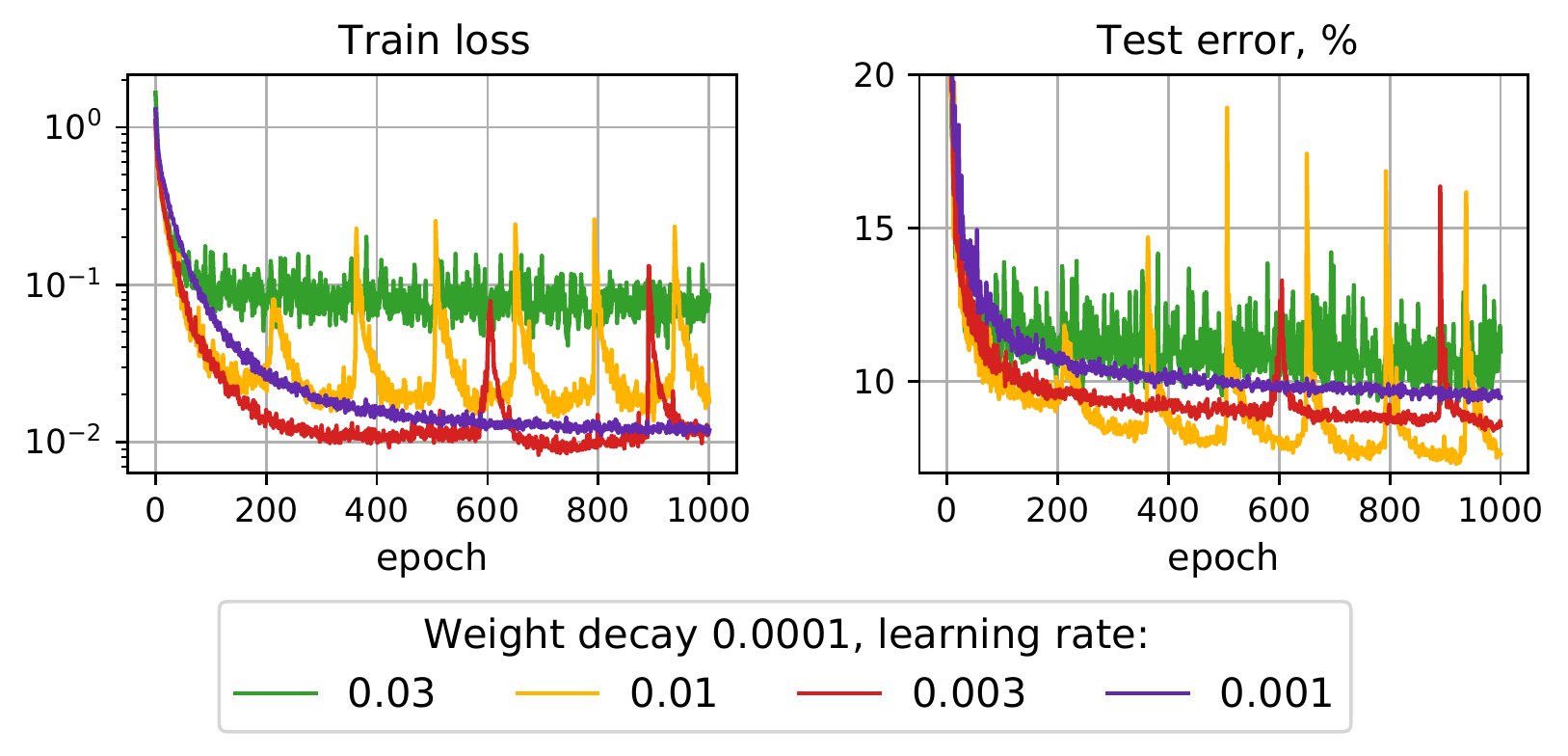}
  \end{tabular}}
  \caption{Periodic behavior of ConvNet (of increased width) on CIFAR-10 trained with the more practical modifications. Left: weight decay of 0.001, learning rate of 0.01.
  }
  \label{fig:mom_aug_full}
\end{figure}

In the previous sections, we conducted experiments with scale-invariant neural networks trained with the simplest version of SGD. This allowed us to analyze the periodic behavior of train loss in detail. However, in practice, a portion of the weights of a neural network are not scale-invariant, e.g., the weights of the last layer and learnable BN parameters. Furthermore, networks are trained using more advanced procedures, e.g., SGD with momentum, data augmentation, and a learning rate schedule. At the same time, periodic behavior was mainly not noticed in previous works to the best of our knowledge. 
In this section, we show the presence of the periodic behavior for standard neural networks trained with momentum and data augmentation and discuss why periodic behavior may be not observed in practice. In Appendix~\ref{app:other_setups}, we also show the presence of the periodic behavior for the networks with other normalization approaches or trained with Adam~\cite{adam}.

We select one of our initial experiments and add modifications one at a time to see their effects more clearly. We also present the results for training with all modifications turned on together. The plots for ConvNet on CIFAR-10 are given in Figure~\ref{fig:mom_aug_full} and for other setups~--- in Appendix~\ref{app:other_setups}. In this section, we use a wider ConvNet, as the standard version is too small to learn the augmented dataset.

\paragraph{Training non-scale-invariant weights.}
To achieve full scale-invariance, we froze the weights of the last layer and the parameters of BN layers since they all are not scale-invariant. We now consider the conventional procedure that implies training all neural network weights. In addition to the results presented in Figure~\ref{fig:mom_aug_full} (left), we refer the reader to Figure~\ref{fig:cycles_demo_intro}. We observe that training non-scale-invariant weights retains the periodic behavior and affects the frequency of periods. The last-mentioned effect relates to the trainable last layer that automatically adjusts prediction confidence. In Appendix~\ref{app:last_layer}, we show that variable prediction confidence results in different periodic behavior.

\paragraph{SGD with momentum.} 
Next, we investigate the effect of using a more complex optimization algorithm. We consider SGD with momentum as the algorithm most commonly used for training convolutional neural networks. We observe that using momentum does not break the periodic behavior and increases the frequency of periods. This agrees with the commonly observed effect that momentum speeds up training~\cite{sutskever2013importance}, i.e., momentum accelerates phases $A$ and $B$. Interestingly, momentum does not prevent destabilization. 

\paragraph{Data augmentation.} 
We next consider training on the dataset with standard CIFAR-10 data augmentations, see details in Appendix~\ref{app:details}. Augmentation prevents over-fitting to the training data, which results in less confident predictions and larger train loss. As a result, train loss gradients outweigh WD more easily. If the number of parameters in the neural network is insufficient to achieve low train loss gradients, phase $A$ never ends (at least in 1000 epochs), resulting in the absence of the periodic behavior. On the other hand, a sufficiently large neural network learns the augmented dataset at some epoch and proceeds to phase $B$, launching the periodic process. Still, we note that the periodic behavior begins much later than for the network trained without augmentation. This is one of the main reasons why the periodic behavior is often not observed in practice: it requires a much larger number of epochs to start than conventionally used for training.

\paragraph{All modifications together.} 
In the two right plots of Figure~\ref{fig:mom_aug_full}, we visualize training with momentum, data augmentation, and unfrozen non-scale-invariant parameters used simultaneously and observe the presence of the periodic behavior. 

So, what factors do interfere with observing the periodic behavior in practice?  We underline two main factors. First, the interplay between different modifications narrows the range of hyperparameter values for which periodic behavior is present. When non-scale-invariant parameters are trained, the model converges to low test error only with specific values of weight decay. Moreover, with data augmentation, periodic behavior occurs only with relatively high learning rates (with lower learning rates, the training is too slow to reach phase $C$ in 1000 epochs), while with momentum, using too high learning rates may result in training failure in phase $A$. In sum, periodic behavior appears only for a limited range of hyperparameters. Despite that, we note that the model generally achieves its best performance exactly in this range. Second, practical settings also imply learning rate schedules and a relatively small number of epochs, which do not preserve periodic behavior.
We provide further discussion on comparison of our experimental setup with other works in Appendix~\ref{app:comp_setups}.

\section{Conclusion}
\label{sec:concl}

In this work, we described the periodic behavior of neural network training with batch normalization and weight decay occuring due to their competing influence on the norm of the scale-invariant weights. The discovered periodic behavior clarifies the contradiction between the equilibrium and instability presumptions regarding training with BN and WD and generalizes both points of view. In our empirical study, we investigated what factors and in what fashion influence the discovered periodic behavior. In our theoretical study, we introduced the notion of $\delta$-jumps to describe training destabilization, the cornerstone of the periodic behavior, and generalized the equilibrium conditions in a way that better describes the empirical observations.

\paragraph{Limitations and negative societal impact.}
%\section*{Limitations and negative societal impact}
We discuss only conventional training of convolutional neural networks for image classification and do not consider other architectures and tasks. However, we believe that our findings extrapolate to training any kind of neural network with some type of normalization and weight decay. We also focus on a particular source of instability induced by BN and WD, yet, other factors may make training unstable~\cite{cohen2021gradient}. This is an exciting direction for future research. To the best of our knowledge, our work does not have any direct negative societal impact. However, while conducting the study, we had to spend many GPU hours, which, unfortunately, could negatively affect the environment.

\begin{ack}
We would like to thank Sofia Sibagatova for the help with Appendix~\ref{app:other_setups}.
The work was supported in part by the Russian Science Foundation grant \textnumero 19-71-30020. 
The empirical results were supported in part through the computational resources of HPC facilities at HSE University~\cite{hpc}. Additional revenues of the authors for the last three years: laboratory sponsorship by Samsung Research, Samsung Electronics; travel support by Google, NTNU, DESY, UCM.
\end{ack}

\bibliographystyle{apalike}
\bibliography{ref}

\newpage
\appendix

\section{Theoretical results}
\label{app:theory}

This section contains details on our theoretical results.

\subsection{Invariance to hyperparameters rescaling}
\label{app:hyp_resc_inv}

Based on properties~\eqref{eq:si_prop_perp} and~\eqref{eq:si_prop_hom}, we derive a simple yet useful proposition tying together different hyperparameter settings of initialization $x_0$, learning rate $\eta$, and weight decay coefficient $\lambda$. This proposition provides grounds for fixing the initialization scale in our experiments and iterating over learning rates and weight decay coefficients when studying the dependence of the behavior of scale-invariant neural networks on hyperparameters. 

\begin{proposition}
\label{prop:hyp_resc_inv}
Given $f(x)$ is scale-invariant and optimized according to Eq.~\eqref{eq:si_dyn}, settings $(x_0, \eta, \lambda)$ and $(x_0^\prime, \eta^\prime, \lambda^\prime) = (c x_0, c^2 \eta, \lambda / c^2),\, c > 0$ lead to equivalent dynamics in function space.
\end{proposition}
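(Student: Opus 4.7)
The plan is to exhibit an explicit pointwise correspondence between the two trajectories in parameter space and then read off functional equivalence from scale invariance. Writing $\{x_t\}$ for the iterates generated by \eqref{eq:si_dyn} under the setting $(x_0,\eta,\lambda)$, I would define the candidate trajectory $y_t := c\,x_t$ and verify by induction that $\{y_t\}$ is precisely the sequence produced by \eqref{eq:si_dyn} under $(x_0',\eta',\lambda') = (c x_0,\, c^2\eta,\, \lambda/c^2)$ starting from $y_0 = c x_0 = x_0'$. The key observations that make the induction go through are (i) the rescaled hyperparameters preserve the weight-decay factor, since $\eta'\lambda' = c^2\eta \cdot \lambda/c^2 = \eta\lambda$, and (ii) the homogeneity property \eqref{eq:si_prop_hom} transports gradients across scales via $\nabla f(c x_t) = \tfrac{1}{c}\nabla f(x_t)$.

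Concretely, the inductive step reduces to the one-line computation
\[
(1-\eta'\lambda')\,y_t - \eta'\,\nabla f(y_t) \;=\; c(1-\eta\lambda)\,x_t - c^2\eta \cdot \tfrac{1}{c}\nabla f(x_t) \;=\; c\bigl[(1-\eta\lambda)\,x_t - \eta\,\nabla f(x_t)\bigr] \;=\; c\,x_{t+1} \;=\; y_{t+1},
\]
where the second equality uses \eqref{eq:si_prop_hom} to pull out the factor $1/c$. This shows that the trajectory under $(x_0',\eta',\lambda')$ is $y_t = c x_t$ for every $t$, and equivalence in function space then follows instantly from scale invariance: $f(y_t) = f(c x_t) = f(x_t)$ for all $t \ge 0$.

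There is essentially no obstacle: the argument is a direct substitution, exact (no small-$\eta\lambda$ approximation is needed), and holds for stochastic gradients as well since \eqref{eq:si_prop_hom} applies minibatch-wise. The only conceptual point worth flagging is that the rescaling $\eta' = c^2\eta$, $\lambda' = \lambda/c^2$ is the unique choice compatible with $y_t = c x_t$: the condition $\eta'\lambda' = \eta\lambda$ fixes the product, while the homogeneity of $\nabla f$ fixes the ratio $\eta'/\eta = c^2$. Note that property \eqref{eq:si_prop_perp} is not actually invoked here; only \eqref{eq:si_prop_hom} and the underlying scale invariance $f(\alpha x) = f(x)$ enter the proof.
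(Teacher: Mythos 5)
Your proof is correct and matches the paper's argument in essence: both establish by induction that the rescaled trajectory satisfies $x_t' = c\,x_t$, using the invariance of the product $\eta\lambda$ together with the gradient homogeneity~\eqref{eq:si_prop_hom}, and then conclude $f(x_t') = f(x_t)$ by scale invariance. The paper merely packages the same computation via the effective learning rate $\tilde{\eta}_t = \eta/\norm{x_t}^2$ (which is unchanged under the rescaling), so no substantive difference exists.
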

\begin{proof}
Eq.~\eqref{eq:si_dyn} and property~\eqref{eq:si_prop_hom} give $x_{t+1} = \norm{x_t} \left[ (1 - \eta \lambda)\frac{x_t}{\norm{x_t}} - \tilde{\eta}_t \nabla f(x_t / \norm{x_t}) \right]$, where $\tilde{\eta}_t = \frac{\eta}{\norm{x_t}^2}$ is the effective learning rate.
Since the term in square brackets does not depend on the scale of $x_t$ provided that the effective learning rate and $\eta \lambda$ product are unchanged, by induction, from $x_t^\prime = c x_t$ we have $x_{t+1}^\prime = c x_{t+1}$, hence $f(x_{t+1}^\prime) = f(x_{t+1})$.
\end{proof}

\subsection{Derivations}
\label{app:derivations}

\paragraph{Parameters norm dynamics~\eqref{eq:si_pnorm_dyn}}
\begin{align*}
    \rho_{t+1}^2 &= 
    \scalarprod{x_{t+1}}{x_{t+1}} = 
    \left\{ \text{Eq.~\eqref{eq:si_dyn}} \right\} = 
    (1 - \eta \lambda)^2 \rho_t^2 + \eta^2 g_t^2 + 2 \eta (1 - \eta \lambda) \scalarprod{\nabla f(x_t)}{x_t} = \\
    &= \left\{ \text{property~\eqref{eq:si_prop_perp}} \right\} =
    (1 - \eta \lambda)^2 \rho_t^2 + \eta^2 g_t^2 = 
    \left\{ \text{property~\eqref{eq:si_prop_hom}, i.e., $g_t = \tilde{g}_t / \rho_t$} \right\} = \\
    &= (1 - \eta \lambda)^2 \rho_t^2 + \eta^2 \tilde{g}_t^2 / \rho_t^2
\end{align*}

\paragraph{Cosine distance between adjacent iterates~\eqref{eq:cos}}
\begin{align*}
    \cos(x_t, x_{t+1}) &= 
    \frac{\scalarprod{x_t}{x_{t+1}}}{\rho_t \rho_{t+1}} =
    \left\{ \text{Eq.~\eqref{eq:si_dyn}} \right\} = 
    \frac{(1 - \eta \lambda) \scalarprod{x_t}{x_t} - \eta \scalarprod{\nabla f(x_t)}{x_t}}{\rho_t \rho_{t+1}} = \\
    &= \left\{ \text{property~\eqref{eq:si_prop_perp}} \right\} =
    \frac{(1 - \eta \lambda) \rho_t}{\rho_{t+1}} =
    \left\{ \text{Eq.~\eqref{eq:si_pnorm_dyn}} \right\} =
    \left( 1 + \frac{\eta^2 \tilde{g}_t^2}{(1 - \eta \lambda)^2 \rho_t^4} \right)^{-1/2}
\end{align*}

\paragraph{$\delta$-jump conditions~\eqref{eq:cos_dist_jump}}
\begin{align*}
    1 - \cos(x_t, x_{t+1}) > \delta &\iff
    \left\{ \text{Eq.~\eqref{eq:cos}} \right\} \iff
    \left( 1 + \frac{\eta^2 \tilde{g}_t^2}{(1 - \eta \lambda)^2 \rho_t^4} \right)^{-1/2} < 1 - \delta \iff \\
    &\iff 1 + \frac{\eta^2 \tilde{g}_t^2}{(1 - \eta \lambda)^2 \rho_t^4} > \frac{1}{(1 - \delta)^2} = 1 + 2 \delta + \mathcal{O}(\delta^2) \gtrapprox 1 + 2 \delta.
\end{align*}
Omitting $\mathcal{O}(\delta^2)$ leaves the condition necessary and also approximately sufficient for small $\delta$:
\begin{align*}
    1 - \cos(x_t, x_{t+1}) > \delta &\implies \frac{\eta^2 \tilde{g}_t^2}{(1 - \eta \lambda)^2 \rho_t^4} > 2 \delta \iff
    \rho_t^2 < \frac{\eta \tilde{g}_t}{(1 - \eta \lambda) \sqrt{2 \delta}}.
\end{align*}

\subsection{Proof of Proposition~\ref{prop:jump_cond}}
\label{app:jump_cond}

For the convenience of reading, we defer the derivation details of all equations to Appendix~\ref{app:derivations}.

\begin{proof}
Using property~\eqref{eq:si_prop_perp} and Eq.~\eqref{eq:si_pnorm_dyn}, we obtain the exact value of the cosine between adjacent iterates:
\begin{equation}
    \label{eq:cos}
    \cos(x_t, x_{t+1}) = \left( 1 + \frac{\eta^2 \tilde{g}_t^2}{(1 - \eta \lambda)^2 \rho_t^4} \right)^{-1/2}.
\end{equation}
From Eq.~\eqref{eq:cos} we deduce the following $\delta$-jump condition:
\begin{equation}
    \label{eq:cos_dist_jump}
    1 - \cos(x_t, x_{t+1}) > \delta \implies \rho_t^2 < \frac{\eta \tilde{g}_t}{(1 - \eta \lambda) \sqrt{2 \delta}}.
\end{equation}
During the derivation, we omitted $\mathcal{O}(\delta^2)$ terms.
This implies that the right inequality represents not only the necessary but also (approximately) the sufficient condition for a $\delta$-jump when $\delta$ is small.

Assuming $(1 - \eta \lambda) \lessapprox 1$ and substituting the effective gradient bounds $\ell$ and $L$ into Eq.~\eqref{eq:cos_dist_jump} in place of $\tilde{g}_t$ finally yields the approximate necessary and sufficient $\delta$-jump conditions~\eqref{eq:jump_nec} and~\eqref{eq:jump_suf}, respectively.
\end{proof}

\subsection{On $\beta$-undetermined recurrent sequences}
\label{app:beta_undet_sequence}

Here we provide some results related to convergence of sequences of the following kind:
\begin{equation}
    \label{eq:beta_undet_sequence}
    x_{t+1} = (1 - \alpha) x_t + \frac{\beta_t}{x_t},
\end{equation}
where $\alpha$ is a fixed coefficient, and $\beta_t$ may vary from iteration to iteration.
We assume $x_0 > 0$, $0 < \alpha < 0.5$, and $\beta_t \in [a, b],\, \forall t$, where $0 \le a \le b < +\infty$ are some fixed values.
We call sequences of type~\eqref{eq:beta_undet_sequence} \emph{$\beta$-undetermined} recurrent sequences.

\subsubsection{$\beta$-determined sequences}
\label{app:beta_det_sequence}

To derive the basic properties of $\beta$-undetermined sequences~\eqref{eq:beta_undet_sequence}, we first consider \emph{$\beta$-determined} recurrent sequences:
\begin{equation}
    \label{eq:beta_det_sequence}
    x_{t+1} = (1 - \alpha) x_t + \frac{\beta}{x_t},
\end{equation}
where $\beta$ is now a fixed non-negative value. 

If $\beta = 0$,~\eqref{eq:beta_det_sequence} boils down to a classical linear sequence converging to zero at rate $1 - \alpha$.
Assume now that $\beta > 0$.
First of all, $x^* = \sqrt{\frac{\beta}{\alpha}}$ is the only stationary point of sequence~\eqref{eq:beta_det_sequence}.
This holds from solving the following equation:
\[
    x_{t+1} = x_t \iff x_t = x^* = \sqrt{\frac{\beta}{\alpha}}.
\]
Suppose $x_t = \gamma_t x^*$. 
By dividing the left and right sides of Eq.~\eqref{eq:beta_det_sequence} by $x^*$, we can derive the formula for $\gamma_{t+1}$ as a function of $\gamma_t$ which we denote as $\varphi(\gamma_t)$:
\begin{equation}
    \label{eq:gamma_tp1}
    \gamma_{t+1} = \varphi(\gamma_t) = (1 - \alpha) \gamma_t + \frac{\alpha}{\gamma_t}.
\end{equation}
The sequence induced by~\eqref{eq:gamma_tp1} is a special case of Eq.~\eqref{eq:beta_det_sequence} with a stationary point $\gamma^* = 1$.
One important property is that $\gamma_{t+1}$ does not depend on $\beta$ explicitly, only on $\gamma_t$ and $\alpha$.
This unifies the convergence analysis for sequences with different $\beta$ coefficients.

For function~\eqref{eq:gamma_tp1} the following facts hold (see Figure~\ref{fig:gamma_tp1} for an illustration):
\begin{subnumcases}{}
    \gamma_t < 1 \implies \gamma_{t+1} > \gamma_t \text{: the sequence is increasing once it's below $x^*$}, \label{eq:gamma_inc} \\
    \gamma_t > 1 \implies 1 < \gamma_{t+1} < \gamma_t \text{: the sequence is decreasing once it's above $x^*$}, \label{eq:gamma_dec} \\
    \gamma_{t+1} = 1 \iff \gamma_t = \frac{\alpha}{1 - \alpha} \vee \gamma_t = 1 \text{: pre-stationary conditions}, \label{eq:gamma_prestat} \\
    \gamma_{t+1} < 1 \iff \gamma_t \in \left(\frac{\alpha}{1 - \alpha}, 1\right) \text{: conditions for staying below the stationary point}, \label{eq:gamma_below} \\
    \varphi(\gamma_t) \text{ is a decreasing function for } \gamma_t < \sqrt{\frac{\alpha}{1 - \alpha}}, \label{eq:gamma_dec_func} \\
    \varphi(\gamma_t) \text{ is an increasing function for } \gamma_t > \sqrt{\frac{\alpha}{1 - \alpha}}, \label{eq:gamma_inc_func} \\
    \gamma_{t+1} = \min_{\gamma_t} \varphi(\gamma_t) = 2 \sqrt{\alpha (1 - \alpha)} \iff \gamma_t = \sqrt{\frac{\alpha}{1 - \alpha}} \text{: the lowest achievable value}. \label{eq:gamma_min}
\end{subnumcases}

\begin{figure}
  \centering
  \includegraphics[width=0.75\textwidth]{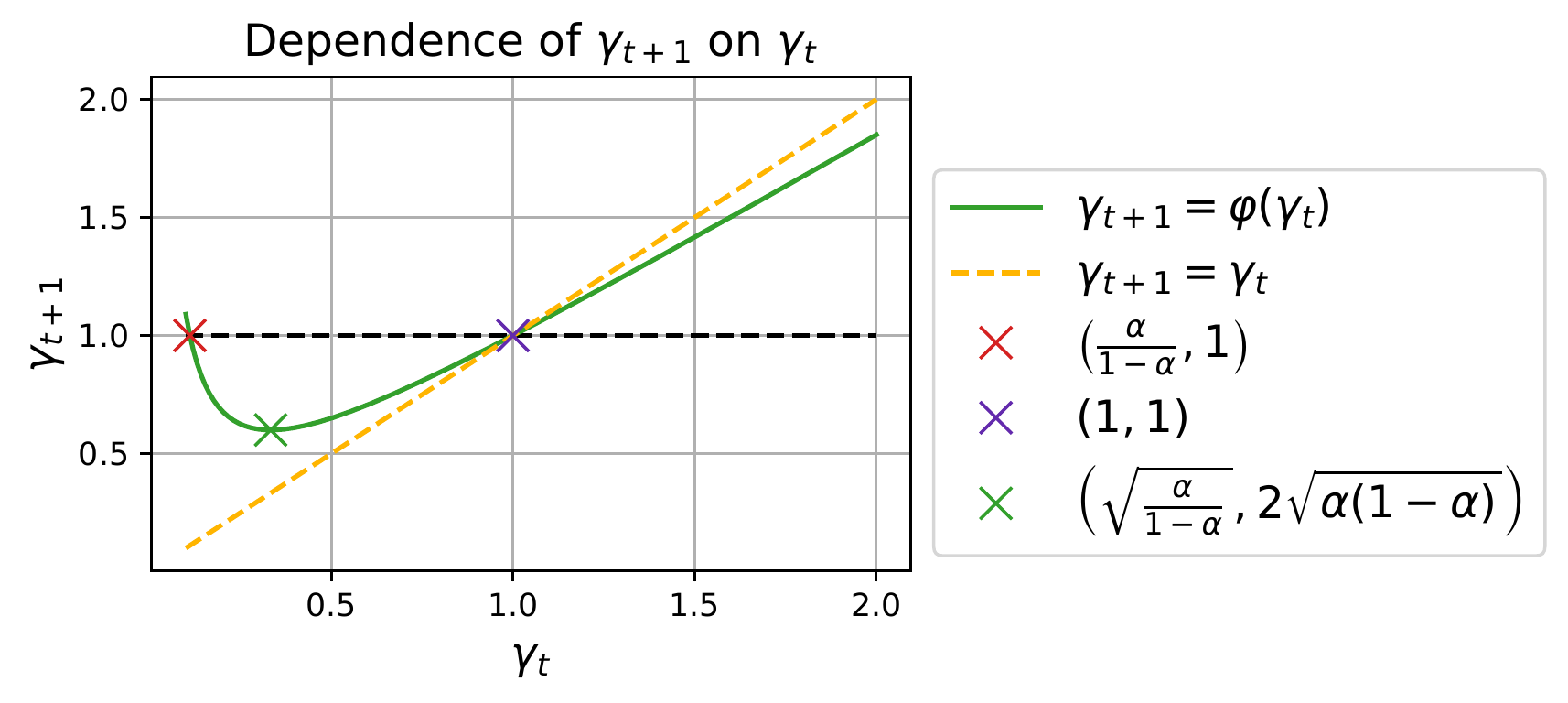}
  \caption{Dependence of $\gamma_{t+1}$ on $\gamma_t$ from Eq.~\eqref{eq:gamma_tp1} for $\alpha = 0.1$.}
  \label{fig:gamma_tp1}
\end{figure}

Note that for $0 < \alpha < 0.5$ we have
\[
    \frac{\alpha}{1 - \alpha} < \sqrt{\frac{\alpha}{1 - \alpha}} < 2 \sqrt{\alpha (1 - \alpha)} < 1.
\]
Properties~\eqref{eq:gamma_dec} and~\eqref{eq:gamma_below} imply that $x_{t+1}$ can ``hop'' over $x^*$ if only $x_t < \frac{\alpha}{1 - \alpha} x^*$.
Otherwise, $x_t$ is monotonically approaching its stationary point.
That is an important threshold that will help derive the convergence of $\beta$-undetermined sequences to a specific equilibrium interval.

The derivative of $\varphi(\gamma_t)$ can help estimate the convergence rate of the sequence~\eqref{eq:beta_det_sequence} to its stationary point.
Specifically, using the mean value theorem, we obtain that
\begin{equation}
    \label{eq:mvt}
    x_{t+1} - x^* = x^* \left(\gamma_{t+1} - 1\right) = x^* \left(\varphi(\gamma_t) - \varphi(1)\right) = x^* \varphi^{\prime}(\xi) \left(\gamma_t - 1\right),
\end{equation}
where $\xi$ is some point between $1$ and $\gamma_t$.
Therefore, by bounding the derivative $\varphi^{\prime}(\gamma_t)$, we can also bound the $x_t$ convergence to $x^*$.

Suppose that $\gamma_0 > 1$.
From~\eqref{eq:gamma_dec} it follows that $\gamma_t > 1,\, \forall t$.
In this case, we can bound the derivative of $\varphi(\gamma_t)$ for $\gamma_t > 1$ and obtain the approximate convergence rates for~\eqref{eq:beta_det_sequence}:
\[
    1 - 2\alpha < \varphi^{\prime}(\gamma_t) = (1 - \alpha) - \frac{\alpha}{\gamma_t^2} < 1 - \alpha,\, \gamma_t > 1,
\]
which, after recursively applying~\eqref{eq:mvt}, yields
\[
    (1 - 2 \alpha)^t (\gamma_0 - 1) < \gamma_{t} - 1 < (1 - \alpha)^t (\gamma_0 - 1),\, t \ge 1,
\]
or equivalently, formulating this for~\eqref{eq:beta_det_sequence} as a lemma:

\begin{lemma}
\label{lem:beta_det_bounds}
    For an arbitrary $\beta$-determined sequence~\eqref{eq:beta_det_sequence} with $\beta \ge 0$, given $x^* = \sqrt{\frac{\beta}{\alpha}}$ and $x_0 > x^*$, the following bounds on its convergence rate hold:
    \[
        (1 - 2 \alpha)^t (x_0 - x^*) \le x_t - x^* \le (1 - \alpha)^t (x_0 - x^*),\, \forall t.
    \]
\end{lemma}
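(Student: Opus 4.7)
My plan is to exploit essentially all the machinery already assembled for the normalized sequence $\gamma_t = x_t / x^*$, since the inequalities to be proved are really statements about how fast $\gamma_t$ approaches its fixed point $1$ from above. I would start by disposing of the degenerate case $\beta = 0$: there $x^* = 0$ and the recursion collapses to $x_{t+1} = (1-\alpha) x_t$, so $x_t - x^* = (1-\alpha)^t (x_0 - x^*)$ and both bounds hold with equality on the upper side and slack on the lower side. From now on assume $\beta > 0$, so that $x^* > 0$ and the change of variables $\gamma_t = x_t/x^*$ is well-defined and turns~\eqref{eq:beta_det_sequence} into~\eqref{eq:gamma_tp1} with map $\varphi$.

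Next I would record the invariance $\gamma_t > 1$ for all $t$. By hypothesis $\gamma_0 = x_0/x^* > 1$, and property~\eqref{eq:gamma_dec} then gives $1 < \gamma_{t+1} < \gamma_t$ for every $t$ by induction. This invariance is what lets me work entirely on the branch $\gamma > 1$ of $\varphi$, where $\varphi$ is smooth and its derivative is well controlled.

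The core step is a one-line mean value theorem argument, identical to~\eqref{eq:mvt}: since $\varphi(1) = 1$, for each $t$ there is some $\xi_t \in (1, \gamma_t)$ with
\[
\gamma_{t+1} - 1 = \varphi(\gamma_t) - \varphi(1) = \varphi'(\xi_t)(\gamma_t - 1).
\]
On the half-line $\xi > 1$ one has $\varphi'(\xi) = (1-\alpha) - \alpha/\xi^2$, which is strictly increasing in $\xi$ and satisfies $1 - 2\alpha < \varphi'(\xi) < 1 - \alpha$. Plugging this two-sided bound into the MVT identity and iterating from $t=0$ to $t$ gives
\[
(1 - 2\alpha)^t (\gamma_0 - 1) \le \gamma_t - 1 \le (1 - \alpha)^t (\gamma_0 - 1),
\]
and multiplying through by $x^* > 0$ converts this to the stated bounds on $x_t - x^*$.

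I do not expect any serious obstacle: the inductive invariance $\gamma_t > 1$ follows directly from property~\eqref{eq:gamma_dec}, and the derivative bound is an elementary estimate. The only subtlety is making sure to separate the $\beta = 0$ case, since the substitution $\gamma_t = x_t/x^*$ would otherwise divide by zero. It is also worth remarking, though not strictly needed for the statement, that the lemma shows linear convergence $x_t \to x^*$ with rate sandwiched between $1-2\alpha$ and $1-\alpha$, a fact the surrounding discussion of $\beta$-undetermined sequences evidently relies on.
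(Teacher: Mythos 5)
Your proof is correct and follows essentially the same route as the paper: the substitution $\gamma_t = x_t/x^*$, the invariance $\gamma_t > 1$ via property~\eqref{eq:gamma_dec}, the mean value theorem identity~\eqref{eq:mvt}, and the two-sided bound $1-2\alpha < \varphi'(\xi) < 1-\alpha$ on the branch $\xi > 1$, iterated and rescaled by $x^*$. Your explicit separate treatment of $\beta = 0$ matches the paper's closing remark that the lemma covers this case since then $x^* = 0$ and $x_t = (1-\alpha)^t x_0$.
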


This is the main result concerning the convergence of $\beta$-determined sequences~\eqref{eq:beta_det_sequence}.
Note that Lemma~\ref{lem:beta_det_bounds} also covers the case of $\beta = 0$ because then $x^* = 0$ and $x_t = (1 - \alpha)^t x_0$.

\subsubsection{$\beta$-undetermined sequences convergence bounds}

Now, we can return back to the $\beta$-undetermined sequences~\eqref{eq:beta_undet_sequence} and derive its convergence bounds.
The following lemma allows to bound an arbitrary $\beta$-undetermined sequence with $\beta$-determined ones.

\begin{lemma}
\label{lem:beta_undet_bounds}
    For an arbitrary $\beta$-undetermined sequence of type~\eqref{eq:beta_undet_sequence} with $0 \le a \le \beta_t \le b < +\infty$ the following $\beta$-determined bounds hold.
    \begin{enumerate}
        \item Let $x_{a,t}$ be a $\beta$-determined sequence~\eqref{eq:beta_det_sequence} with $\beta = a$ and $x_{a,0} = x_0$. 
        Then $x_{a,t} \le x_t,\, \forall t$.
        \item Let $x_{b,t}$ be a $\beta$-determined sequence~\eqref{eq:beta_det_sequence} with $\beta = b$ and $x_{b,0} = x_0$. 
        Then, if $x_t > \sqrt{\frac{b}{1 - \alpha}},\, t = 0, \dots, T$, we have $x_t \le x_{b,t},\, t = 0, \dots, T + 1$.
    \end{enumerate}
\end{lemma}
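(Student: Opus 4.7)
The plan is to prove both parts by induction on $t$, leveraging two monotonicity facts about the map $\varphi_\beta(x) = (1-\alpha) x + \beta/x$: (i) for every fixed $x > 0$, $\varphi_\beta(x)$ is strictly increasing in $\beta$; and (ii) for every fixed $\beta > 0$, $\varphi_\beta(x)$ is decreasing on $(0, \sqrt{\beta/(1-\alpha)})$ and increasing on $(\sqrt{\beta/(1-\alpha)}, +\infty)$, by \eqref{eq:gamma_dec_func}--\eqref{eq:gamma_inc_func}. The whole argument is about combining these two facts; the only subtlety is ensuring that the compared iterates land in the increasing branch of $\varphi_a$ or $\varphi_b$ so that the comparison propagates through $\varphi$.

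For Part~1, the base case $t = 0$ is an equality, and the one-step case $t = 0 \to 1$ follows from $\beta$-monotonicity alone: $x_1 = \varphi_{\beta_0}(x_0) \ge \varphi_a(x_0) = x_{a,1}$. For the inductive step with $t \ge 1$, I would use the algebraic identity
\[
    \varphi_a(v) - \varphi_a(u) = (v - u)\Bigl[(1 - \alpha) - \frac{a}{v u}\Bigr],
\]
which is nonnegative whenever $v \ge u$ and $v u \ge a/(1-\alpha)$. Under the induction hypothesis $x_t \ge x_{a,t}$, it suffices to verify the product bound. Here the key estimate is that any $\varphi_\beta$-iterate with $\beta \ge a$ satisfies $\varphi_\beta(x) \ge 2\sqrt{\beta(1-\alpha)} \ge 2\sqrt{a(1-\alpha)}$ (rescaling \eqref{eq:gamma_min}), so for $t \ge 1$ both $x_t$ and $x_{a,t}$ lie above $2\sqrt{a(1-\alpha)}$. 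Their product is therefore at least $4 a (1-\alpha) \ge a/(1-\alpha)$, the final step using $4(1-\alpha)^2 \ge 1$, valid because $\alpha < 1/2$. This gives $\varphi_a(x_t) \ge \varphi_a(x_{a,t}) = x_{a,t+1}$, and combining with $x_{t+1} = \varphi_{\beta_t}(x_t) \ge \varphi_a(x_t)$ closes the induction.

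For Part~2, the hypothesis $x_t > \sqrt{b/(1-\alpha)}$ for $t \le T$ is designed precisely to park the iterates in the increasing branch of $\varphi_b$. The induction is then direct: the base case $t = 0$ is an equality; for the step, the hypothesis $x_t \le x_{b,t}$ together with $x_t > \sqrt{b/(1-\alpha)}$ yields $x_{b,t} \ge x_t > \sqrt{b/(1-\alpha)}$, so $\varphi_b$ is monotone increasing at both arguments. Chaining $\beta$-monotonicity and $x$-monotonicity,
\[
    x_{t+1} = \varphi_{\beta_t}(x_t) \le \varphi_b(x_t) \le \varphi_b(x_{b,t}) = x_{b,t+1}.
\]

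The main conceptual pitfall is the non-monotonicity of $\varphi_\beta$ in $x$, which breaks the naive shortcut ``pointwise domination of maps implies pointwise domination of orbits'': without further information, $\varphi_a(x_t) \ge \varphi_a(x_{a,t})$ need not follow from $x_t \ge x_{a,t}$. In Part~1 this is resolved automatically by the universal lower bound $2\sqrt{a(1-\alpha)}$ on iterates after one step, which for $\alpha < 1/2$ is large enough to render $\varphi_a$ order-preserving on the orbit pair; in Part~2 the same issue is handled by the explicit hypothesis on $x_t$, which is precisely why the conclusion is restricted to the window $t \le T+1$.
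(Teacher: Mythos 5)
Your proof is correct and takes essentially the same route as the paper's: an induction that combines monotonicity of $\varphi_\beta$ in $\beta$ with order-preservation of $\varphi_a$ (resp.\ $\varphi_b$) above its vertex, the vertex condition being secured for $t \ge 1$ by the same lower bound $2\sqrt{a(1-\alpha)} > \sqrt{\frac{a}{1-\alpha}}$ for $\alpha < 1/2$ that the paper obtains from~\eqref{eq:gamma_min} and~\eqref{eq:gamma_inc_func}. Your secant identity and the product bound $x_t x_{a,t} \ge \frac{a}{1-\alpha}$ are just an explicit rewriting of that monotonicity step, and your Part~2 is exactly the ``proved similarly'' case the paper leaves implicit.
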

\begin{proof}
    We will prove the first statement since the second one can be proved similarly. 
    
    Let $\sqrt{\frac{a}{1 - \alpha}} < x_{a, t} \le x_t$.
    Then the following inequalities hold: 
    \[
    x_{a, t + 1} \le (1 - \alpha) x_t + \frac{a}{x_t} \le x_{t + 1}.
    \]
    The first inequality holds since $x_{a, t + 1}$ is a monotonically increasing function of $x_{a, t}$ due to~\eqref{eq:gamma_inc_func}.
    The second one is valid because $a \le \beta_t$.
    
    Note that due to~\eqref{eq:gamma_min} and $\sqrt{\frac{\alpha}{1 - \alpha}} < 2 \sqrt{\alpha (1 - \alpha)}$, we have $\sqrt{\frac{a}{1 - \alpha}} < x_{a, t},\, t \ge 1$, plus, as $a \le \beta_0$, $x_{a, 1} \le x_1$, hence, induction is valid for all $t$ for the lower bound (in contrast with the upper bound case, where we explicitly demand $x_t > \sqrt{\frac{b}{1 - \alpha}}$ for $T$ consecutive timesteps).
\end{proof}

\begin{remark}
    \label{rem:beta_undet_upper_bound}
    An important special case when the upper bound $x_{b, t}$ is valid for all $t$ is if $\frac{\alpha}{1 - \alpha} \sqrt{b} \le \sqrt{a}$ and $x_0 > \sqrt{\frac{b}{\alpha}}$.
    Then, while $x_t \ge \sqrt{\frac{b}{\alpha}} > \sqrt{\frac{b}{1 - \alpha}}$ the bound is valid due to the second statement of the lemma.
    As soon as $x_t$ crosses the $\sqrt{\frac{b}{\alpha}}$ threshold, it can never ``hop'' over it again due to~\eqref{eq:gamma_below} and $x_t \ge x_{a, t} > \sqrt{\frac{a}{\alpha}} \ge \frac{\sqrt{\alpha b}}{1 - \alpha},\, \forall t$; at the same time, $x_{b, t} > \sqrt{\frac{b}{\alpha}},\, \forall t$ due to~\eqref{eq:gamma_dec}.
\end{remark}

Based on the convergence results of $\beta$-determined sequences, the following corollary allows estimating the convergence rates of $\beta$-undetermined sequences.

\begin{corollary}
    \label{cor:beta_undet_cr_bounds}
    Given Lemma~\ref{lem:beta_det_bounds}, Lemma~\ref{lem:beta_undet_bounds}, and the reasoning from Remark~\ref{rem:beta_undet_upper_bound}, we obtain the following bounds on convergence rates of an arbitrary $\beta$-undetermined sequence~\eqref{eq:beta_undet_sequence}:
    \begin{enumerate}
        \item if $x_0 > \sqrt{\frac{a}{\alpha}}$, then $(1 - 2 \alpha)^t \left(x_0 - \sqrt{\frac{a}{\alpha}}\right) \le x_t - \sqrt{\frac{a}{\alpha}},\, \forall t$;
        \item if $x_0 > \sqrt{\frac{b}{\alpha}}$, then $x_t - \sqrt{\frac{b}{\alpha}} \le (1 - \alpha)^t \left(x_0 - \sqrt{\frac{b}{\alpha}}\right)$ while $x_t \ge \frac{\sqrt{\alpha b}}{1 - \alpha}$.
    \end{enumerate}
\end{corollary}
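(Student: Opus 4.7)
My plan is to establish both statements by sandwiching the $\beta$-undetermined sequence between two $\beta$-determined sequences (using Lemma~\ref{lem:beta_undet_bounds}) and then invoking the geometric convergence rates of those $\beta$-determined sequences (using Lemma~\ref{lem:beta_det_bounds}). This is a clean two-step chaining argument, and the corollary statement itself suggests exactly this structure.

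For part 1, I would introduce the auxiliary sequence $x_{a,t}$ defined by~\eqref{eq:beta_det_sequence} with $\beta=a$ and $x_{a,0}=x_0$. Its stationary point is $x^*_a=\sqrt{a/\alpha}$ and by hypothesis $x_0 > x^*_a$, so Lemma~\ref{lem:beta_det_bounds} yields the lower convergence bound $x_{a,t}-x^*_a \ge (1-2\alpha)^t (x_0-x^*_a)$. Applying the first statement of Lemma~\ref{lem:beta_undet_bounds} gives $x_t\ge x_{a,t}$ for all $t$. Subtracting $x^*_a$ from both sides and chaining the two inequalities delivers the claim.

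For part 2, I would introduce $x_{b,t}$, the $\beta$-determined sequence with $\beta=b$ and $x_{b,0}=x_0$, whose stationary point is $x^*_b=\sqrt{b/\alpha}$. Since $x_0 > x^*_b$, Lemma~\ref{lem:beta_det_bounds} gives $x_{b,t}-x^*_b \le (1-\alpha)^t (x_0-x^*_b)$. The second statement of Lemma~\ref{lem:beta_undet_bounds} provides $x_t \le x_{b,t}$ conditionally, so long as its precondition $x_t>\sqrt{b/(1-\alpha)}$ remains satisfied; combining with the previous bound then gives the stated upper estimate for $x_t-x^*_b$.

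The delicate step I expect to require the most care is explaining why the ``while'' clause can be written as $x_t \ge \sqrt{\alpha b}/(1-\alpha)$ rather than the nominally stronger $x_t > \sqrt{b/(1-\alpha)}$ demanded directly by Lemma~\ref{lem:beta_undet_bounds}. I plan to handle this by appealing to the reasoning in Remark~\ref{rem:beta_undet_upper_bound}: using the lower bound on $x_t$ just established in part 1 together with the hopping and monotonicity properties~\eqref{eq:gamma_below} and~\eqref{eq:gamma_dec}, one sees that once $x_t$ drops below $x^*_b$ it can never return above, so the relevant threshold governing when the upper sandwich inequality persists is indeed the smaller value $\sqrt{\alpha b}/(1-\alpha)$. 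The only bookkeeping point is being careful with iteration indices when the sandwich inequality is first invoked versus when it is propagated forward.
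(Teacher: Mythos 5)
Your proposal is correct and takes essentially the same route the paper intends: the corollary is obtained by sandwiching the $\beta$-undetermined sequence with the $\beta$-determined sequences of Lemma~\ref{lem:beta_undet_bounds}, applying the rates of Lemma~\ref{lem:beta_det_bounds}, and handling the persistence of the upper bound below $\sqrt{b/\alpha}$ via the hopping reasoning of Remark~\ref{rem:beta_undet_upper_bound}. The only slight imprecision is your appeal to the lower bound from part 1, which without the extra condition $\frac{\alpha}{1-\alpha}\sqrt{b}\le\sqrt{a}$ does not by itself yield $x_t \ge \frac{\sqrt{\alpha b}}{1-\alpha}$; since the ``while'' clause supplies that threshold directly, your argument through properties~\eqref{eq:gamma_below} and~\eqref{eq:gamma_dec} closes the matter exactly as the paper's remark does.
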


Our final important result about the $\beta$-undetermined sequences convergence is a case of convergence to the interval determined by the stationary points of the bounding $\beta$-determined sequences $x_{a, t}$ and $x_{b, t}$.
We formulate it in the following proposition (see Figure~\ref{fig:beta_undet_interval} for an illustration).

\begin{proposition}
    \label{prop:beta_undet_interval}
    An arbitrary $\beta$-undetermined sequence~\eqref{eq:beta_undet_sequence}, given $\frac{\alpha}{1 - \alpha} \sqrt{b} \le \sqrt{a}$, converges to the following interval:
    \[
        \sqrt{\frac{a}{\alpha}} \le x_t \le \sqrt{\frac{b}{\alpha}},\, t \gg 1.\footnote{These bounds are, in general, asymptotic, so, for complete correctness, $t \gg 1$ must be substituted with $t \to \infty$; however, excluding the degenerate cases, we can often observe that $x_t$ reaches the interval in finite time.}
    \]
    Furthermore, if $x_0 > \sqrt{\frac{b}{\alpha}}$, then $x_t$ converges to the interval linearly in $\mathcal{O}(1 / \alpha)$ time.
\end{proposition}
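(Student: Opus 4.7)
The plan is to sandwich the $\beta$-undetermined iterates $x_t$ between two $\beta$-determined sequences $x_{a,t}$ and $x_{b,t}$, both initialized at $x_0$ with $\beta=a$ and $\beta=b$ respectively, and transfer the convergence of these bounding sequences to their stationary points $\sqrt{a/\alpha}$ and $\sqrt{b/\alpha}$ (via Lemma~\ref{lem:beta_det_bounds}) to the original sequence. The hypothesis $\frac{\alpha}{1-\alpha}\sqrt{b}\le\sqrt{a}$ is exactly what is needed to propagate the upper bound past the point where $x_t$ first enters the target interval; rewriting it as $\sqrt{\alpha b}/(1-\alpha)\le\sqrt{a/\alpha}$ makes its role in the proof transparent.

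For the lower bound, the first statement of Lemma~\ref{lem:beta_undet_bounds} gives $x_{a,t}\le x_t$ unconditionally. Lemma~\ref{lem:beta_det_bounds}, supplemented (if $x_0<\sqrt{a/\alpha}$) by the monotonicity properties~\eqref{eq:gamma_inc}--\eqref{eq:gamma_min}, drives $x_{a,t}$ into an arbitrarily small neighborhood of $\sqrt{a/\alpha}$, yielding $x_t\ge\sqrt{a/\alpha}-o(1)$ for $t\gg 1$. In view of the rewriting of the hypothesis, this also furnishes the auxiliary eventual lower bound $x_t\ge\sqrt{\alpha b}/(1-\alpha)$, which powers the upper-bound argument below.

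For the upper bound I would follow the strategy sketched in Remark~\ref{rem:beta_undet_upper_bound}. Starting from the case $x_0>\sqrt{b/\alpha}$, the second statement of Lemma~\ref{lem:beta_undet_bounds} gives $x_t\le x_{b,t}$ as long as $x_t>\sqrt{b/(1-\alpha)}$, which is automatic while $x_t>\sqrt{b/\alpha}$ because $\alpha<1-\alpha$. Since $x_{b,t}\searrow\sqrt{b/\alpha}$ by Lemma~\ref{lem:beta_det_bounds}, the iterate $x_t$ eventually crosses $\sqrt{b/\alpha}$ from above, and the crux is to show that it cannot hop back over. For this I would use the pointwise majorization $x_{t+1}\le h(x_t)$ with $h(y)=(1-\alpha)y+b/y$, combined with property~\eqref{eq:gamma_below} applied to $h$ (whose stationary point is $\sqrt{b/\alpha}$), which yields $h(y)<\sqrt{b/\alpha}$ for every $y\in(\sqrt{\alpha b}/(1-\alpha),\sqrt{b/\alpha})$; the previously established lower bound $x_t\ge\sqrt{\alpha b}/(1-\alpha)$ excludes the remaining regime, so the upper bound persists for all subsequent $t$. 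General initializations reduce to this case, because $x_t$ is eventually driven into the same region by the $x_{a,t}$ lower bound and by $h$-majorization from above.

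For the linear-rate claim under $x_0>\sqrt{b/\alpha}$ I would invoke Corollary~\ref{cor:beta_undet_cr_bounds}, which couples the upper-bound estimate $x_t-\sqrt{b/\alpha}\le(1-\alpha)^t(x_0-\sqrt{b/\alpha})$ with a matching lower-bound estimate; together they give convergence of $x_t$ into $[\sqrt{a/\alpha},\sqrt{b/\alpha}]$ up to accuracy $\varepsilon$ in $\mathcal{O}(\alpha^{-1}\log(1/\varepsilon))$ steps, i.e., the announced $\mathcal{O}(1/\alpha)$ linear-rate regime. The main obstacle I expect is the bookkeeping in the non-hopping step of the preceding paragraph: one must simultaneously track the lower bound supplied by $x_{a,t}$, exploit the identity $\sqrt{\alpha b}/(1-\alpha)\le\sqrt{a/\alpha}$ implied by the hypothesis, and verify that the map $h$ contracts past $\sqrt{b/\alpha}$ uniformly in $t$.
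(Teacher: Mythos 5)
Your proof is correct and follows essentially the same route as the paper's: the lower bound via the $\beta$-determined minorant $x_{a,t}$ from Lemma~\ref{lem:beta_undet_bounds}, the upper bound via the no-hopping argument of Remark~\ref{rem:beta_undet_upper_bound} using property~\eqref{eq:gamma_below} and the hypothesis $\frac{\alpha}{1-\alpha}\sqrt{b}\le\sqrt{a}$, and the linear $\mathcal{O}(1/\alpha)$ rate via Corollary~\ref{cor:beta_undet_cr_bounds}. Your explicit majorization by $h(y)=(1-\alpha)y+b/y$ merely spells out the same mechanism the paper invokes, so no substantive difference remains.
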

\begin{proof}
    Due to the first statement of Lemma~\ref{lem:beta_undet_bounds}, $x_t \ge x_{a, t} \to \sqrt{\frac{a}{\alpha}}$, hence, we deduce that the lower bound will eventually hold for $t \to \infty$.
    Since $\frac{\alpha}{1 - \alpha} \sqrt{b} \le \sqrt{a}$ and due to the reasoning in Remark~\ref{rem:beta_undet_upper_bound}, when $\sqrt{\frac{a}{\alpha}} \le x_t$ is fulfilled, the series either stays in the stated interval (if $x_t \le \sqrt{\frac{b}{\alpha}}$) and never crosses it or approaches it from above thanks to the upper $\beta$-deterministic bounding sequence $x_t \le x_{b, t} \to \sqrt{\frac{b}{\alpha}}$, so the upper bound is also (asymptotically) valid.
    
    If $x_0 > \sqrt{\frac{b}{\alpha}}$, Corollary~\ref{cor:beta_undet_cr_bounds} allows us to enclose $x_t$ (while it is above $\sqrt{\frac{b}{\alpha}}$) between two linear sequences converging to $\sqrt{\frac{a}{\alpha}}$ and $\sqrt{\frac{b}{\alpha}}$, respectively, with one-minus-rate proportional to $\alpha$.
    This is consistent with the convergence time $\mathcal{O}(1 / \alpha)$ since the convergence time of linear sequences is inversely proportional to the one-minus-rate value.
\end{proof}

\begin{figure}
  \centering
  \includegraphics[width=0.8\textwidth]{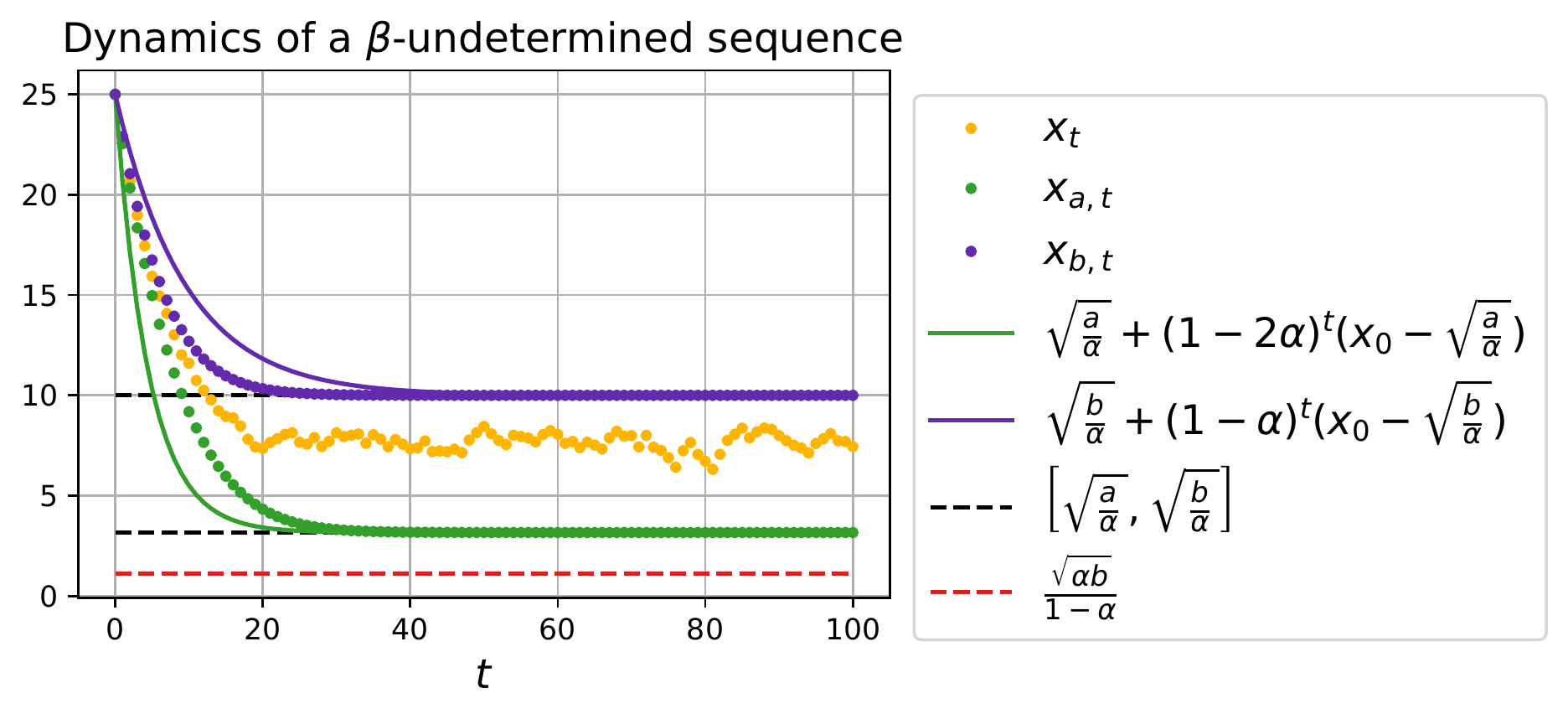}
  \caption{$\beta$-undetermined sequence~\eqref{eq:beta_undet_sequence} convergence to the $\left[ \sqrt{\frac{a}{\alpha}}, \sqrt{\frac{b}{\alpha}} \right]$ interval (Proposition~\ref{prop:beta_undet_interval}). Setting: $\alpha = 0.1$, $a = 1$, $b = 10$, $\beta_t \sim \mathcal{U}(a, b)$.}
  \label{fig:beta_undet_interval}
\end{figure}

\subsection{Proof of Proposition~\ref{prop:jump_time}}
\label{app:jump_time}

We prove Proposition~\ref{prop:jump_time} using the general convergence theory for so-called $\beta$-undetermined recurrent sequences of type $x_{t+1} = (1 - \alpha) x_t + \frac{\beta_t}{x_t}$, where $0 < \alpha < 0.5$ and $0 \le a \le \beta_t \le b < +\infty,\, \forall t$ (see Appendix~\ref{app:beta_undet_sequence}).
Note that the parameters norm dynamics~\eqref{eq:si_pnorm_dyn} is a special case of such a sequence with $x_t \coloneqq \rho_t^2$, $\beta_t \coloneqq \eta^2 \tilde{g}_t^2$, $a \coloneqq \eta^2 \ell^2$, $b \coloneqq \eta^2 L^2$, and $\alpha \coloneqq 2 \eta \lambda$ (recall that we suppress $\mathcal{O}\left((\eta \lambda)^2\right)$ terms).

\begin{proof}
    Denote $\kappa = \sqrt{\frac{\eta}{2 \lambda}}$.
    
    In the notation of $\beta$-undetermined sequences, the condition $\rho_0^2 > \kappa \ell$ translates into $x_0 > \sqrt{\frac{a}{\alpha}}$.
    Thus, by applying Corollary~\ref{cor:beta_undet_cr_bounds}, we can bound the convergence of parameters norm from below with the following linear sequence:
    \[
        \kappa \ell + (1 - 4 \eta \lambda)^t \left(\rho_0^2 - \kappa \ell\right) \le \rho_t^2.
    \]
    The necessary $\delta$-jump condition~\eqref{eq:jump_nec} can be equivalently reformulated as an upper bound on $\delta$:
    \[
        \kappa \ell < \frac{\eta L}{\sqrt{2 \delta}} \iff \delta < \eta \lambda \frac{L^2}{\ell^2}.
    \]
    If this condition is fulfilled, we can estimate the minimal time required for a $\delta$-jump~--- the moment when the lower bound on $\rho_t^2$ intersects the $\frac{\eta L}{\sqrt{2 \delta}}$ threshold.
    If $\rho_0^2 \le \frac{\eta L}{\sqrt{2 \delta}}$, obviously, $t_{\min} = 0$, else, by solving the following equation for $t$:
    \[
    \sqrt{\frac{\eta}{2 \lambda}} \ell + (1 - 4 \eta \lambda)^t \left(\rho_0^2 - \sqrt{\frac{\eta}{2 \lambda}} \ell\right) = \frac{\eta L}{\sqrt{2 \delta}},
    \]
    we obtain~\eqref{eq:t_min}.
    
    Again, $\rho_0^2 > \kappa L$ is equivalent to $x_0 > \sqrt{\frac{b}{\alpha}}$ and, due to Corollary~\ref{cor:beta_undet_cr_bounds}, the following upper bound on $\rho_t^2$ holds (at least while $\rho_t^2 \ge \kappa L$):
    \[
        \rho_t^2 \le \kappa L + (1 - 2 \eta \lambda)^t \left(\rho_0^2 - \kappa L \right).
    \]
    Now, if $\delta$ is so small that the sufficient condition for a jump~\eqref{eq:jump_suf} is fulfilled before $\rho_t^2$ converges to $\kappa L$, i.e.,
    \[\kappa L < \frac{\eta \ell}{\sqrt{2 \delta}} \iff \delta < \eta \lambda \frac{\ell^2}{L^2},\]
    we can similarly estimate the maximal required time for a $\delta$-jump~\eqref{eq:t_max} as the moment when the upper bound on $\rho_t^2$ intersects the $\frac{\eta \ell}{\sqrt{2 \delta}}$ threshold.
\end{proof}

\subsection{Proof of Proposition~\ref{prop:norm_eq}}
\label{app:norm_eq}

As in the previous section, we prove Proposition~\ref{prop:norm_eq} using the general theory on $\beta$-undetermined sequences (see Appendix~\ref{app:beta_undet_sequence}).
We remarked above that the parameters norm dynamics~\eqref{eq:si_pnorm_dyn} is a special case of such a sequence with parameters $a \coloneqq \eta^2 \ell^2$, $b \coloneqq \eta^2 L^2$, and $\alpha \coloneqq 2 \eta \lambda$.

\begin{proof}
   According to Proposition~\ref{prop:beta_undet_interval}, if for a $\beta$-undetermined sequence $x_t$ the condition $\frac{\alpha}{1 - \alpha} \sqrt{b} \le \sqrt{a}$ is fulfilled, then one can show that $x_t \in \left[\sqrt{\frac{a}{\alpha}}, \sqrt{\frac{b}{\alpha}}\right], t \gg 1$;
    furthermore, if $x_0 > \sqrt{\frac{b}{\alpha}}$, then $x_t$ converges to the interval linearly in $\mathcal{O}(1 / \alpha)$ time.
    For the parameters norm dynamics, the condition $\frac{\alpha}{1 - \alpha} \sqrt{b} \le \sqrt{a}$ is equivalent (up to $\mathcal{O}\left((\eta \lambda)^2\right)$ terms) to $2 \eta \lambda L \le \ell$ as $\frac{\alpha}{1 - \alpha} = \frac{2 \eta \lambda}{1 - 2 \eta \lambda} = 2 \eta \lambda + \mathcal{O}\left((\eta \lambda)^2\right)$.
    Now, if it holds, we can apply Proposition~\ref{prop:beta_undet_interval} and conclude the proof.
\end{proof}

\begin{remark}
    \label{rem:elr_eq}
    We can reformulate the same result in terms of the effective learning rate $\tilde{\eta}_t = \eta / \rho_t^2$:
    \[
    2 \eta \lambda L \le \ell \le \tilde{g}_t \le L \implies \frac{\sqrt{2\eta \lambda}}{L} \le \tilde{\eta}_t \le \frac{\sqrt{2\eta \lambda}}{\ell},\,t \gg 1.
    \]
\end{remark}

\subsection{Discussion on $2 \eta \lambda L \le \ell$ condition}
\label{app:ell_cond}
In this section, we discuss the assumption $2 \eta \lambda L \le \ell$ made in Proposition~\ref{prop:norm_eq}, implying that the lower and the upper effective gradient norm bounds must not differ too much.
First of all, we would like to remark that this condition is generally fulfilled in practice for small $\eta \lambda$ product even when the bounds $\ell$ and $L$ are taken globally, i.e., they satisfy $\ell \le \tilde{g}_t \le L,\, \forall t$.
We also note that~\citet{wan2020spherical} made a very close assumption in their main Theorem~1 (Assumption~3).
However, even if it is not fulfilled, our generalized parameters norm equilibrium result is still valid to some extent.

First, consider the case when $0 < \ell < 2 \eta \lambda L$.
Then, according to the general $\beta$-undetermined sequences theory presented in Appendix~\ref{app:beta_undet_sequence}, the lower bound $\kappa \ell \le \rho_t^2$ remains valid for large $t$.
If $\rho_t^2$ falls below $2 \eta \lambda L$, it can potentially ``hop'' over the upper bound of the interval $\kappa L$.
However, due to $\tilde{g}_t \le L$ and property~\eqref{eq:gamma_dec_func} of $\beta$-determined sequences (see Appendix~\ref{app:beta_det_sequence}) $\rho_t^2$ is still upper bounded by the value $(1 - \eta \lambda)^2 \kappa \ell + \frac{\eta^2 L^2}{\kappa \ell}$. 
Hence, globally, the parameters norm stays bounded even when $2 \eta \lambda L \le \ell$ does not hold.
Furthermore, according to the second statement of Corollary~\ref{cor:beta_undet_cr_bounds}, once $\rho_t^2$ exceeds the $\kappa L$ value, it immediately starts converging to it again.
So the same $\left[\kappa \ell, \kappa L \right]$ interval of attraction is still preserved. 

Now, we argue that setting $\ell = 0$, i.e., bounding the effective gradient norm from below with zero, is vacuous.\footnote{Excluding, perhaps, some exceptional degenerate cases when the function and hyperparameters are chosen so that the dynamics converge to a stationary point in a finite number of steps.}
Again, we remark that the assumption about separating $\ell$ from zero was made, e.g., by~\citet{wan2020spherical}.
\citet{arora2018theoretical} show that effective gradients (in case of learning without WD) decay sublinearly, which by itself means that in finite time horizon, it is always reasonable to set $\ell > 0$.
Moreover, as we show, parameters norm evolves linearly, i.e., faster than the effective gradients; therefore, it must quickly acclimate to local $\ell$, $L$ changes and hence respect the boundaries from Proposition~\ref{prop:norm_eq}.
But even based on general results on gradient-based optimization, we anticipate that, in general, $\ell$ should not approach zero.
We can rewrite the expression for $\rho_t^2$~\eqref{eq:si_pnorm_dyn} in the following way: 
\begin{align}
    \rho_{t}^2 &=
    (1 - \eta \lambda)^2 \rho_{t-1}^2 + \eta^2 g_{t-1}^2 = 
    \dots = 
    (1 - \eta \lambda)^{2t} \rho_{0}^2 + \eta^2 \sum_{t' = 0}^{t-1} (1 - \eta \lambda)^{2(t-t'-1)} g_{t'}^2 = \\
    &= (1 - \eta \lambda)^{2t} \rho_{0}^2 + \eta^2 \frac{1 - (1 - \eta \lambda)^2}{1 - (1 - \eta \lambda)^{2t}} \bar{g}_t^2 \approx \{ t \gg 1 \} \approx (1 - \eta \lambda)^{2t} \rho_{0}^2 + C \bar{g}_t^2, \label{eq:pnorm_approx}
\end{align}
where $\bar{g}_t$ is an exponential moving average of the gradient norm and $C = 2 \eta^3 \lambda + \mathcal{O}\left((\eta \lambda)^2\right)$ is constant. 
It is well-known that for first-ordered methods, the lower gradient norm bound generally decays sublinearly~\cite{carmon2017lower}.
Note that the cosine between adjacent iterates~\eqref{eq:cos} depends only on the $g_t^2 / \rho_t^2$ ratio.
For large $t$, this ratio, due to~\eqref{eq:pnorm_approx}, is determined only by the $g_t^2 / \bar{g}_t^2$ ratio since the first term decays linearly, i.e., faster than $g_t^2$.
It is reasonable to conjecture that $g_t$ oscillates around its mean value $\bar{g}_t$ hence hindering stabilization of the training dynamics which, in turn, implies that the effective gradient does not vanish.
Thus, implying $\ell > 0$ seems to be a reasonable assumption.

\section{Experimental details}
\label{app:details}

{\bf Datasets and architectures.} 
We conduct experiments with two convolutional architectures, namely a three-layer convolutional neural network (ConvNet) and ResNet-18, on CIFAR-10~\cite{cifar10} and CIFAR-100~\cite{cifar100} datasets. We use the implementation of both architectures available at~\url{https://github.com/g-benton/hessian-eff-dim}. CIFAR datasets are distributed under the MIT license, and the code is under Apache-2.0 License. To make the majority of neural network weights scale-invariant, we insert additional BN layers according to Appendix C of~\citet{li2020exponential}.  We use the standard PyTorch initialization for all layers.
We use ResNet of standard width. For ConvNet, we use the width factor of 32 for fully scale-invariant networks on CIFAR-10 and the width factor of 64 for all experiments on CIFAR-100 and experiments with practical modifications on CIFAR-10.

{\bf Fully scale-invariant setup.}
Most of the experiments are conducted with the scale-invariant modifications of both architectures obtained using the approach of~\citet{li2020exponential}. In addition to inserting extra BN layers, we fix all non-scale-invariant weights, i.e., BN parameters and the last layer's parameters. For BN layers, we use zero mean and unit variance. We fix the bias vector at random initialization and the weight matrix at rescaled random initialization for the last layer. 
In most of the experiments, we rescale the last layer's weight matrix so that its norm equals 10, but we discuss other scales in Appendix~\ref{app:last_layer}.

{\bf Training.}
We train all networks using SGD with a batch size of 128 and various weight decays and learning rates. 
In the experiments with momentum, we use the momentum of 0.9. In the experiments with data augmentation, we use standard CIFAR augmentations: random crop (size: 32, padding: 4) and random horizontal flip. All models were trained on NVidia Tesla V100 or NVidia GeForce GTX 1080. Obtaining the results reported in the paper took approximately 1K GPU hours.

{\bf Full-batch GD experiments.} 
Full-batch GD training experiments are conducted on the 4.5K-sized random subset of the train dataset. The test set in this experiment consists of 5K randomly chosen test objects. 

{\bf Logging.} In all experiments except Figures~\ref{fig:one_cycle}, \ref{fig:one_cycle_gd}, and~\ref{fig:cos_bounds} we log all metrics after each epoch, computing train loss and its gradients by making an additional pass through the training dataset. 
We log all metrics after each (S)GD step in three specified figures, computing train loss and its gradients over a batch.

\section{Full-batch gradient descent}
\label{app:gd}

\begin{figure}
  \centering
  \includegraphics[width=\textwidth]{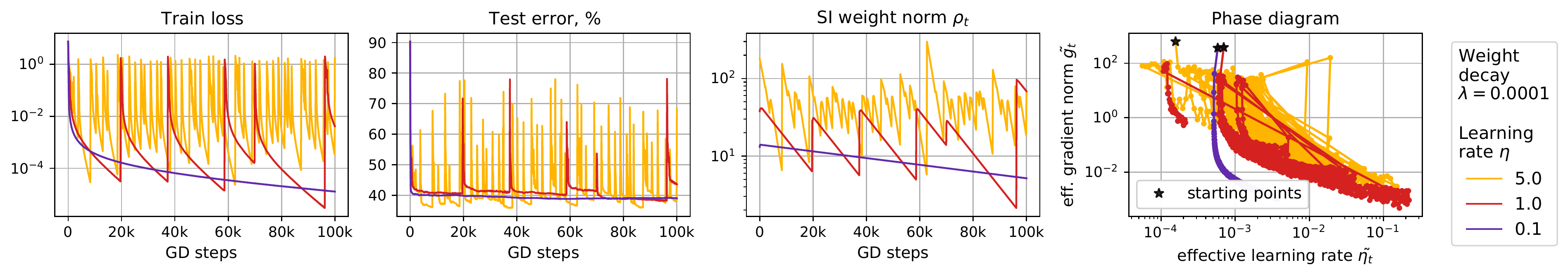}
  \caption{
  Periodic behavior of scale-invariant ConvNet on CIFAR-10 trained using full-batch GD with the weight decay of 0.0001 and different learning rates.
  }
  \label{fig:cycles_demo_gd}
\end{figure}

\begin{figure}
  \centering
  \includegraphics[width=0.8\textwidth]{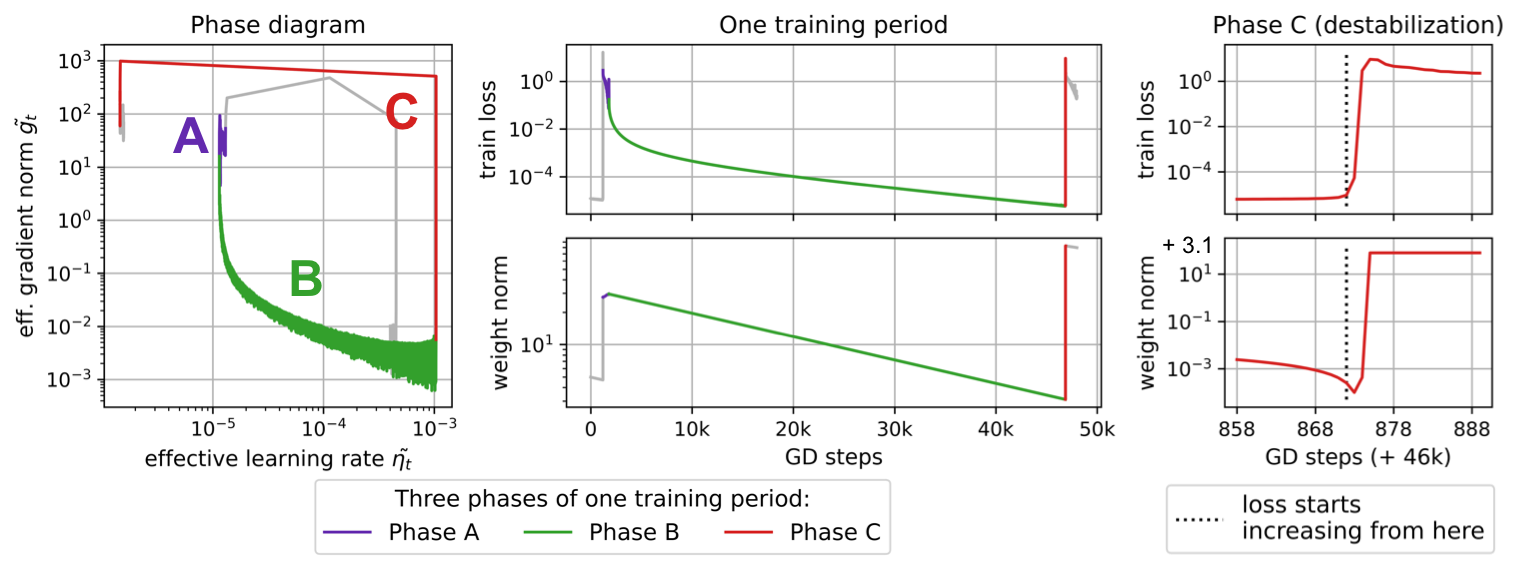} 
  \caption{
  A closer look at one training period for scale-invariant ConvNet on CIFAR-10 trained using full-batch GD with weight decay of 0.001 and the learning rate of 0.5. Three phases of the training period are highlighted. 
  }
  \label{fig:one_cycle_gd}
\end{figure}

\begin{figure}[t]
  \centering
  \centerline{
  \begin{tabular}{cc}
  {\small Fix weight norm at initialization} & {\small Fix weight norm before destabilization}  \\
  \includegraphics[width=0.5\textwidth]{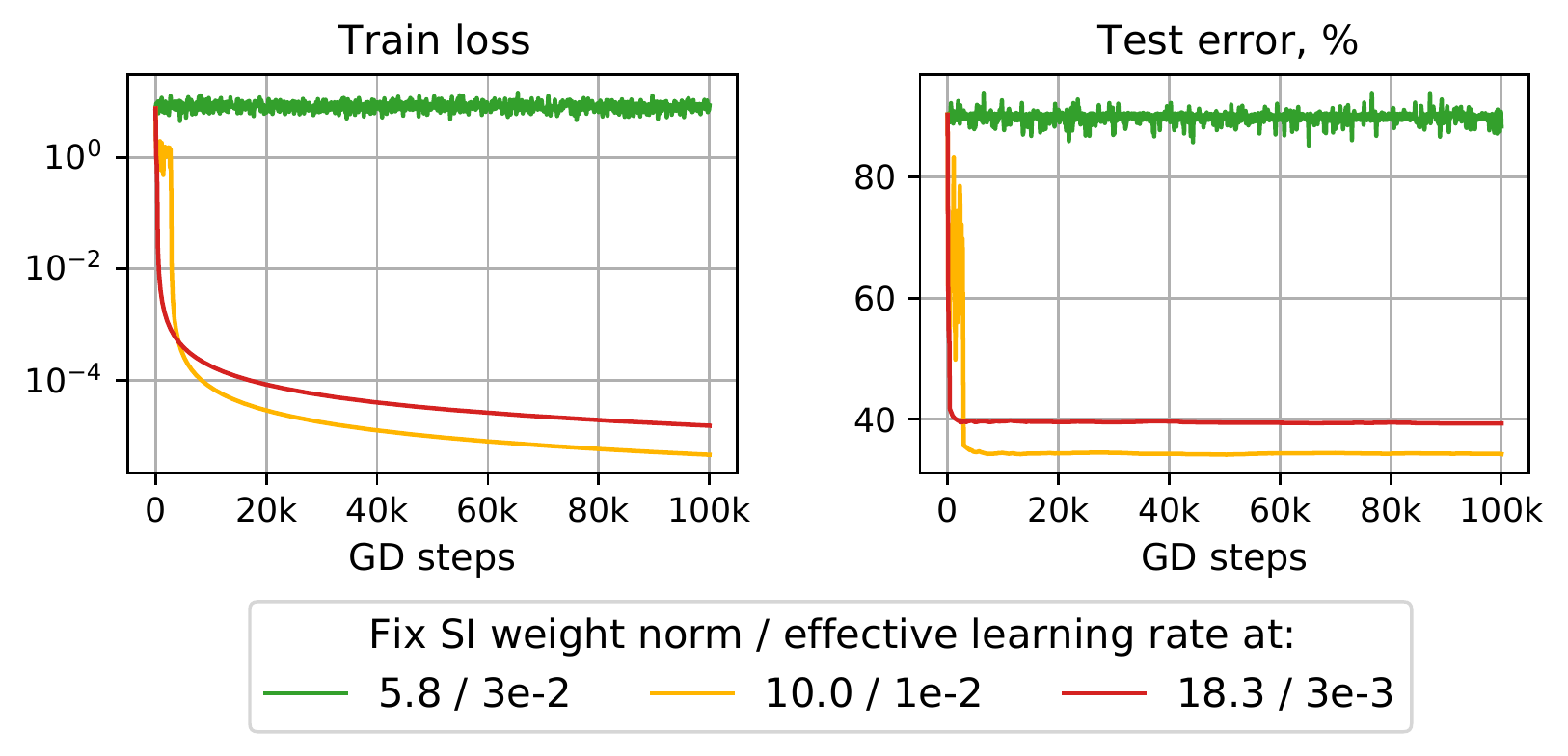} & \includegraphics[width=0.5\textwidth]{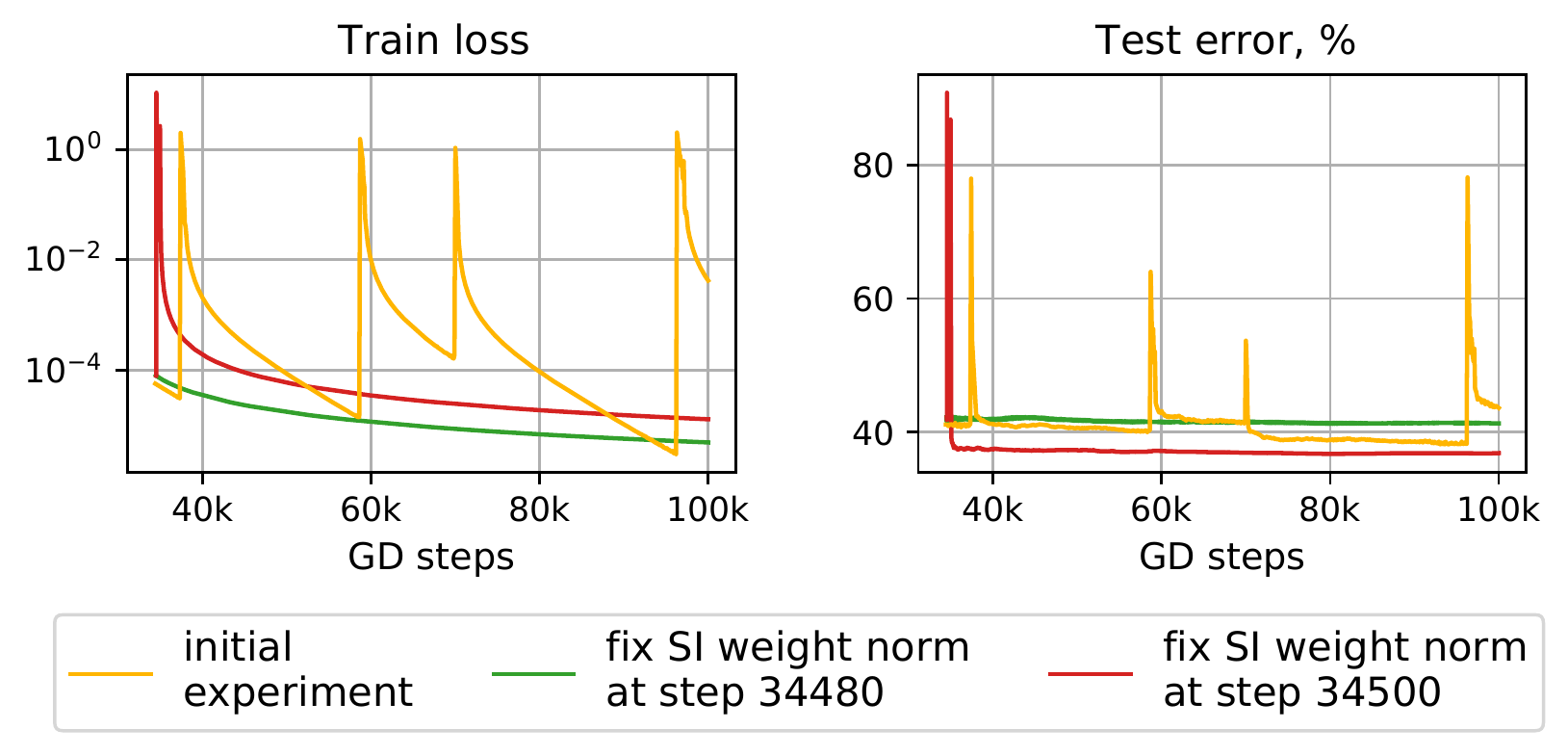}
  \end{tabular}}
  \caption{
  The absence of the periodic behavior for training with the fixed weight norm. Scale-invariant ConvNet on CIFAR-10 trained using full-batch GD with weight decay of 0.0001 and learning rate of 1.0. Left pair: the weight norm is fixed at random initialization of different scales. Right pair: the weight norm is fixed at some step of regular training before destabilization.
  }
  \label{fig:fix_elr_gd}
\end{figure}

In the main paper, we presented the periodic behavior results for SGD. In this section, we show that the periodic behavior is observed for full-batch GD training and hence is not a consequence of stochastic training. We replicate all experiments of Section~\ref{sec:1}: Figure~\ref{fig:cycles_demo_gd} visualizes training dynamics for different learning rate values, Figure~\ref{fig:one_cycle_gd} presents a closer look at one period of training (see also Figure~\ref{fig:cos_bounds} for the plots of cosines between adjacent steps), and Figure~\ref{fig:fix_elr_gd} replicates the ablation experiment with fixing the weight norm. All the effects discussed in the main text for the SGD case hold for the GD case.  We note that phase $B$ is longer for full-batch GD training because the absence of stochasticity allows stable training at lower train loss, and destabilization occurs later. 

\section{Bounds on the effective gradient norm and $\delta$-jumps}
\label{app:grad_bounds}
\begin{figure}[t]
  \centering
  \centerline{
 \begin{tabular}{cc}
 {\small SGD} & {\small Full-batch GD}\\
 \includegraphics[width=0.5\textwidth]{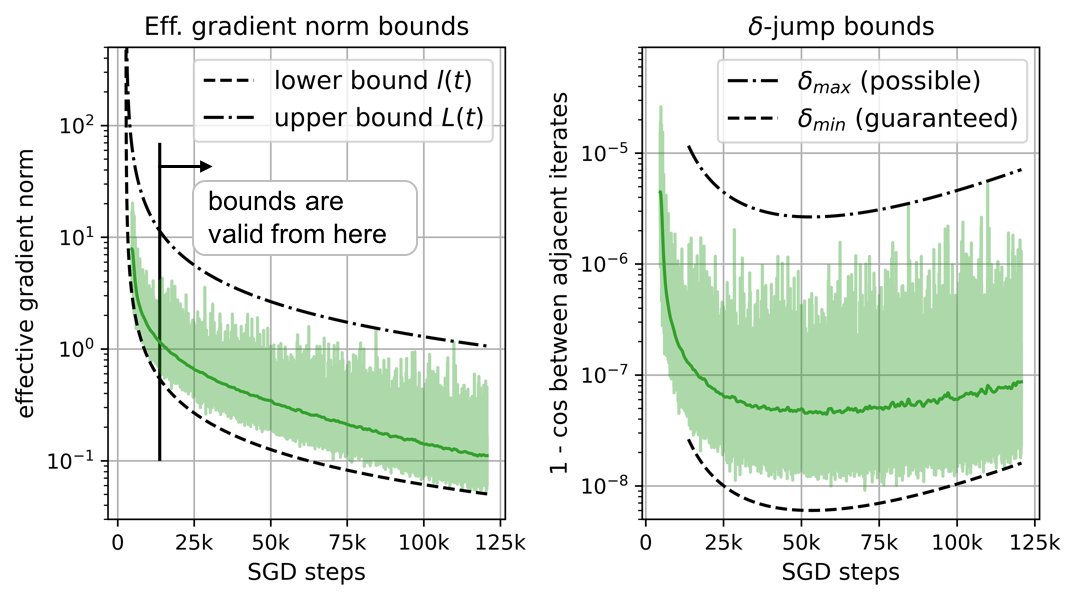} & \includegraphics[width=0.5\textwidth]{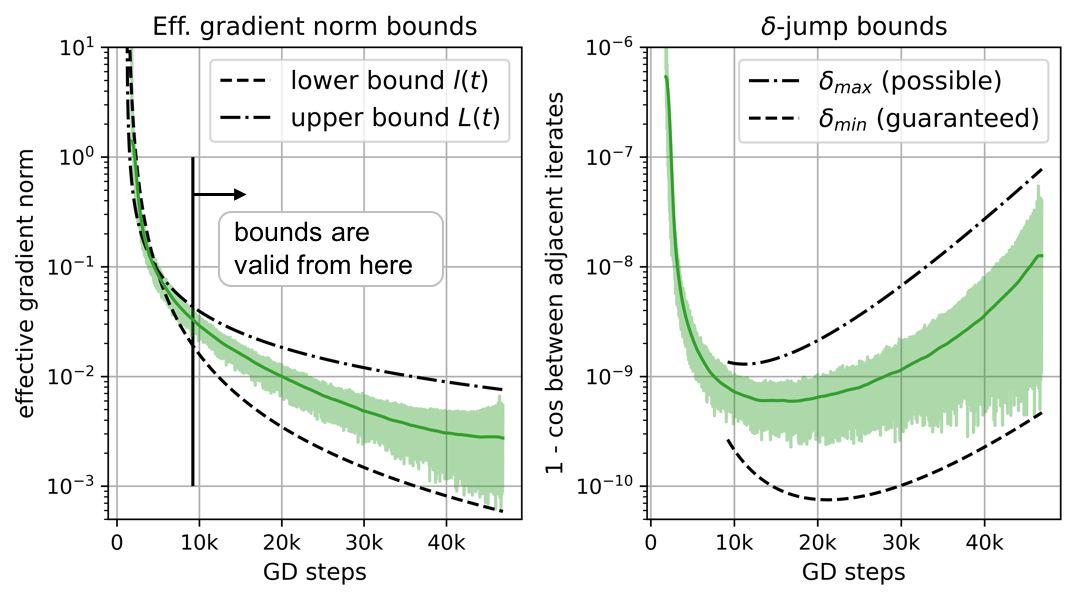}
  \end{tabular}}
  \caption{
  Effective gradient norm and cosine distance between weights at adjacent (S)GD steps, presented along with their smoothed trends. Phase $B$ of one period of training scale-invariant ConvNet on CIFAR-10 is shown. Weight decay~/~learning rate: 0.001~/~0.01 for SGD, 0.0001~/~0.5 for GD. $\delta$-jump bounds are obtained using the bounds on the effective gradient norm.
  }
  \label{fig:cos_bounds}
\end{figure}

In Section~\ref{sec:1}, we compared cosine distance between weights at adjacent SGD steps of phase $B$ with theoretically derived bounds for $\delta$-jumps from Section~\ref{sec:deltajumps}. In Figure~\ref{fig:cos_bounds}, right pair, we present a similar comparison for the full-batch GD case: the effect of both bounds and the cosine metric itself growing in the second half of the phase is even more prominent for the GD case than for SGD.
Below we describe how we choose the local bounds $\ell$ and $L$ on the effective gradient norm $\tilde{g}_t$ 
which are used in the theoretical bounds. All bounds are visualized in Figure~\ref{fig:cos_bounds}.

In both GD and SGD cases, we chose $\ell(t)$ and $L(t)$ as smooth functions of $t$. Note that taking such dynamical bounds does not contradict our theoretical results (see Remark~\ref{rem:grad_bounds}). For the SGD case, we chose  $\ell(t) = \frac{c}{t - t_0}$ and $L(t) = \frac{C}{t - t_0}$, where $t_0$ is the first iteration of the considered training period. For the GD case we used the same approach, but had to take $\ell(t) = \frac{c}{(t - t_0)^2}$ to better mimic the behavior of the lower envelope of the effective gradients norm. We handpick constants $0 < c < C$ and iteration $t_{\mathrm{valid}}$ separately for SGD and GD cases so that
\begin{equation}
    \label{eq:grad_bounds}
    \ell(t) \le \tilde{g}_t \le L(t)
\end{equation} 
for all $t\geqslant t_{\mathrm{valid}}$ in phase $B$.

\section{Optimization of common scale-invariant functions with weight decay}
\label{app:simplefunc}
\begin{figure}
  \centering
  \centerline{
 \begin{tabular}{c}
 {\small No weight decay~-- convergence} \\
 \includegraphics[width=\textwidth]{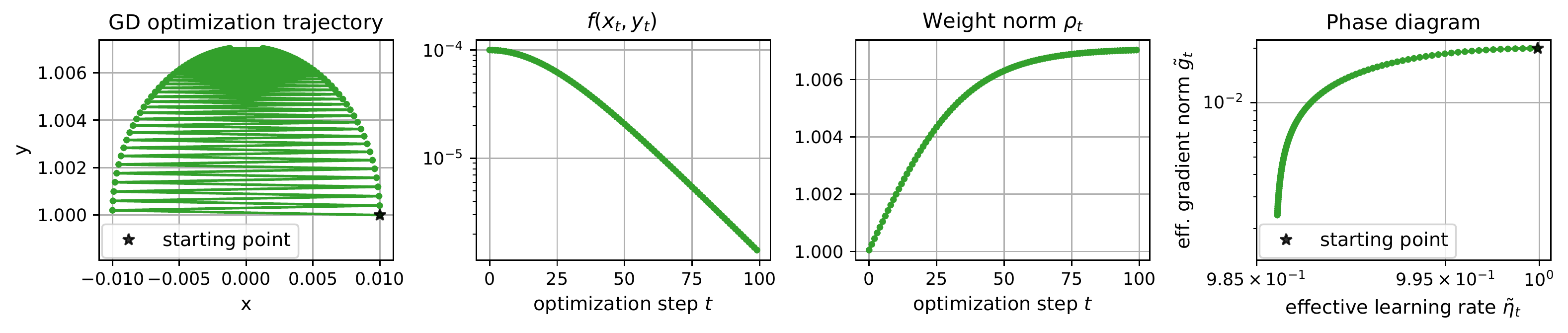} \\
 {\small Weight decay $\lambda = 0.01$~-- periodic behavior} \\
  \includegraphics[width=\textwidth]{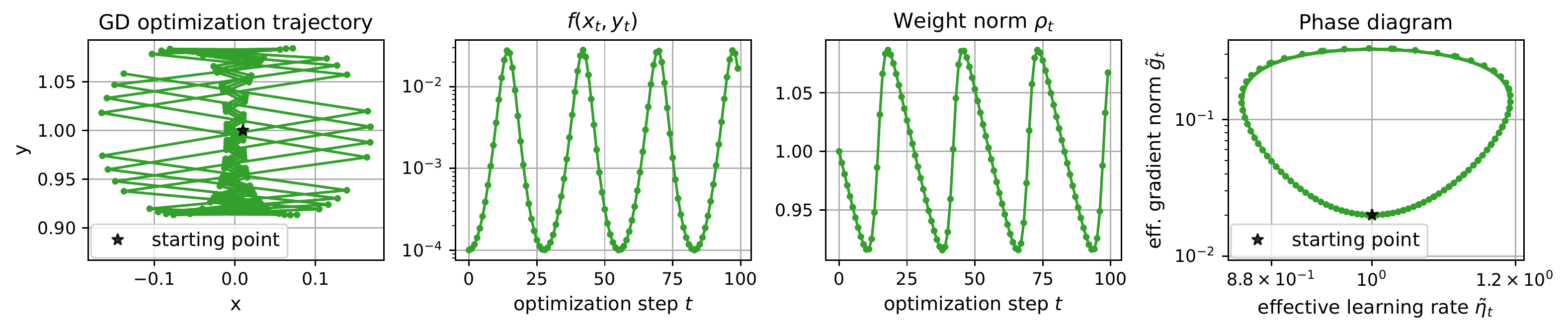} \\
  {\small Weight decay $\lambda = 0.01$~-- phases of a single period} \\
  \includegraphics[width=\textwidth]{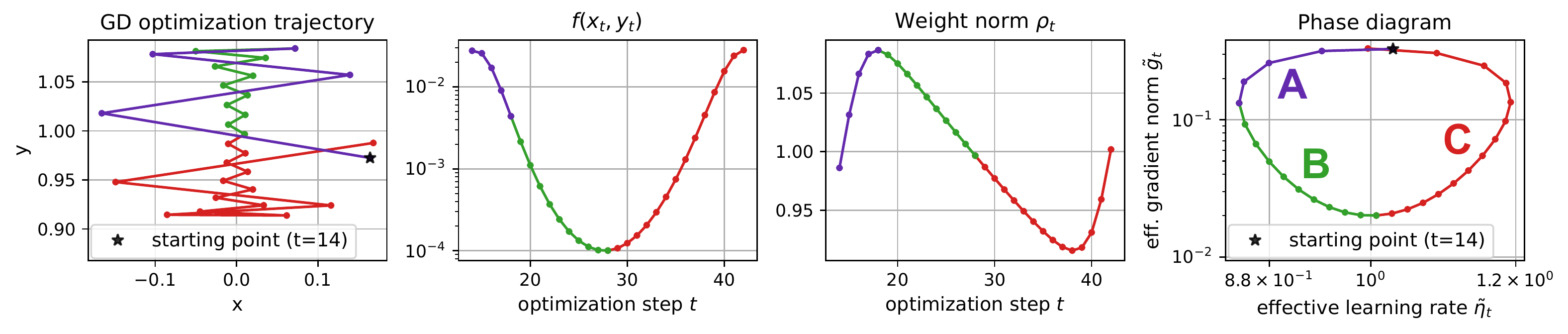} \\
  \end{tabular}}
  \caption{Minimization of a simple scale-invariant function $f(x,y) = x^2/(x^2+y^2)$ with and without weight decay. For all experiments the initial point $(x_0,y_0)=(0.01, 1.0)$, learning rate $\eta = 1$. }
  \label{fig:simplefunc}
\end{figure}

In this section, we show that periodic behavior may be observed not only when training neural networks but also during gradient decent optimization of common scale-invariant functions with weight decay and a constant learning rate. As an example we consider a function of two variables $f(x,y) = \frac{x^2}{x^2+y^2}$, which is naturally scale-invariant. The minimum value of $f$ equals $0$ and is achieved at any point with $x=0$. 

If we minimize $f$ without weight decay, the optimization procedure converges to a stationary point since its effective learning rate monotonically decays, as can be seen in the top row of Figure~\ref{fig:simplefunc}. This behavior accords with the results of~\citet{arora2018theoretical}. 

However, with weight decay we can observe the same periodicity of the optimization dynamics as demonstrated by experiments with neural networks (see the middle row of Figure~\ref{fig:simplefunc}). Moreover, in this case, the optimization experiences the same three phases in the period (see the bottom row of Figure~\ref{fig:simplefunc}, which is analogous to Figure~\ref{fig:one_cycle} in the main text). 

This confirms that the periodicity of optimization dynamics is a general property of scale-invariant functions optimized with weight decay and is not specific to neural networks.

\section{Influence of learning rate and weight decay on the periodic behavior of scale-invariant networks}

\subsection{Fixed learning rate -- weight decay product}
\label{app:fixed_lr_wd}
\begin{figure}
  \centering
  \centerline{
 \begin{tabular}{c}
 {\small Fixed product $\eta\times\lambda = 1e-3$} \\
 \includegraphics[width=\textwidth]{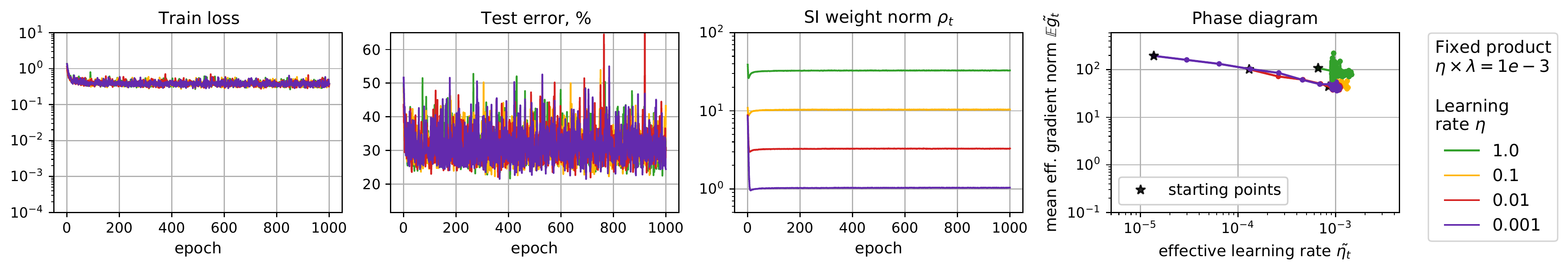} \\
 {\small Fixed product $\eta\times\lambda = 1e-4$} \\
  \includegraphics[width=\textwidth]{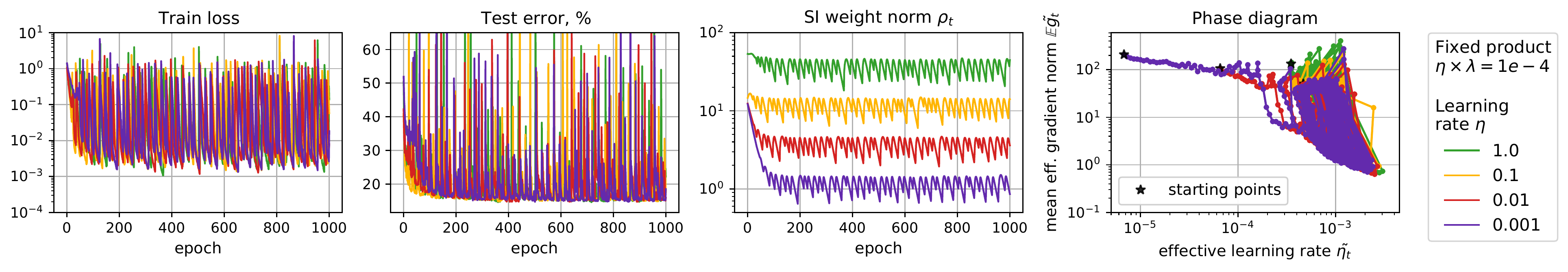} \\
  {\small Fixed product $\eta\times\lambda = 1e-5$} \\
  \includegraphics[width=\textwidth]{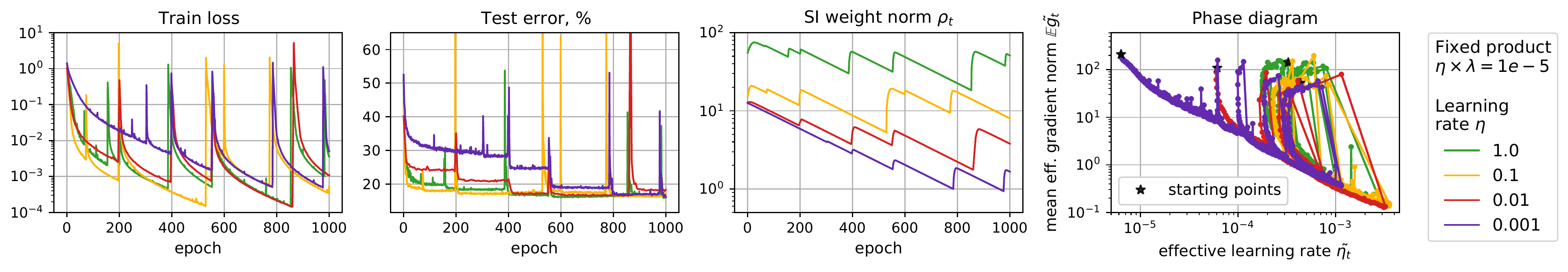} \\
  {\small Fixed product $\eta\times\lambda = 1e-6$} \\
  \includegraphics[width=\textwidth]{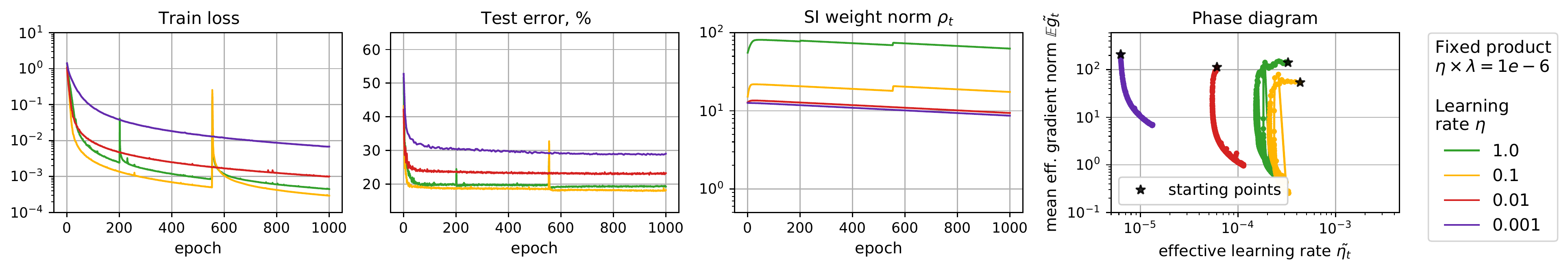} \\
  \end{tabular}}
  \caption{Training dynamics of scale-invariant ConvNet on CIFAR-10 trained with fixed learning rate -- weight decay products. Axes limits are the same in each column for convenient comparison.}
  \label{fig:fix_product_convnet_cifar10}
\end{figure}

\begin{figure}
  \centering
  \centerline{
 \begin{tabular}{cc}
 {\small Fixed product $\eta\times\lambda = 1e-4$} & {\small Fixed product $\eta\times\lambda = 1e-5$}\\
 \includegraphics[width=0.5\textwidth]{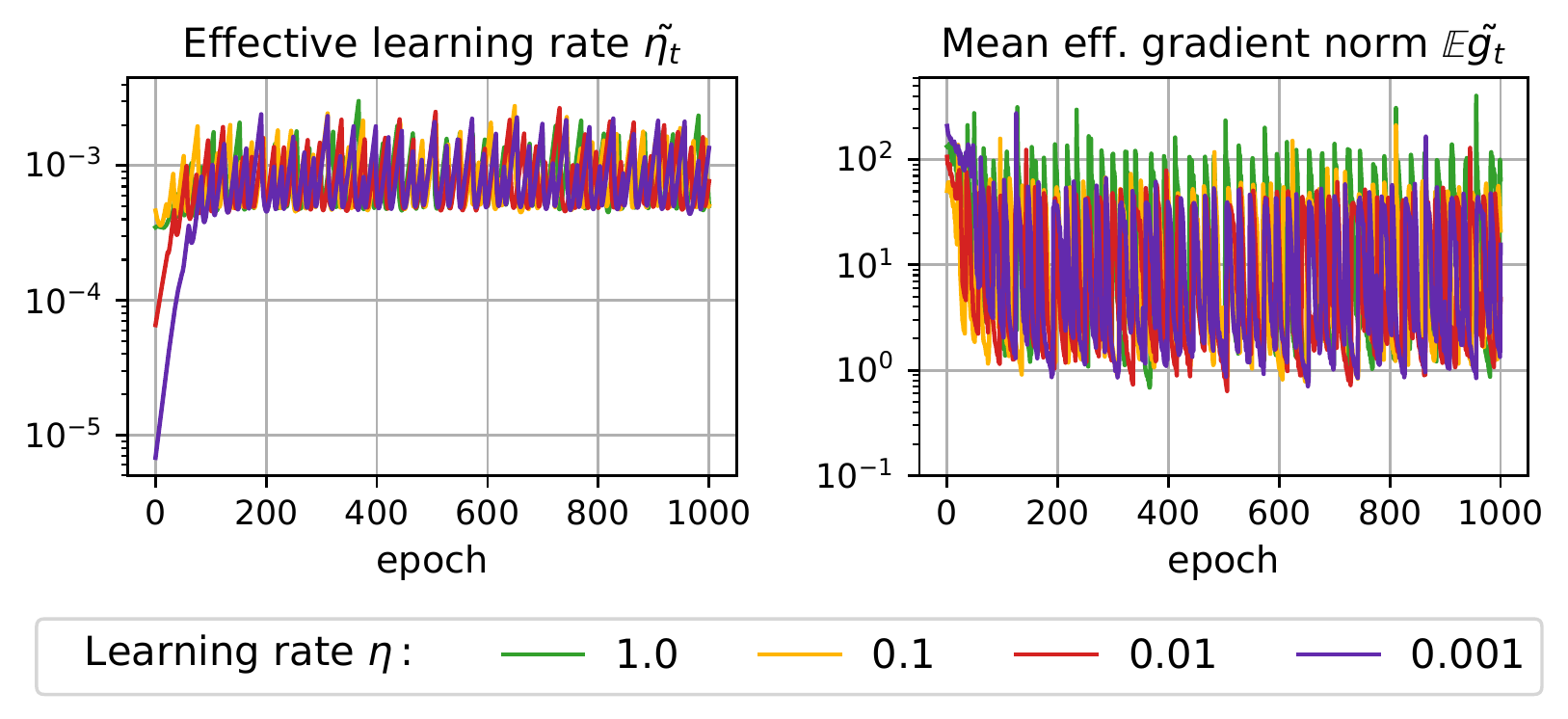} & \includegraphics[width=0.5\textwidth]{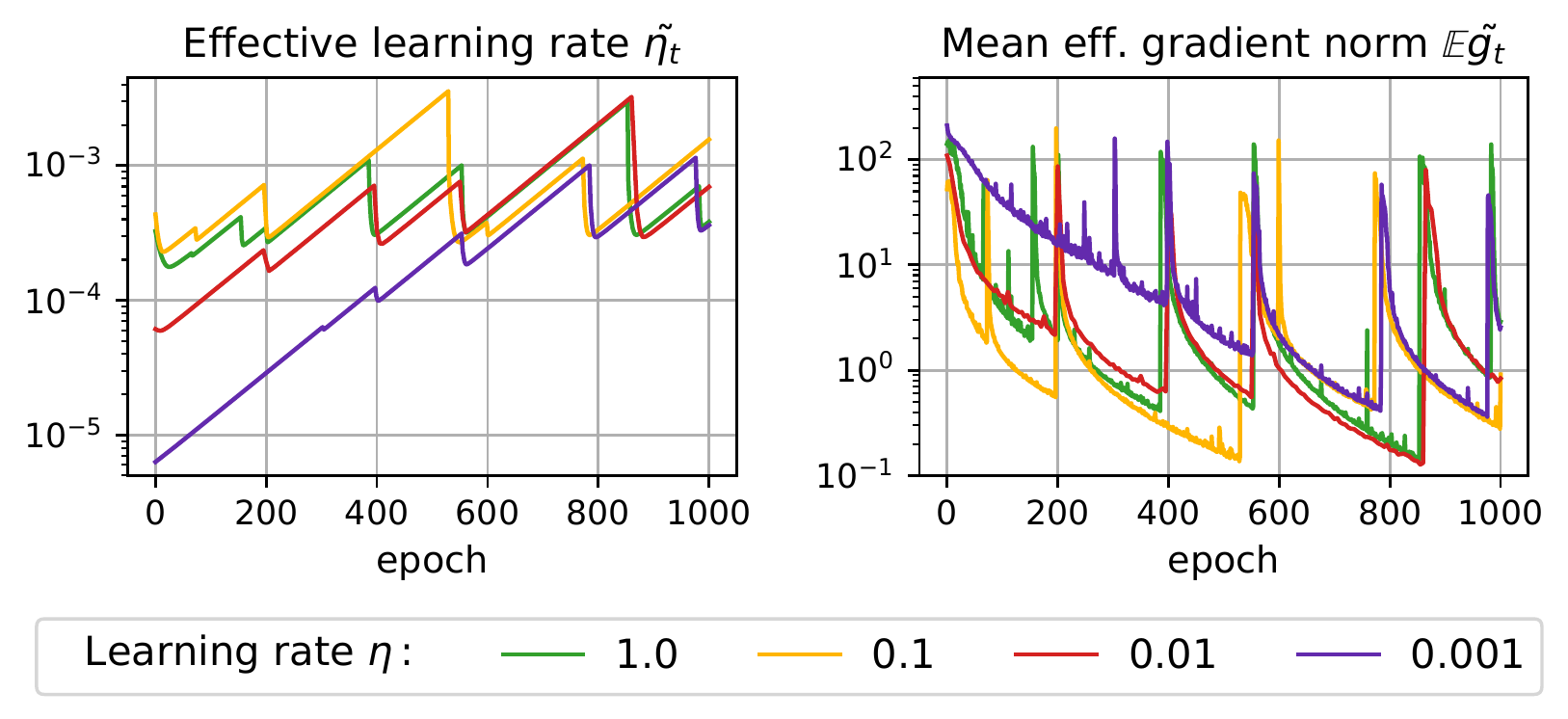}
  \end{tabular}}
  \caption{A closer look at dynamics of the effective learning rate and mean effective gradient norm of scale-invariant ConvNet on CIFAR-10 trained with two different fixed learning rate -- weight decay products. Axes limits are the same for corresponding metrics for convenient comparison.}
  \label{fig:fix_product_elr_grad}
\end{figure}

In this section, we discuss the effect of the learning rate -- weight decay product on the training process. Figure~\ref{fig:fix_product_convnet_cifar10} visualizes training progress for different values of the product (plot rows) and variable ratio of two specified hyperparameters (different lines in each row). We observe that training converges to similar consistent behavior with the fixed learning rate -- weight decay product. Specifically, the frequency of the periods, the minimal achieved train loss and test error, and the ranges of the effective gradient norm and the effective learning rate are similar across different lines in one row. The last-mentioned ranges are visualized in more detail for selected setups in Figure~\ref{fig:fix_product_elr_grad}. The described empirical results agree with Remark~\ref{rem:elr_eq} in Appendix~\ref{app:norm_eq}. Mainly, the remark states that with a fixed learning rate -- weight decay product and bounded effective gradient norm, training converges to a bounded effective learning rate, and the effective learning rate bounds depend only on the effective gradient norm bounds. In practice, we observe that the last-mentioned bounds are similar across different ratios of weight decay and learning rate (see Figure~\ref{fig:fix_product_elr_grad}). Thus, the effective learning rate bounds are also similar across different ratios (see Figure~\ref{fig:fix_product_elr_grad}).  

However, although the characteristics of the \emph{consistent} periodic behavior are similar across different ratios of the learning rate and the weight decay when their product is fixed, the length of the \emph{warm-up} stage may vary. The reason is that we use the standard initialization for all networks, i.e., the same initial weight norm for all combinations of hyperparameters. At the same time, given different ratios of weight decay and learning rate, the weight norm converges to different ranges (see Figure~\ref{fig:fix_product_convnet_cifar10} and Proposition~\ref{prop:norm_eq}). The final weight norm may substantially differ from the initial weight norm, and the larger the difference, the longer the warm-up stage.

We note, however, that, according to Proposition~\ref{prop:hyp_resc_inv} in Appendix~\ref{app:hyp_resc_inv}, if we fixed the direction of initialization (i.e., the point on the unit sphere) and then appropriately rescaled it (proportionally to the square root of the learning rate), the training dynamics would be exactly the same for different ratios of learning rate and weight decay, given their product is unchanged, including the warm-up stage. 

\subsection{Fixed weight decay and different learning rates}
\label{app:fully_invariant}
\begin{figure}
  \centering
  \centerline{
 \begin{tabular}{c}
 {\small ConvNet on CIFAR-100} \\
  \includegraphics[width=\textwidth]{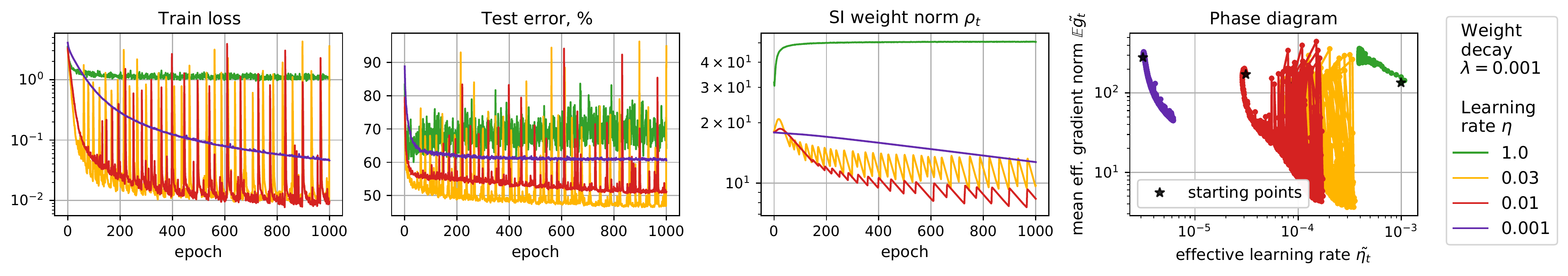}\\
  {\small ResNet-18 on CIFAR-10} \\
  \includegraphics[width=\textwidth]{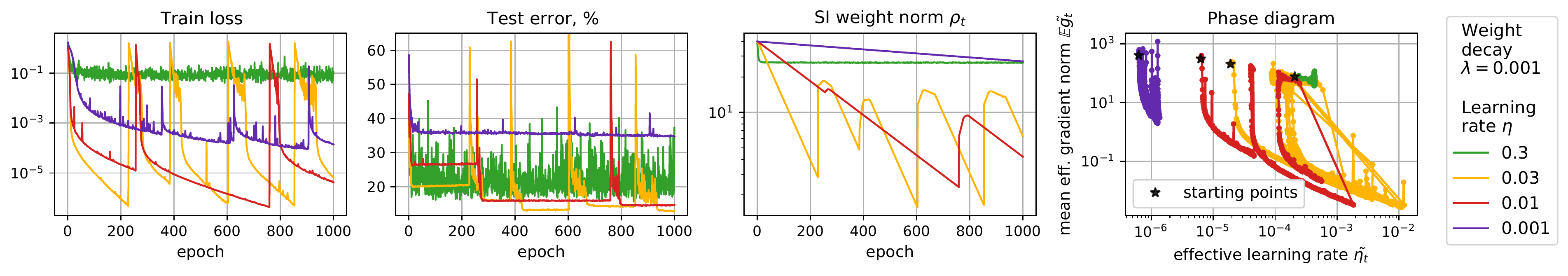}\\
  {\small ResNet-18 on CIFAR-100} \\
  \includegraphics[width=\textwidth]{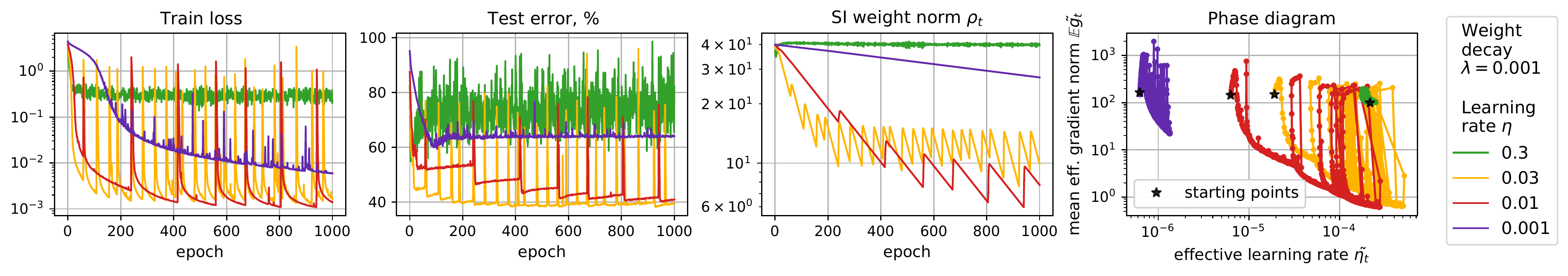}
  \end{tabular}}
  \caption{Training dynamics of scale-invariant networks trained with fixed weight decay and different learning rates.}
  \label{fig:different_lrs}
\end{figure}

Figure~\ref{fig:different_lrs} supplements Figure~\ref{fig:cycles_demo} and shows how the learning rate affects the periodic behavior for different dataset-architecture pairs when the weight decay is fixed. For CIFAR-100, we had to increase the ConvNet's width factor up to 64 and the last layer's weight norm up to 20 to ensure the network is able to learn the train dataset and achieve low train loss. The general picture is the same as described in Section~\ref{sec:2}: the periodic behavior is absent for too low or too high learning rates and present for a range of learning rate values, which also allow lower test error. Interestingly, for ResNet on CIFAR-10 with the learning rate of 0.03, phase $A$ is noisy and quite long because of the relatively high learning rate, but training still proceeds to phase $B$, while for larger learning rate, training gets stuck at high train loss.

\subsection{Fixed learning rate and different weight decays}
\label{app:var_wd}
\begin{figure}
  \centering
  \centerline{
 \begin{tabular}{c}
 {\small ConvNet on CIFAR-10} \\
 \includegraphics[width=\textwidth]{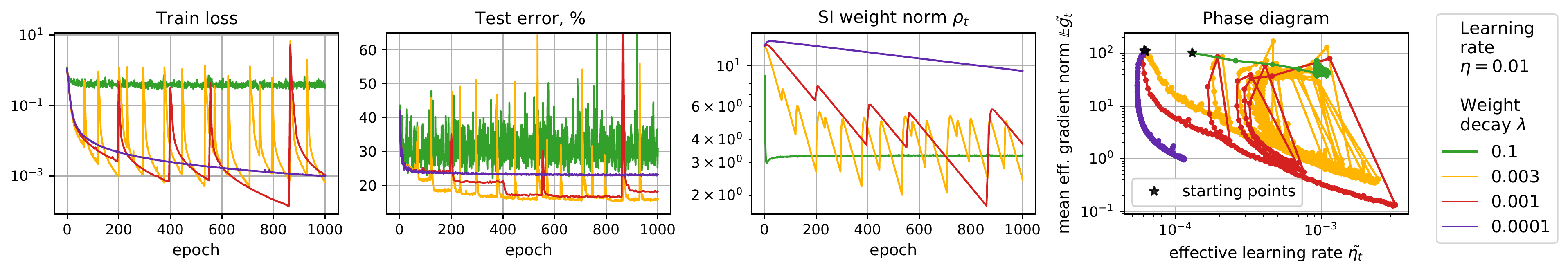} \\
 {\small ConvNet on CIFAR-100} \\
  \includegraphics[width=\textwidth]{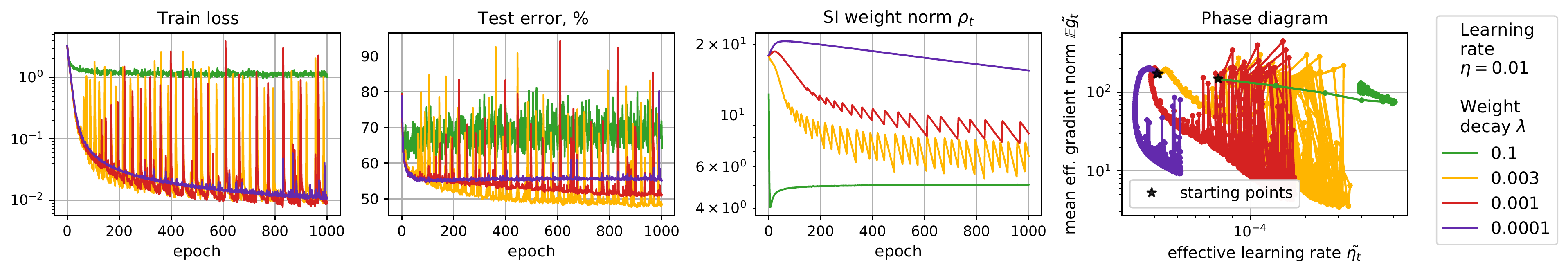} \\
  {\small ResNet-18 on CIFAR-10} \\
  \includegraphics[width=\textwidth]{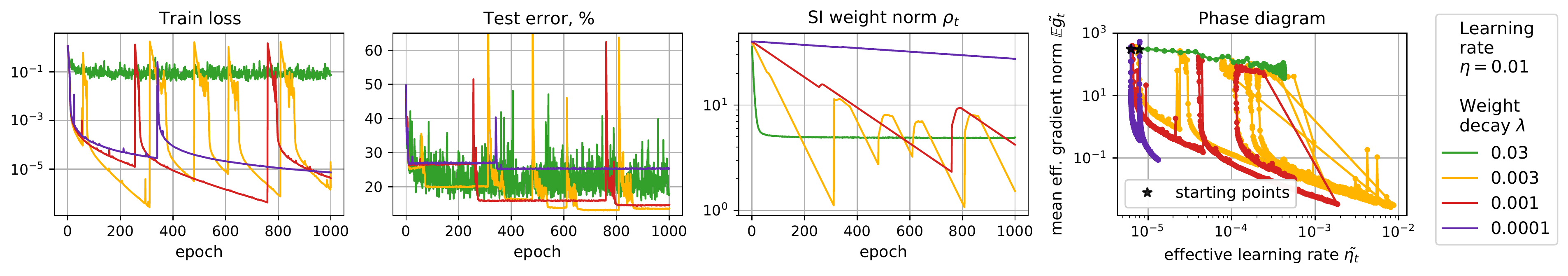}\\
  {\small ResNet-18 on CIFAR-100} \\
  \includegraphics[width=\textwidth]{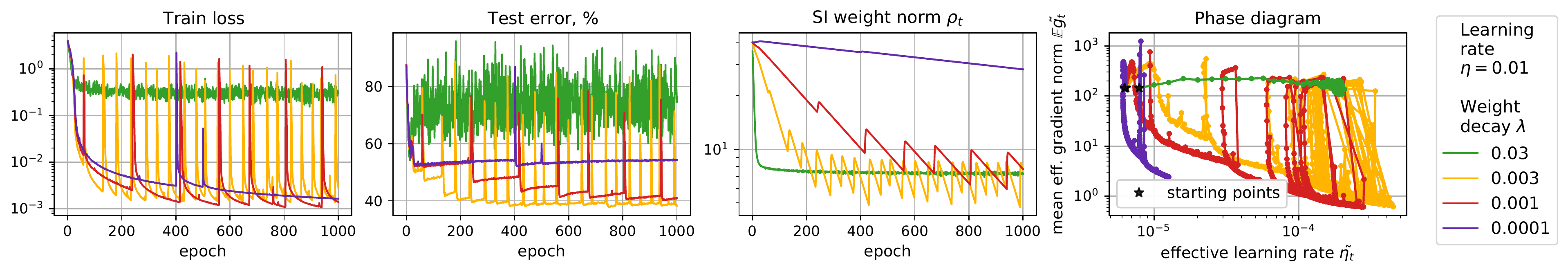} \\
  \end{tabular}}
  \caption{Training dynamics of scale-invariant networks trained with the fixed learning rate and different weight decays.}
  \label{fig:different_wds}
\end{figure}

Figure~\ref{fig:different_wds} shows the periodic behavior when the learning rate is fixed, and the weight decay is varied for different dataset-architecture pairs. The general observations are the same as when the learning rate is varied with the fixed weight decay. Notably, the periodic behavior is absent for too low or too high weight decay coefficients and present for a range of weight decay values, which also allow reaching lower test error. Further, using a larger weight decay increases the frequency of the periods.

\section{Influence of the last layer weight matrix norm}
\label{app:last_layer}
\begin{figure}
  \centering
  \centerline{
 \begin{tabular}{c}
 {\small ConvNet on CIFAR-10 ($\eta=0.03$, $\lambda=0.001$)} \\
 \includegraphics[width=\textwidth]{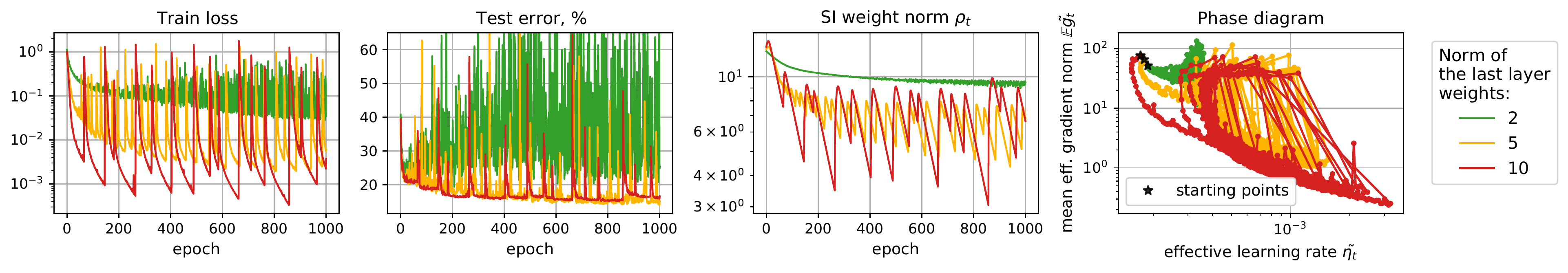} \\
 {\small ResNet-18 on CIFAR-100 ($\eta=0.03$, $\lambda=0.001$)} \\
 \includegraphics[width=\textwidth]{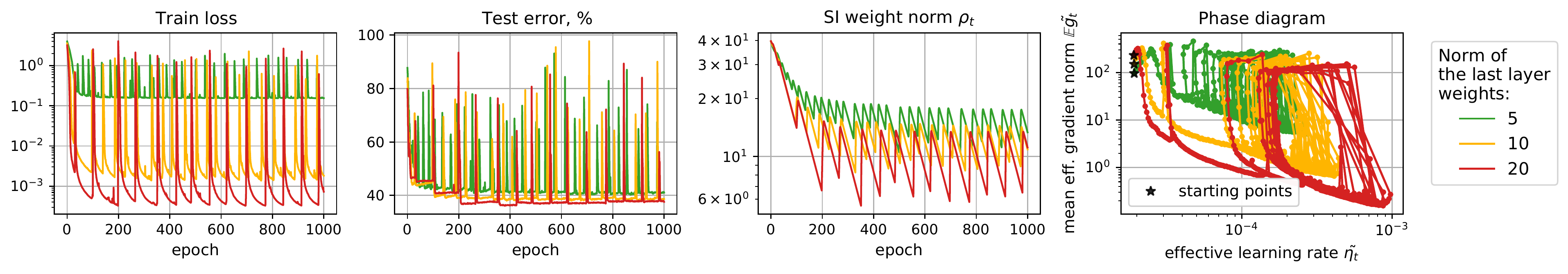} \\
  \end{tabular}}
  \caption{Influence of the last layer weight matrix norm on the periodic behavior.}
  \label{fig:different_scales}
\end{figure}

In scale-invariant neural networks, we fix the weights of the last layer. 
Moreover, we renormalize the weight matrix to the specified weight norm, which becomes a new hyperparameter.
This hyperparameter determines the level of the neural network's confidence in its predictions, and, in the main text, we set it to a large value (10) to achieve high confidence and to make our setup closer to the conventional neural network training (when all parameters are trained). In this section, we discuss the influence of the specified hyperparameter on periodic behavior.

Figure~\ref{fig:different_scales} shows results for
ConvNet on CIFAR-10 and ResNet on CIFAR-100 and
different values of the last layer's weight norm. The lowest presented last layer's weight norms are close to the norms obtained at random initialization without rescaling. Using low last layer's weight norm leads to low network's confidence which prohibits reaching low train loss and may result in the absence of the periodic behavior. In the main text, we use larger values of the last layer's weight norm, which circumvents this issue. 

\section{Minima achieved at different training periods}
\label{app:ensembles}
\begin{figure}
  \centering
  \centerline{
 \begin{tabular}{cc}
 {\small ConvNet on CIFAR-100} & {\small ResNet-18 on CIFAR-10} \\
  \includegraphics[width=0.5\textwidth]{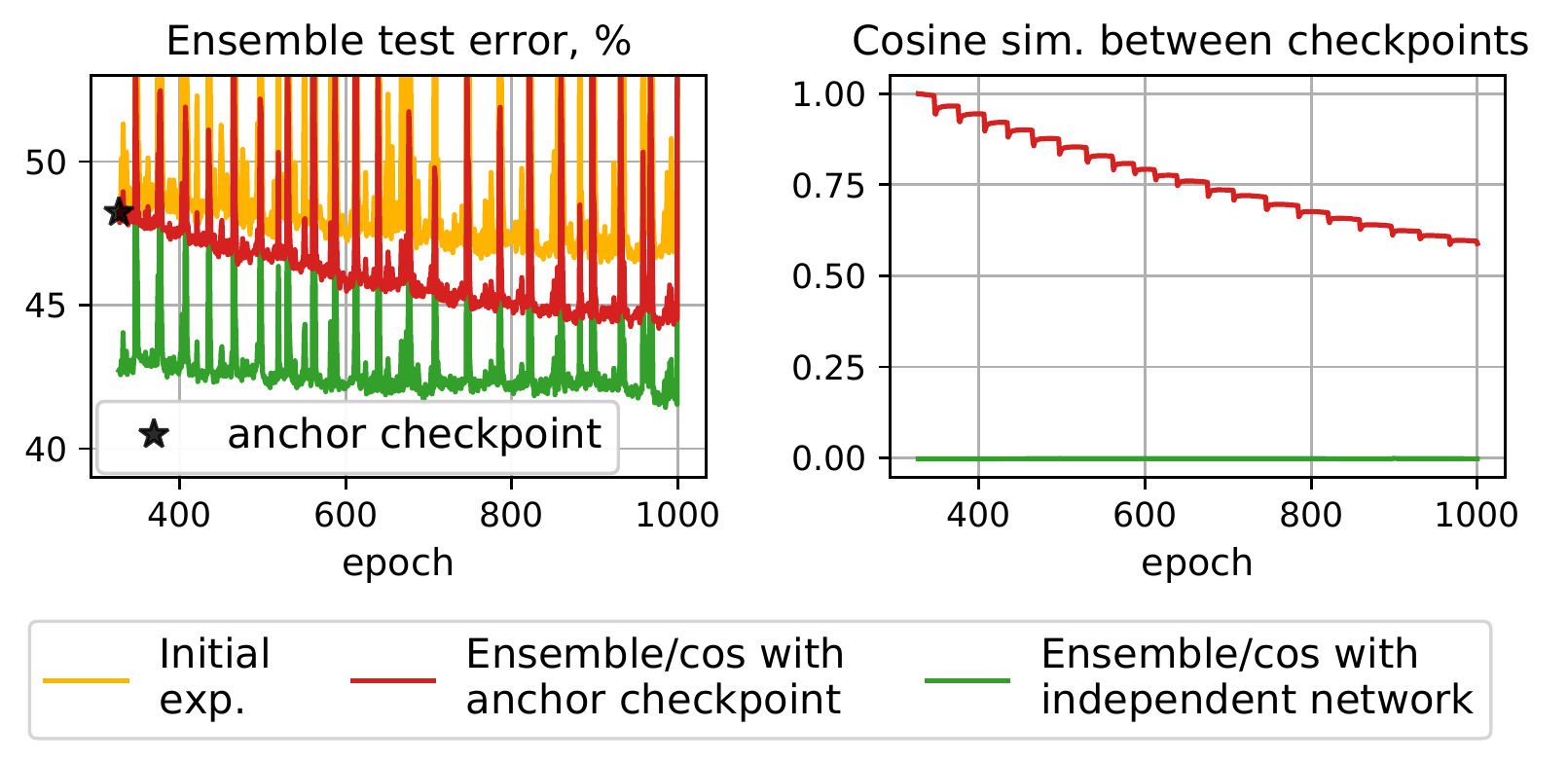}&\includegraphics[width=0.5\textwidth]{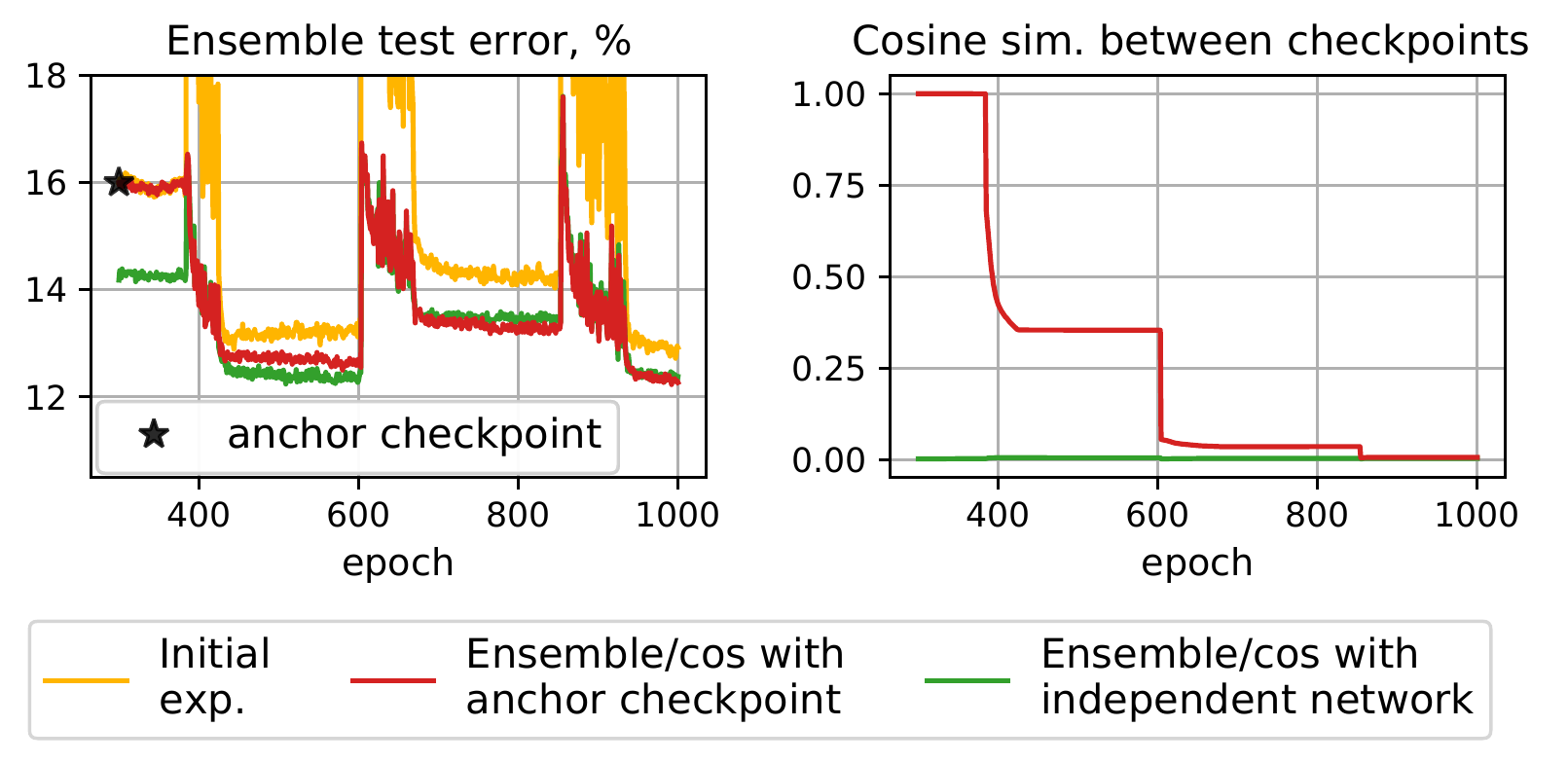} \\
  \end{tabular}}
  \caption{Similarity in the weight space (cosine sim.) and in the functional space (ensemble test error) for different checkpoints of training scale-invariant ConvNet on CIFAR-100 (left pair) and ResNet on CIFAR-10 (right pair) using SGD with weight decay of 0.001 and learning rate of 0.03.}
  \label{fig:ensembles_app}
\end{figure}

Figure~\ref{fig:ensembles_app} supplements Figure~\ref{fig:ensembles} for analyzing the weight/functional similarity of optima achieved at different training periods. The general observations are the same as in Section~\ref{sec:2}. Interestingly, the ensemble of two models spawned by optima from different periods can reach the error of two independent networks ensemble for both architectures on the CIFAR-10 dataset and does not reach one on the CIFAR-100 dataset (in given epochs budget).

\section{Practical modifications}
\label{app:other_setups}

\begin{figure}
  \centering
  \begin{tabular}{c}
  {\small ConvNet on CIFAR-100} \\
  \includegraphics[width=0.5\textwidth]{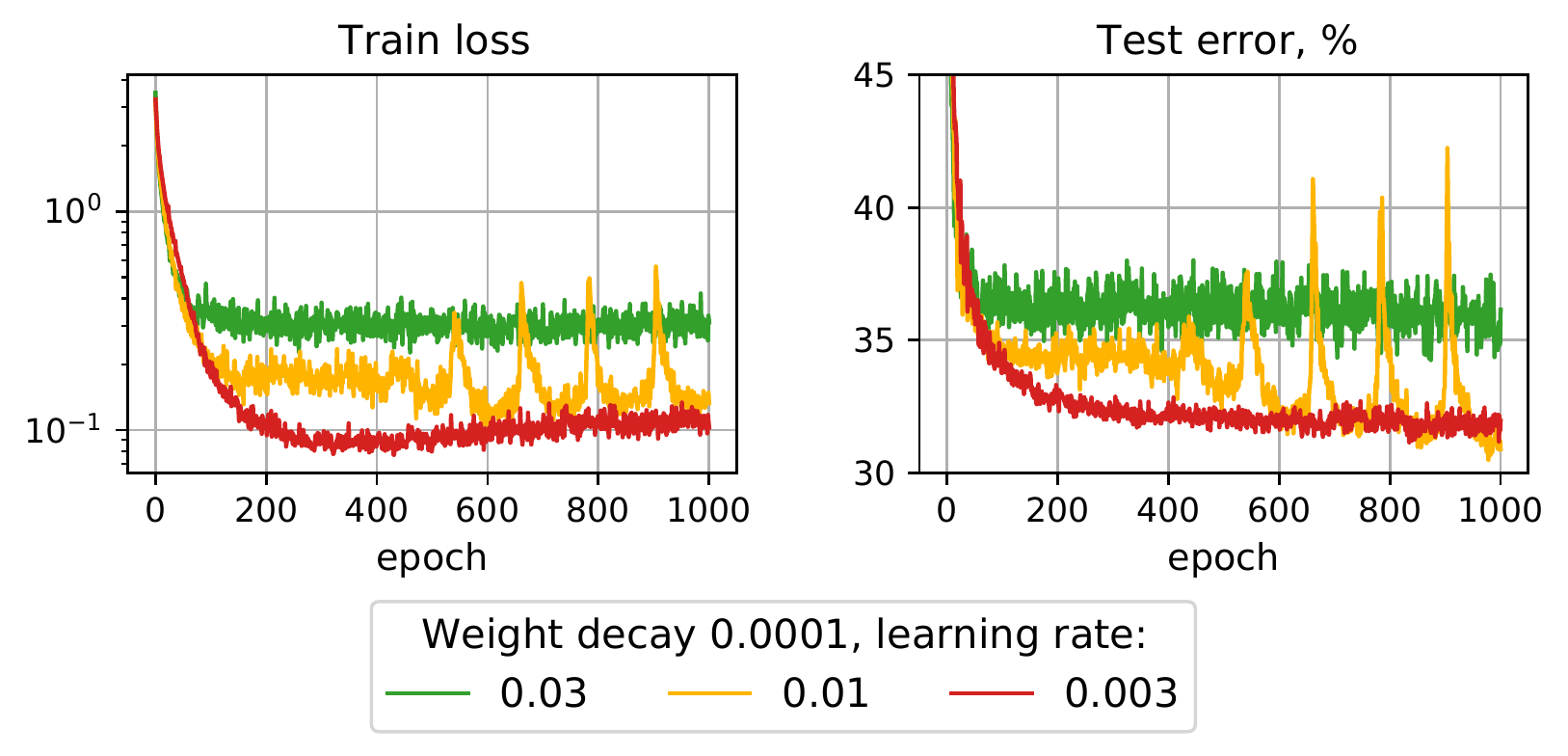}
  \end{tabular}\\
  \centerline{
 \begin{tabular}{cc}
 {\small ResNet-18 on CIFAR-10} & {\small ResNet-18 on CIFAR-100} \\
 \includegraphics[width=0.5\textwidth]{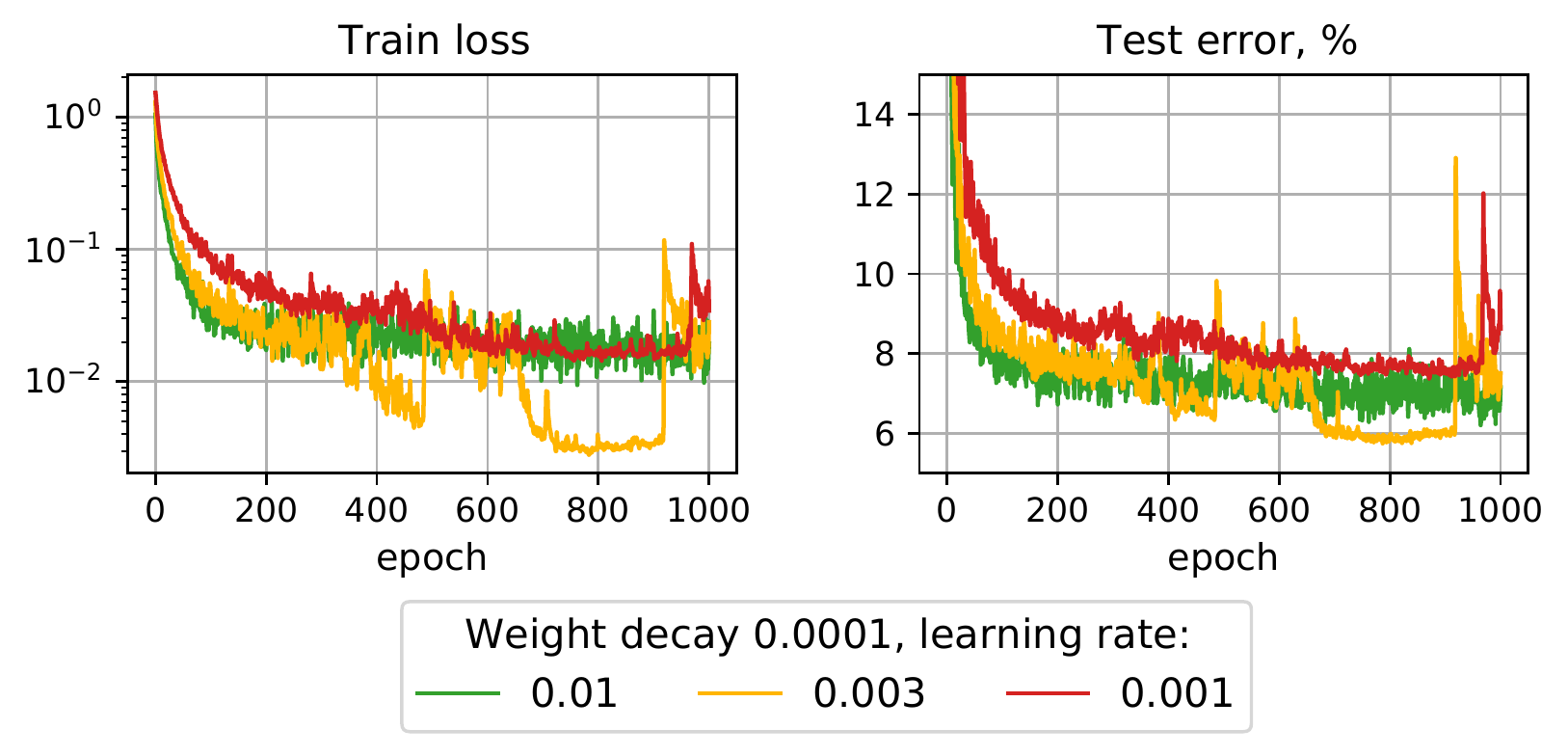} & \includegraphics[width=0.5\textwidth]{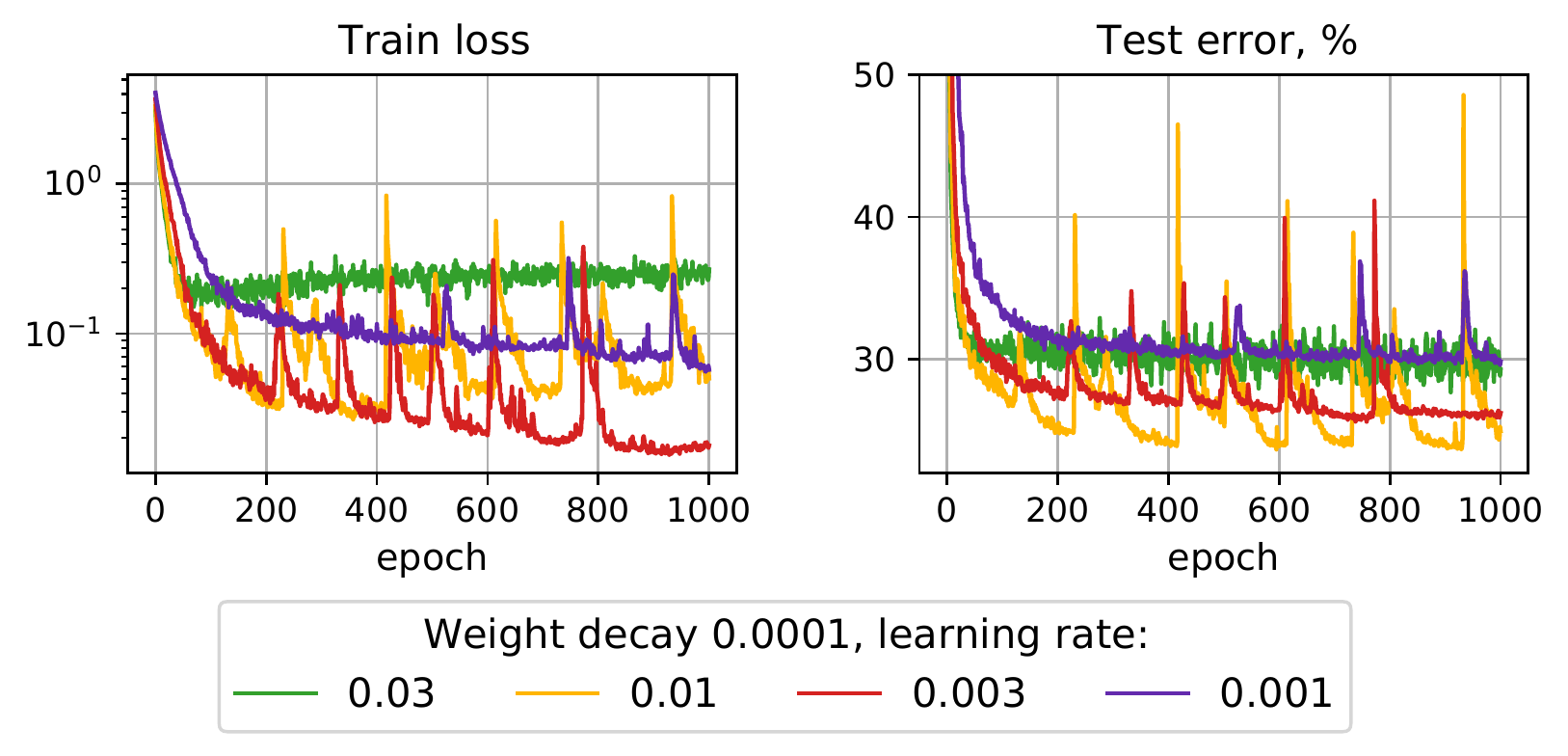}\\
  \end{tabular}}
  \caption{Training dynamics of networks trained with more practical modifications, i.e., with learnable non-scale-invariant parameters, momentum, and augmentation (all modifications together).}
  \label{fig:practical_app}
\end{figure}

\begin{figure}
  \centering
  \centerline{
 \begin{tabular}{c}
 {\small Layer Normalization} \\
 \includegraphics[width=\textwidth]{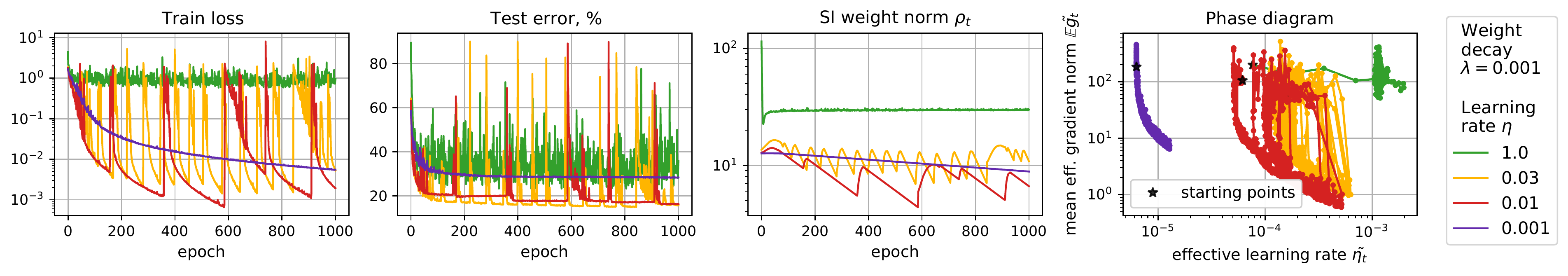} \\
 {\small Instance Normalization} \\
 \includegraphics[width=\textwidth]{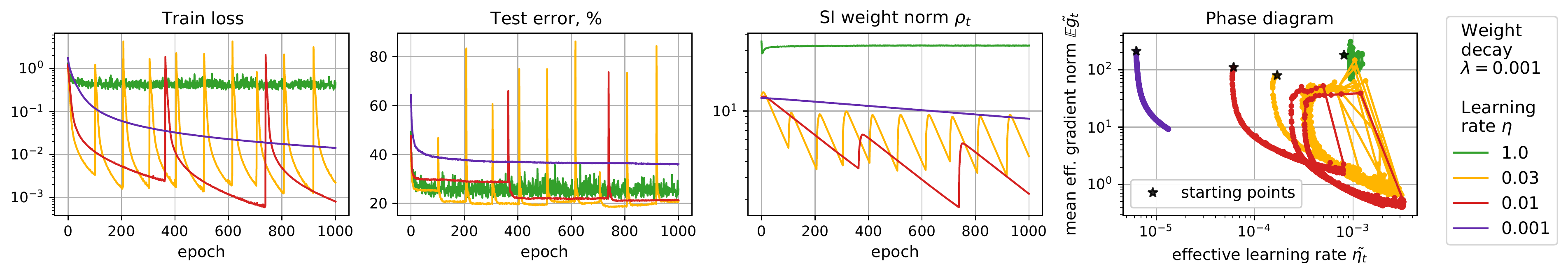} \\
  \end{tabular}}
  \caption{Training dynamics of scale-invariant ConvNet with other normalization approaches on CIFAR-10.}
  \label{fig:different_norms}
\end{figure}

\begin{figure}
  \centering
  \centerline{
 \includegraphics[width=\textwidth]{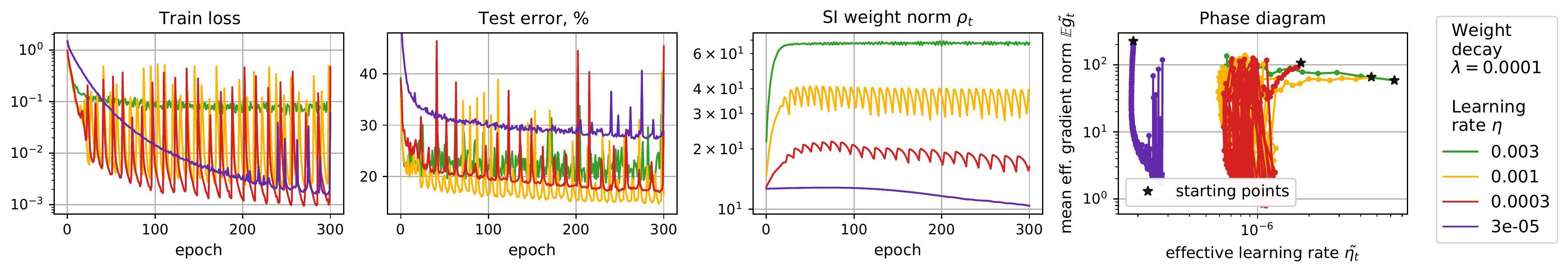} }
  \caption{Training dynamics of scale-invariant ConvNet on CIFAR-10 trained using Adam.}
  \label{fig:adam}
\end{figure}

Figure~\ref{fig:practical_app} supplements Figure~\ref{fig:mom_aug_full} and shows the presence of the periodic behavior in a more practical setting, i.e., with trainable non-scale-invariant parameters, momentum, and data augmentation, for ConvNet on CIFAR-100 and ResNet on CIFAR-10 and CIFAR-100. For a more detailed discussion, see Section~\ref{sec:3} in the main text. 

We also consider training neural networks with a more sophisticated optimizer, Adam~\cite{adam}, and show the presence of the periodic behavior for ConvNet on CIFAR-10 in Figure~\ref{fig:adam}.

In order to show that our results extrapolate to other normalization approaches besides batch normalization, we train ConvNet on CIFAR-10 using layer normalization~\cite{ba2016layer} and instance normalization~\cite{ulyanov2016instance} and demonstrate the presence of the periodic behavior in this setting in Figure~\ref{fig:different_norms}.

\section{Comparison with previous works}
\label{app:comp_setups}

In this section, we compare our experimental setup with that of the prior art and point out the main factors for why previous experiments mostly do not show periodic behavior. 

As we stated in Section~\ref{sec:3}, periodic behavior is not usually observed when training normalized neural networks due to a relatively small epochs budget and usage of learning rate schedules. 
Moreover, some hyperparameters settings can make periods too slow or even unreachable, which both hinder observation of the periodic behavior in practice (see, e.g., the smallest and the highest learning rate curves in Figure~\ref{fig:cycles_demo}). 
Finally, the use of data augmentation and/or models that are too simple to learn a given dataset does not allow even the first period to be completed within a reasonable time frame.
These are the key reasons periodic behavior was mainly not reported in the literature previously.
Below we discuss the particular aspects of several most related works.

One of the works closest to ours,~\citet{li2020reconciling}, discovers the unstable behavior of full-batch GD training of scale-invariant networks and at the same time reports convergence to a constant equilibrium when training with SGD.
We suppose that the experiments of~\citet{li2020reconciling} with full-batch GD depict exactly our periodic behavior.
Speaking of SGD experiments, we suspect that, despite a large epochs budget,~\citet{li2020reconciling} did not encounter periodic behavior in most of their experiments due to data augmentation, different hyperparameters settings, and learning rate schedules.
In other words, they mainly observed a prolonged phase $A$ in their experiments without reaching the end of even the first period, which may seem like convergence to a stable equilibrium.

\citet{wan2020spherical}, who also study the convergence of scale-invariant parameters dynamics to the equilibrium (which, however, is now \emph{dynamical}, i.e., depends on the behavior of effective gradients, in par with our work), did not find periods in their experiments as well.
This can also be attributed to data augmentation and learning rate schedules but most importantly to short training, which does not allow finishing phase $A$ of the first period, as seen by the increasing effective gradients norm throughout training in Figure~2 therein.

As mentioned in the main text,~\citet{li2020understanding} discovered that training weight-normalized neural networks with improperly selected weight decay may become unstable and even result in training failure since the numerical gradient updates are beyond the representation of float.
This is the extreme case of destabilization in phase $C$ when scale-invariant parameters approach the origin too close and the gradients blow up so that training is already unable to recover due to numerical issues.
In our experiments, such situations did not occur, however, we hypothesize that they can be encountered when training very large networks equipped with both weight normalization and feature normalization, which may amplify the destabilization effect of approaching the origin.
Other experiments of~\citet{li2020understanding} did not reveal the periodic behavior for the same reasons as above: data augmentation, insufficient training duration, and learning rate schedules.

\end{document}